\pgfplotsset{compat=newest}         
\pgfplotsset{plot coordinates/math parser=false}    
\newlength\fwidth                   
\definecolor{mycolor1}{rgb}{0.00000,0.44700,0.74100}    
\definecolor{mycolor2}{rgb}{0.85000,0.32500,0.09800}    
\definecolor{mycolor3}{rgb}{0.92900,0.69400,0.12500}    
\definecolor{mycolor4}{rgb}{0.49400,0.18400,0.55600}    
\DeclareMathOperator{\diag}{diag}	
\DeclareMathOperator{\relu}{ReLU}	
\DeclareMathOperator{\rank}{rank}	
\DeclareMathOperator{\tr}{tr}	    
\DeclareMathOperator*{\argmax}{arg\,max}	
\DeclareMathOperator*{\argmin}{arg\,min}	
\newtheorem{assumption}[theorem]{Assumption}
\newcommand\hl[1]{{\color{black}{#1}}}	
\begin{document}

\title{Towards Optimal Branching of Linear and Semidefinite Relaxations for Neural Network Robustness Certification}

\author{\name Brendon G.\ Anderson\footnotemark[1] \email bganderson@berkeley.edu\\
       \name Ziye Ma\footnotemark[2] \email ziyema@berkeley.edu\\
       \name Jingqi Li\footnotemark[2] \email jingqili@berkeley.edu\\
       \name Somayeh Sojoudi\footnotemark[2] \email sojoudi@berkeley.edu\\
       \addr \footnotemark[1]~Department of Mechanical Engineering, University of California, Berkeley\\
       \addr \footnotemark[2]~Department of Electrical Engineering and Computer Sciences, University of California, Berkeley}

\editor{Miguel Carreira-Perpinan}

\maketitle

\begin{abstract}
In this paper, we study certifying the robustness of ReLU neural networks against adversarial input perturbations. To diminish the relaxation error suffered by the popular linear programming (LP) and semidefinite programming (SDP) certification methods, we take a branch-and-bound approach to propose partitioning the input uncertainty set and solving the relaxations on each part separately. We show that this approach reduces relaxation error, and that the error is eliminated entirely upon performing an LP relaxation with a partition intelligently designed to exploit the nature of the ReLU activations. To scale this approach to large networks, we consider using a coarser partition whereby the number of parts in the partition is reduced. We prove that computing such a coarse partition that directly minimizes the LP relaxation error is NP-hard. By instead minimizing the worst-case LP relaxation error, we develop a closed-form branching scheme \hl{in the single-hidden layer case}. We extend the analysis to the SDP, where the feasible set geometry is exploited to design a branching scheme that minimizes the worst-case SDP relaxation error. Experiments on MNIST, CIFAR-10, and Wisconsin breast cancer diagnosis classifiers demonstrate significant increases in the percentages of test samples certified. By independently increasing the input size and the number of layers, we empirically illustrate under which regimes the branched LP and branched SDP are best applied. \hl{Finally, we extend our LP branching method into a multi-layer branching heuristic, which attains comparable performance to prior state-of-the-art heuristics on large-scale, deep neural network certification benchmarks.}
\end{abstract}

\begin{keywords}
ReLU neural networks, robustness certification, adversarial attacks, convex optimization, branch-and-bound
\end{keywords}



\section{Introduction}

It is evident that the data used in real-world engineering systems has uncertainty. This uncertainty takes many forms, including corruptions, random measurement noise, and adversarial attacks \citep{franceschi2018robustness,balunovic2019certifying,jin2019boundary}. Recently, researchers have shown that the performance of neural networks can be highly sensitive to these uncertainties in the input data \citep{szegedy2013intriguing,fawzi2016robustness,su2019one}. Clearly, safety-critical systems, such as autonomous vehicles \citep{bojarski2016end,wu2017squeezedet} and the power grid \citep{kong2017short,muralitharan2018neural,pan2019deepopf}, must be robust against fluctuations and uncertainty in the inputs to their decision-making algorithms. This fact has motivated a massive influx of research studying methods to certify the robustness of neural networks against uncertainty in their input data \citep{wong2018provable,raghunathan2018semidefinite,xiang2018reachability,weng2018towards,zhang2018efficient,royo2019fast,fazlyab2020safety,anderson2020tightened,anderson2022data,ma2020strengthened,jin2021power}.

The two primary settings taken in the robustness certification literature consider either random input uncertainty or adversarial uncertainty. In the former, the neural network input is assumed to be random and follow a known probability distribution. For instance, the works \citet{weng2019proven,anderson2022data} derive high-probability guarantees that this randomness causes no misclassification or unsafe output. In the adversarial setting, the input is assumed to be unknown but contained in a prescribed input uncertainty set. The goal here is to certify that all possible inputs from the uncertainty set are mapped to outputs that the network operator deems as safe \citep{wong2018provable,royo2019fast}. In this paper, we take the latter, worst-case perspective. We remark that this approach is more generally applicable than the techniques for random inputs. Indeed, certifying that a network is robust against adversarial inputs immediately also certifies it is robust against random inputs distributed over the same uncertainty set; if the worst-case input cannot cause a failure, then neither will randomly selected ones.

The problem of adversarial robustness certification amounts to proving that all possible outputs in the output set, i.e., the image of the input uncertainty set under the mapping of the network, are contained in the safe set. However, this output set is generally nonconvex, even when the input set is convex. Consequently, the certification decision problem has been shown to be NP-complete, and the optimization-based formulation for solving it is an NP-hard, nonconvex optimization problem \citep{katz2017reluplex,weng2018towards}. To make the problem more tractable, researchers have proposed various ways to over-approximate the nonconvex output set with a convex one. Performing the certification over the convex surrogate reduces the problem to an easy-to-solve convex relaxation, and if the relaxation issues a robustness certificate for the outer approximation, it also issues a certificate for the true set of outputs. Such convex relaxation-based certification algorithms are sound, but generally incomplete; the network may be robust even if the algorithm is unable to verify it. Thus, a line of works has been developed in order to increase the percentage of test inputs that convex relaxation-based methods are able to certify \citep{wong2018provable,raghunathan2018semidefinite,fazlyab2020safety,tjandraatmadja2020convex,muller2021prima,batten2021efficient,wang2021beta}.

Perhaps the simplest and most popular outer approximation technique is based on a linear programming (LP) relaxation of the $\relu$ activation function \citep{ehlers2017formal,wong2018provable}. However, this method has been shown to yield relatively loose outer approximations, making it possible for the approximating set to contain unsafe outputs, even if the true output set is entirely safe. If this occurs, the convex relaxation fails to issue a certificate of robustness, despite the fact the network is indeed robust. A semidefinite programming (SDP) relaxation was proposed in \citet{raghunathan2018semidefinite} and shown to yield tighter outer approximations when compared to the LP method. Other methods, such as the quadratically-constrained semidefinite program \citep{fazlyab2020safety}, use sophisticated relaxations to tighten the approximation, but these SDP-based methods inevitably gain accuracy at the expense of enlarging the computational cost, and are still susceptible to relaxation error. The work \citet{tjandraatmadja2020convex} introduces a joint $\relu$ LP relaxation that is tighter than the per-neuron approach, but may require an exponential number of linear inequalities and is weaker overall in comparison to the per-neuron relaxation when embedded into a branch-and-bound scheme \citep{wang2021beta}. We follow the certification literature's emphasis on $\relu$ networks and focus our attention in this paper on such models, which, aside from their certifiable structure, are extremely popular for their non-vanishing gradient property and their quick training times \citep{xiang2018reachability}. \hl{We remark that branch-and-bound methods that are derived for $\relu$ models may still be applied to models with non-$\relu$ activations, either by restricting the branching scheme to neurons with $\relu$ activations, or by naively applying the branching scheme to the non-$\relu$ neurons as if they had $\relu$ activations.}

\subsection{Branch-and-Bound Certification}

As eluded to above, instead of resorting to exorbitantly costly relaxation techniques to lower the approximation error, e.g., using heuristic variants of Lasserre's hierarchy \citep{chen2020semialgebraic}, state-of-the-art convex relaxation-based verifiers have turned to branch-and-bound methods, where the nonconvex optimization domain is recursively partitioned and outer-approximated by smaller convex sets. In fact, the verifier that won the 2021, 2022, and 2023 VNN certification competitions is $\alpha,\beta$-CROWN \citep{bak2021second,muller2022third,brix2023fourth,wang2021beta}, which is a branch-and-bound method that uses linear programming relaxations in the bounding steps. $\alpha,\beta$-CROWN uses $\alpha$-CROWN to generate linear upper and lower bounds on the neural network output, together with the heuristic per-neuron branching methods BaBSR \citep{bunel2020branch} and filtered smart branching (FSB) \citep{de2021improved}.

The success of $\alpha,\beta$-CROWN is interesting, since the BaBSR and FSB branching strategies are designed in the context of Ehler's per-neuron ``triangle'' convex relaxations \citep{ehlers2017formal}, which is a different and, when activation bounds are adequately chosen, tighter bounding approach than $\alpha$-CROWN's linear bounding of the output, albeit computationally more expensive. FSB remains the state-of-the-art branching heuristic, outperforming BaBSR \citep{wang2021beta,de2021improved}, which in turn beat a state-of-the-art mixed-integer approach and all prior $\relu$ neuron splitting methods: Reluplex, Planet, and Neurify \citep{bunel2020branch,katz2017reluplex,ehlers2017formal,wang2018efficient}. FSB works by assigning scores to each neuron that are computed by quickly estimating the improvement in the optimization objective upon splitting that neuron. In this paper, we propose two branch-and-bound certification methods, one utilizing the triangle relaxation of \cite{ehlers2017formal} as a bounding method and the other utilizing the SDP relaxation of \citet{raghunathan2018semidefinite}, and for these bounding methods we derive novel branching schemes that minimize the worst-case relaxation error.

\hl{It is worth briefly mentioning here that branch-and-bound certification techniques can make use of parallel computing when solving independent subproblems arising from the feasible set partitioning \citep{lu2019neural,wu2020parallelization,xu2020fast}. This improves the scalability and efficiency of neural network certification. Our proposed branch-and-bound methods are naturally able to leverage such parallelizations.}

\subsection{Contributions}

In this paper, we fully exploit the piecewise linear structure of ReLU networks in order to tighten convex relaxations for robustness certification. A condensed summary of our main contributions is as follows:
\begin{enumerate}
    \item For the LP relaxation, we show that a methodically designed finite partition of the input set allows one to attain zero relaxation error. We then use the structure of this motivating partition to derive a closed-form branching scheme that minimizes worst-case relaxation error \hl{for single-hidden layer networks}. This novel branching scheme makes theoretically principled strides towards optimal LP branching. \hl{We show how to extend our branching scheme into a multi-layer branching heuristic, applicable to deep neural networks.}
    \item We prove that computing the branching neuron that minimizes the actual LP relaxation error is NP-hard, in turn theoretically justifying our approach of minimizing the worst-case relaxation error.
    \item For the SDP relaxation, we develop a geometrically interpretable measure for how far the solution is from being rank-1. We then show \hl{in the single-hidden layer case} that, when branching along a given coordinate, this rank-1 gap is minimized by a uniform grid. Finally, we derive the branching coordinate that minimizes the worst-case relaxation error.
    \item We embed our branching schemes into a branch-and-bound framework, and empirically validate the effectiveness of our approaches on the MNIST, CIFAR-10, and Wisconsin breast cancer diagnosis benchmarks. We experimentally show on these datasets that our LP branching scheme outperforms the state-of-the-art filtered smart branching, the branching heuristic used in $\alpha,\beta$-CROWN to win the 2021, 2022, and 2023 VNN certification competitions. Our SDP branching scheme is found to yield even higher certified percentages. \hl{We also run experiments on large-scale deep neural network certification problems involving multiple layers with complex structures, and we find that our method yields performance comparable to that of these state-of-the-art methods. Overall, our experiments suggest that our theoretically principled single-hidden layer methods outperform the current methods on moderately-sized problems, and our heuristic extensions to deep networks achieve comparable performance on large-scale problems.}
\end{enumerate}
This paper is a major extension of the conference paper \citet{anderson2020tightened}. Notably, the NP-hardness result, all of the SDP results, and all of the experiments of this paper (the second, third, and fourth major contributions listed above) are new additions. The contributions of this paper culminate into two theoretically principled branching-based convex relaxations for ReLU robustness certification. Our experiments demonstrate that the proposed approaches are effective and efficient, and from these results we develop general guidance for which network size and structure regimes the LP, branched LP, SDP, and branched SDP are best applied.

\subsection{Outline}
This paper is organized as follows. Section \ref{sec: problem_statement} introduces the ReLU robustness certification problem as well as its basic LP and SDP relaxations. In Section \ref{sec: partitioned_lp_relaxation}, we study the effect of partitioning the input uncertainty set for the LP relaxation. After developing formal guarantees for its effectiveness, we develop a branching strategy that optimally reduces the worst-case relaxation error for networks with a single hidden layer. \hl{We then extend our theoretically principled LP branching scheme to a multi-layer branching heuristic.} Similarly, in Section \ref{sec: partitioned_sdp_relaxation} we study the partitioned SDP relaxation, \hl{bound its error, and propose a coordinate-wise branching scheme to minimize the bound}. Section \ref{sec: simulation_results} demonstrates the developed methods on numerical examples and studies the effectiveness of branching on the LP and SDP as the network grows in width and depth. Finally, we conclude in Section \ref{sec: conclusions}.

\subsection{Notations}
\label{sec: notations}
\hl{
For the reader's convenience, we list our commonly used symbols in Table~\ref{tab: symbols}, some of which will be more rigorously defined in the following sections.

For an index set $\mathcal{I}\subseteq\{1,2,\dots,n\}$, the symbol $\mathbf{1}_\mathcal{I}$ is used to denote the $n$-vector with $(\mathbf{1}_\mathcal{I})_i=1$ for $i\in\mathcal{I}$ and $(\mathbf{1}_\mathcal{I})_i=0$ for $i\in\mathcal{I}^c=\{1,2,\dots,n\}\setminus\mathcal{I}$.
For a function $f\colon\mathbb{R}^n\to\mathbb{R}^m$ and the point $x\in\mathbb{R}^n$, we denote the $i^{\text{th}}$ element of the $m$-vector $f(x)$ by $f_i(x)$.
For a matrix $X\in\mathbb{R}^{m\times n}$, we use two indices to denote an element of $X$ and one index to denote a column of $X$, unless otherwise stated. In particular, the $(i,j)$ element and the $i^\text{th}$ column of $X$ are denoted by $X_{ij}$ and $X_i$ respectively, or, when necessary for clarity, by $(X)_{ij}$ and $(X)_i$ respectively.
Furthermore, for a function $f\colon \mathbb{R}\to\mathbb{R}$, we define $f(X)$ to be an $m\times n$ matrix whose $(i,j)$ element is equal to $f(X_{ij})$ for all $i\in\{1,2,\dots,m\}$ and all $j\in\{1,2,\dots,n\}$. If $Z$ is a square $n\times n$ matrix, we use the notation $\diag(Z)$ to mean the $n$-vector $(Z_{11},Z_{22},\dots,Z_{nn})$ and $\tr(Z)$ to mean the trace of $Z$. When $Z\in\mathbb{S}^n$, we write $Z\succeq 0$ to mean that $Z$ is positive semidefinite.

We use $\mathbb{I}$ to denote the indicator function, i.e., $\mathbb{I}(A)=1$ if event $A$ holds and $\mathbb{I}(A)=0$ if event $A$ does not hold.
For a set $\mathcal{T}\subseteq \mathbb{R}$, we respectively denote its infimum and supremum by $\inf\mathcal{T}$ and $\sup\mathcal{T}$. For a function $f\colon\mathbb{R}^n \to \mathbb{R}$ and a set $\mathcal{X}\subseteq\mathbb{R}^n$, we write $\inf_{x\in\mathcal{X}}f(x)$ to mean $\inf \{f(x) : x\in\mathcal{X}\}$ and similarly for suprema.
Finally, we assume that all infima and suprema throughout the paper are attained.
}

\section{Problem Statement}
\label{sec: problem_statement}

\subsection{Description of the Network and Uncertainty}

In this paper, we consider a pre-trained $K$-layer ReLU neural network defined by
\begin{equation}
\begin{aligned}
x^{[0]} ={}& x, \\
\hat{z}^{[k]} ={}& W^{[k-1]}x^{[k-1]}+b^{[k-1]}, \\
x^{[k]} ={}& \relu(\hat{z}^{[k]}), \\
z ={}& x^{[K]},
\end{aligned} \label{eq: network_description}
\end{equation}
for all $k\in\{1,2,\dots,K\}$. Here, the neural network input is $x\in\mathbb{R}^{n_x}$, the output is $z\in\mathbb{R}^{n_z}$, and the $k^\text{th}$ layer's preactivation is $\hat{z}^{[k]}\in\mathbb{R}^{n_k}$. The parameters $W^{[k]}\in\mathbb{R}^{n_{k+1}\times n_k}$ and $b^{[k]}\in\mathbb{R}^{n_{k+1}}$ are the weight matrix and bias vector applied to the $k^{\text{th}}$ layer's activation $x^{[k]}\in\mathbb{R}^{n_k}$, respectively. Without loss of generality,\footnote{Note that $Wx+b = \begin{bmatrix} W & b \end{bmatrix} \begin{bmatrix} x \\ 1 \end{bmatrix} \eqqcolon \tilde{W}\tilde{x}$, so the bias term $b$ can be eliminated by appending a fixed value of $1$ at the end of the activation $x$. This parameterization can be used throughout this paper by using matching lower and upper activation bounds of $1$ in the last coordinate of each layer.} we assume that the bias terms are accounted for in the activations $x^{[k]}$, thereby setting $b^{[k]}=0$ for all layers $k$. We remark that, since we make no assumptions on the linear transformations defined by $W^{[k]}$, our work also applies to networks with linear convolutional layers. We let the function $f\colon \mathbb{R}^{n_x}\to\mathbb{R}^{n_z}$ denote the map $x\mapsto z$ defined by \eqref{eq: network_description}. In the case that $f$ is a classification network, the output dimension $n_z$ equals the number of classes. The problem at hand is to certify the robustness in the neural network output $z$ when the input $x$ is subject to uncertainty.

\begin{table}[tbh]
    \hl{
	\centering
	\caption{List of commonly used symbols.}
    \vspace*{0.5\baselineskip}
	\begin{tabular}{l l}
	\toprule
	Symbol & Meaning \\
	\midrule %
	$\mathbb{R}^n$ & Set of $n$-vectors with real elements \\
    $\mathbb{S}^n$ & Set of $n\times n$ matrices with real elements \\
    $e_i$ & $i^\textup{th}$ standard basis vector of $\mathbb{R}^n$ \\
    $\mathbf{1}_n$ & $n$-vector of all ones \\
    $X \le Y$ & The array $X$ is element-wise less than or equal to the array $Y$ \\
    $X \odot Y$ & Element-wise (Hadamard) product between arrays $X$ and $Y$ \\
    $X \oslash Y$ & Element-wise (Hadamard) division of array $X$ by array $Y$ \\
    $|\mathcal{S}|$ & Cardinality of the set $\mathcal{S}$ \\
    $\|\cdot\|_*$ & Dual norm of the norm $\|\cdot\|$ \\
    $d_{\|\cdot\|}(\mathcal{X})$ & Diameter of $\mathcal{X}\subseteq \mathbb{R}^n$ with respect to norm $\|\cdot\|$; $d_{\|\cdot\|}(\mathcal{X}) = \sup_{x,x'\in \mathcal{X}} \|x - x'\|$ \\
    $f$ & Neural network \\
    $\relu$ & Rectified linear unit; $\relu(\cdot) = \max\{0,\cdot\}$ \\
    $x$, $z$ & Neural network input and output, respectively \\
    $x^{[k]}$, $\hat{z}^{[k]}$ & Neural network activations and preactivations at layer $k$, respectively \\
    $W^{[k]}$ & Neural network weights at layer $k$ \\
    $l^{[k]}$, $u^{[k]}$ & Neural network preactivation lower and upper bounds at layer $k$, respectively \\
    $\mathcal{X}$ & Neural network input uncertainty set \\
    $f(\mathcal{X})$ & Neural network output uncertainty set given the input uncertainty set $\mathcal{X}$ \\
    $\phi(\mathcal{X})$ & Optimal value of certification problem over the input uncertainty set $\mathcal{X}$ \\
	\bottomrule
	\end{tabular}
    }
	\label{tab: symbols}
\end{table}

To model the input uncertainty, we assume that the network inputs are unknown but contained in a compact set $\mathcal{X}\subseteq\mathbb{R}^{n_x}$, called the \emph{input uncertainty set}. We assume that the set $\mathcal{X}$ is a convex polytope, so that the condition $x\in\mathcal{X}$ can be written as a finite number of affine inequalities and equalities. In the adversarial robustness literature, the input uncertainty set is commonly modeled as $\mathcal{X} = \{x\in\mathbb{R}^{n_x} : \|x - \bar{x}\|_\infty \le \epsilon\}$, where $\bar{x}\in\mathbb{R}^{n_x}$ is a nominal input to the network and $\epsilon>0$ is an uncertainty radius \citep{wong2018provable,raghunathan2018semidefinite}. We remark that our generalized polytopic model for $\mathcal{X}$ includes this common case. The primary theme of this paper revolves around partitioning the input uncertainty set in order to strengthen convex robustness certification methods. Let us briefly recall the definition of a partition.

\begin{definition}[Partition]
	\label{def: partition}
	The collection $\{\mathcal{X}^{(j)} \subseteq \mathcal{X} : j\in\{1,2,\dots,p\}\}$ is said to be a \emph{partition} of the input uncertainty set $\mathcal{X}$ if $\mathcal{X}=\cup_{j=1}^p\mathcal{X}^{(j)}$ and $\mathcal{X}^{(j)}\cap \mathcal{X}^{(k)} = \emptyset$ for all $j \ne k$. The set $\mathcal{X}^{(j)}$ is called the $j^{\text{th}}$ \emph{input part}.
\end{definition}

We generally use the terminology \textit{branching} to refer to partitioning with two parts, which is typically applied in a recursive fashion.

Now, in order to describe the robustness of the network \eqref{eq: network_description}, we need a notion of permissible outputs. For this, we consider a \emph{safe set}, denoted $\mathcal{S}\subseteq\mathbb{R}^{n_z}$. If an output $z\in\mathbb{R}^{n_z}$ is an element of $\mathcal{S}$, we say that $z$ is \emph{safe}. It is common in the robustness certification literature to consider (possibly unbounded) polyhedral safe sets. We take this same perspective, and assume that $\mathcal{S}$ is defined by
\begin{equation*}
    \mathcal{S} = \{z\in\mathbb{R}^{n_z} : C z \le d\},
\end{equation*}
where $C\in\mathbb{R}^{n_\mathcal{S}\times n_z}$ and $d\in\mathbb{R}^{n_\mathcal{S}}$ are given. Note that, again, our model naturally includes the classification setting. In particular, suppose that $i^* \in \argmax_{i\in\{1,2,\dots,n_z\}} f_i(\bar{x})$ is the true class of the nominal input $\bar{x}$. Then, define the $i^{\text{th}}$ row of $C\in\mathbb{R}^{n_z\times n_z}$ to be
\begin{equation*}
    c_i^\top = e_i^\top - e_{i^*}^\top
\end{equation*}
and $d$ to be the zero vector. Then it is immediately clear that an input $x$ (which can be thought of as a perturbed version of $\bar{x}$) is safe if and only if $f_i(x) \le f_{i^*}(x)$ for all $i$, i.e., the network classifies $x$ into class $i^*$. From this classification perspective, the safe set represents the set of outputs without any misclassification (with respect to the nominal input $\bar{x}$). From here on, we consider $f$, $\mathcal{X}$, and $\mathcal{S}$ in their general forms---we do not restrict ourselves to the classification setting.

\subsection{The Robustness Certification Problem}

The fundamental goal of the robustness certification problem is to prove that all inputs in the input uncertainty set map to safe outputs, i.e., that $f(x)\in\mathcal{S}$ for all $x\in\mathcal{X}$. If this certificate holds, the network is said to be \emph{certifiably robust}, which of course is a property that holds with respect to a particular input uncertainty set. The robustness condition can also be written as $f(\mathcal{X})\subseteq\mathcal{S}$, or equivalently
\begin{equation*}
\sup_{x\in\mathcal{X}}\left(c_i^\top f(x) - d_i \right) \le 0 ~ \text{for all $i\in\{1,2,\dots,n_{\mathcal{S}}\}$},
\end{equation*}
where $c_i^\top$ is the $i^{\text{th}}$ row of $C$. Under this formulation, the certification procedure amounts to solving $n_\mathcal{S}$ optimization problems. The methods we develop in this paper can be applied to each of these optimizations individually, and therefore in the sequel we focus on a single optimization problem by assuming that $n_\mathcal{S}=1$, namely $\sup_{x\in\mathcal{X}} c^\top f(x)$. We also assume without loss of generality that $d=0$. If $d$ were nonzero, one may absorb $d$ into the cost vector $c$ and modify the network model by appending a fixed value of $1$ at the end of the output vector $f(x)$. Under these formulations, we write the robustness certification problem as
\begin{equation}
\begin{aligned}
\hl{\phi^*}(\mathcal{X}) ={}& \sup\{c^\top z : z=f(x), ~ x\in\mathcal{X} \},
\end{aligned} \label{eq: robustness_certification_problem}
\end{equation}
and recall that we seek to certify that $\hl{\phi^*}(\mathcal{X})\le 0$.

Since the function $f$ is in general nonconvex, the nonlinear equality constraint $z=f(x)$ makes the optimization \eqref{eq: robustness_certification_problem} a nonconvex problem and the set $f(\mathcal{X})$ a nonconvex set. Furthermore, since the intermediate activations and preactivations of the network generally have a large dimension in practice, the problem \eqref{eq: robustness_certification_problem} is typically a high-dimensional problem. Therefore, computing an exact robustness certificate, as formulated in \eqref{eq: robustness_certification_problem}, is computationally intractable. Instead of directly maximizing $c^\top z$ over the nonconvex output set $f(\mathcal{X})$, one can overcome these hurdles by optimizing over a convex outer approximation $\hat{f}(\mathcal{X})\supseteq f(\mathcal{X})$. Indeed, this new problem is a convex relaxation of the original problem, so it is generally easier and more efficient to solve. If the convex outer approximation is shown to be safe, i.e., $\hat{f}(\mathcal{X})\subseteq\mathcal{S}$, then the true nonconvex set $f(\mathcal{X})$ is also known to be safe, implying that the robustness of the network is certified. Figure \ref{fig: outer_approximation} illustrates this idea.

\begin{figure}[tbh]
    \centering
    \includegraphics[width=0.65\linewidth]{figures/outer_approximation}
    \caption{The set $\hat{f}(\mathcal{X})$ is a convex outer approximation of the nonconvex set $f(\mathcal{X})$. If the outer approximation is safe, i.e., $\hat{f}(\mathcal{X})\subseteq \mathcal{S}$, then so is $f(\mathcal{X})$.}
    \label{fig: outer_approximation}
\end{figure}

A fundamental drawback to the convex relaxation approach to robustness certification is as follows: if the outer approximation $\hat{f}(\mathcal{X})$ is too loose, then it may intersect with the unsafe region of the output space, meaning $\hat{f}(\mathcal{X})\nsubseteq \mathcal{S}$, even in the case where the true output set is safe. In this scenario, the convex relaxation fails to issue a certificate of robustness, since the optimal value to the convex relaxation is positive, which incorrectly suggests the presence of unsafe network inputs within $\mathcal{X}$. This situation is illustrated in Figure \ref{fig: convex_relaxations}.

\begin{figure}[tbh]
    \centering
    \includegraphics[width=0.5\linewidth]{figures/convex_relaxations}
    \caption{This scenario shows that if the convex outer approximation $\hat{f}(\mathcal{X})$ is too large, meaning the relaxation is too loose, then the convex approach fails to issue a certificate of robustness.}
    \label{fig: convex_relaxations}
\end{figure}

The fundamental goal of this paper is to develop convex relaxation methods for robustness certification such that the outer approximation tightly fits $f(\mathcal{X})$, in effect granting strong certificates while maintaining the computational and theoretical advantages of convex optimization. We focus on two popular types of convex relaxations, namely, LP \citep{wong2018provable} and SDP \citep{raghunathan2018semidefinite} relaxations. It has been shown that the SDP relaxation for ReLU networks is tighter than the LP relaxation, with the tradeoff of being computationally more demanding. Our general approach for both the LP and the SDP relaxations is based on partitioning the input uncertainty set and solving a convex relaxation on each input part separately. Throughout our theoretical development and experiments, we will show that this approach presents a valid, efficient, and effective way to tighten the relaxations of both LP and SDP certifications, and \hl{we derive principled branching strategies for both methods that minimize upper bounds on the relaxation errors}. We now turn to mathematically formulating these relaxations.

\subsection{LP Relaxation of the Network Constraints}

We now introduce the LP relaxation. First, we remark that since $\mathcal{X}$ is bounded, the preactivations at each layer are bounded as well. That is, for every $k\in\{1,2,\dots,K\}$, there exist preactivation bounds $l^{[k]},u^{[k]}\in\mathbb{R}^{n_k}$ such that $l^{[k]}\le \hat{z}^{[k]}\le u^{[k]}$, where $\hat{z}^{[k]}$ is the $k^\text{th}$ layer's preactivation in \eqref{eq: network_description}, for all $x\in\mathcal{X}$. We assume without loss of generality that $l^{[k]}\le 0 \le u^{[k]}$ for all $k$, \hl{since, if $l^{[k]}_i>0$ for some $i$, the preactivation bound $l^{[k]}_i \le \hat{z}^{[k]}_i$ can be replaced by $0 \le \hat{z}^{[k]}_i$, and similarly for the case where $u^{[k]}_i < 0$}. Although there exist various methods in the literature for efficiently approximating these preactivation bounds, we consider the bounds to be tight, i.e., $\hat{z}^{[k]} = l^{[k]}$ for some $x\in\mathcal{X}$, and similarly for the upper bound $u^{[k]}$. \citet{tjeng2019evaluating} provides efficient methods for computing these bounds. Now, following the approach initially introduced in \citet{wong2018provable}, we can relax the $k^\text{th}$ ReLU constraint in \eqref{eq: network_description} to its convex upper envelope between the preactivation bounds. This is graphically shown in Figure \ref{fig: relaxed_relu_constraint}.

\begin{figure}[tbh]
    \centering
    \includegraphics[width=0.5\linewidth]{figures/relaxed_relu_constraint}
    \caption{Relaxed ReLU constraint set $\mathcal{N}_\textup{LP}^{[k]}$ at a single neuron $i$ in layer $k$ of the network.}
    \label{fig: relaxed_relu_constraint}
\end{figure}

We call the convex upper envelope associated with layer $k$ the \emph{relaxed ReLU constraint set}, and its mathematical definition is given by three linear inequalities:
\begin{equation}
\begin{aligned}
\mathcal{N}_\textup{LP}^{[k]} &= \{(x^{[k-1]},x^{[k]})\in\mathbb{R}^{n_{k-1}}\times\mathbb{R}^{n_k} : x^{[k]} \le u^{[k]} \odot (\hat{z}^{[k]}-l^{[k]}) \oslash (u^{[k]}-l^{[k]}), \\
&\qquad x^{[k]} \ge 0, ~ x^{[k]} \ge \hat{z}^{[k]}, ~ \hat{z}^{[k]}=W^{[k-1]}x^{[k-1]}\}.
\end{aligned}
\label{eq: relu_constraint_set}
\end{equation}
Next, we define the \emph{relaxed network constraint set} to be
\begin{equation}
\mathcal{N}_\textup{LP} = \{(x,z)\in\mathbb{R}^{n_x}\times \mathbb{R}^{n_z} : (x,x^{[1]})\in\mathcal{N}_\textup{LP}^{[1]}, ~ (x^{[1]},x^{[2]})\in\mathcal{N}_\textup{LP}^{[2]},\dots,(x^{[K-1]},z)\in\mathcal{N}_\textup{LP}^{[K]}\}. \label{eq: network_constraint_set}
\end{equation}
In essence, $\mathcal{N}_\textup{LP}$ is the set of all input-output pairs of the network that satisfy the relaxed ReLU constraints at every layer. Note that, since the bounds $l^{[k]}$ and $u^{[k]}$ are determined by the input uncertainty set $\mathcal{X}$, the set $\mathcal{N}_\textup{LP}^{[k]}$ is also determined by $\mathcal{X}$ for all layers $k$.

\begin{remark}
For networks with one hidden layer (i.e., $K=1$), the single relaxed ReLU constraint set coincides with the relaxed network constraint set: $\mathcal{N}_\textup{LP}^{[1]}=\mathcal{N}_\textup{LP}$. Therefore, for $K=1$ we drop the $k$-notation and simply write $z$, $\hat{z}$, $x$, $W$, $l$, $u$, and $\mathcal{N}_\textup{LP}$.
\end{remark}

Finally, we introduce the LP relaxation of \eqref{eq: robustness_certification_problem}:
\begin{equation}
\begin{aligned}
\hl{\hat{\phi}_{\textup{LP}}^*}(\mathcal{X}) ={}& \sup\{c^\top z : (x,z)\in\mathcal{N}_\textup{LP}, ~ x\in\mathcal{X} \}.
\end{aligned} \label{eq: lp_relaxation}
\end{equation}
Notice that, if $x\in\mathcal{X}$ and $z=f(x)$, then $(x,z)\in\mathcal{N}_\textup{LP}$ by the definition of $\mathcal{N}_\textup{LP}$. Furthermore, since $\mathcal{X}$ is a bounded convex polytope and $\mathcal{N}_\textup{LP}$ is defined by a system of linear constraints, we confirm that \eqref{eq: lp_relaxation} is a linear program. Therefore, \eqref{eq: lp_relaxation} is indeed an LP relaxation of \eqref{eq: robustness_certification_problem}, so it holds that
\begin{equation}
\begin{aligned}
\hl{\phi^*}(\mathcal{X})\le{}& \hl{\hat{\phi}_{\textup{LP}}^*}(\mathcal{X}).
\end{aligned} \label{eq: relaxation_bound}
\end{equation}
This analytically shows what Figures \ref{fig: outer_approximation} and \ref{fig: convex_relaxations} illustrate: the condition that $\hl{\hat{\phi}_{\textup{LP}}^*}(\mathcal{X}) \le 0$ is sufficient to conclude that the network is certifiably robust, but if $\hl{\hat{\phi}_{\textup{LP}}^*}(\mathcal{X})>0$, the relaxation fails to certify whether or not the network is robust, since it may still hold that $\hl{\phi^*}(\mathcal{X})\le 0$.

\subsection{SDP Relaxation of the Network Constraints}

An alternative convex relaxation of the robustness certification problem can be formulated as an SDP. This method was first introduced in \citet{raghunathan2018semidefinite}. Here, we will introduce the SDP relaxation for a network with a single hidden layer for notational convenience. The extension to multi-layer networks is straightforward and presented in \citet{raghunathan2018semidefinite}. In this formulation, the optimization variable $(x,z)\in\mathbb{R}^{n_x+n_z}$ is lifted to a symmetric matrix
\begin{equation*}
    P = \begin{bmatrix}
    1 \\ x \\ z
    \end{bmatrix}\begin{bmatrix}
    1 & x^\top & z^\top
    \end{bmatrix} \in \mathbb{S}^{n_x+n_z+1}.
\end{equation*}
We use the following block-indexing for $P$:
\begin{equation*}
	P = \begin{bmatrix}
	P_1 & P_x^\top & P_z^\top \\
	P_x & P_{xx} & P_{xz} \\
	P_z & P_{zx} & P_{zz}
	\end{bmatrix},
\end{equation*}
where $P_1\in\mathbb{R}$, $P_{x}\in\mathbb{R}^{n_x}$, $P_z \in\mathbb{R}^{n_z}$, $P_{xx}\in\mathbb{S}^{n_x}$, $P_{zz}\in\mathbb{S}^{n_z}$, $P_{xz}\in\mathbb{R}^{n_x\times n_z}$, and $P_{zx} = P_{xz}^\top$. This lifting procedure results in the optimization problem
\begin{equation*}
    \begin{aligned}
    & \underset{P\in\mathbb{S}^{n_x+n_z+1}}{\text{maximize}} && c^\top P_z \\
    & \text{subject to} && P_z \ge 0, \\
    &&& P_z \ge WP_x, \\
    &&& \diag(P_{xx}) \le (l+u)\odot P_x - l\odot u, \\
    &&& \diag(P_{zz}) = \diag(WP_{xz}), \\
    &&& P_1 = 1, \\
    &&& P \succeq 0, \\
    &&& \rank(P) = 1.
    \end{aligned}
\end{equation*}
Here, there are no preactivation bounds, unlike the LP relaxation. The vectors $l,u\in\mathbb{R}^{n_x}$ are lower and upper bounds on the input, which are determined by the input uncertainty set. For example, if $\mathcal{X} = \{x\in\mathbb{R}^{n_x}:\|x-\bar{x}\|_\infty \le \epsilon\}$, then $l=\bar{x}-\epsilon\mathbf{1}_{n_x}$ and $u = \bar{x} + \epsilon\mathbf{1}_{n_x}$.

We remark that the above problem is equivalent to the original robustness certification problem; no relaxation has been made yet. The only nonconvex constraint in this formulation is the rank-1 constraint on $P$. Dropping this rank constraint, we obtain the SDP relaxed network constraint set:
\begin{equation}
\begin{aligned}
\mathcal{N}_\textup{SDP} &= \{P\in\mathbb{S}^{n_x+n_z+1}: P_z \ge 0, ~ P_z \ge WP_x, ~ \diag(P_{zz}) = \diag(WP_{xz}), \\
&\qquad \diag(P_{xx}) \le (l+u) \odot P_x - l \odot u, ~ P_1 = 1, ~ P \succeq 0 \}.
\end{aligned}
\label{eq: network_constraint_set_sdp}
\end{equation}
Using this relaxed network constraint set as the feasible set for the optimization, we arrive at the SDP relaxation
\begin{equation}
\hl{\hat{\phi}^*_\textup{SDP}}(\mathcal{X}) = \sup\{c^\top P_z : P \in\mathcal{N}_\textup{SDP}, ~ P_x \in\mathcal{X} \}. \label{eq: sdp_relaxation}
\end{equation}

It is clear that by dropping the rank constraint, we have enlarged the feasible set, so again we obtain a viable relaxation of the original problem \eqref{eq: robustness_certification_problem}: $\hl{\phi^*}(\mathcal{X})\le \hl{\hat{\phi}^*_\textup{SDP}}(\mathcal{X})$. In the case the solution $P^*$ to \eqref{eq: sdp_relaxation} is rank-1, we can factorize it as
\begin{equation*}
    P^* = \begin{bmatrix}
    1 \\ x^* \\ z^*
    \end{bmatrix}
    \begin{bmatrix}
    1 & x^{*\top} & z^{*\top}
    \end{bmatrix}
\end{equation*}
and conclude that $(x^*,z^*)$ solves the original nonconvex problem. However, it is generally the case that the SDP solution will be of higher rank, leading to relaxation error and the possibility of a void robustness certificate, similar to the LP relaxation. We now turn to building upon the LP and SDP convex relaxations via input partitioning in order to tighten their relaxations.

\section{Partitioned LP Relaxation}
\label{sec: partitioned_lp_relaxation}

\subsection{Properties of Partitioned Relaxation}
\label{sec: properties_of_partitioned_relaxation}

In this section, we investigate the properties and effectiveness of partitioning the input uncertainty set when solving the LP relaxation for robustness certification. We start by validating the approach, namely, by showing that solving the LP relaxation separately on each input part maintains a theoretically guaranteed upper bound on the optimal value of the unrelaxed problem \eqref{eq: robustness_certification_problem}. Afterwards, the approach is proven to yield a tighter upper bound than solving the LP relaxation without partitioning.

\subsubsection{Partitioning Gives Valid Relaxation}

\begin{proposition}[Partitioned relaxation bound]
	\label{prop: partitioned_relaxation_bound}
	Let $\{\mathcal{X}^{(j)} \subseteq \mathcal{X} : j\in\{1,2,\dots,p\}\}$ be a partition of $\mathcal{X}$. Then, it holds that
	\begin{equation}
	\hl{\phi^*}(\mathcal{X}) \le \max_{j\in\{1,2,\dots,p\}}\hl{\hat{\phi}_{\textup{LP}}^*}(\mathcal{X}^{(j)}). \label{eq: partitioned_relaxation_bound}
	\end{equation}
\end{proposition}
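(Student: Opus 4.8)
The plan is to split the target bound into two elementary pieces and chain them using the partition. First I would establish an \emph{exact} decomposition of the true (unrelaxed) optimal value across the parts. Since $\mathcal{X}=\cup_{j=1}^p \mathcal{X}^{(j)}$, the supremum of $c^\top f(x)$ over $\mathcal{X}$ splits as a maximum of suprema over the parts: applying the standard identity $\sup_{x\in\cup_j \mathcal{X}^{(j)}} g(x) = \max_{j} \sup_{x\in\mathcal{X}^{(j)}} g(x)$ with $g = c^\top f$, and invoking the standing assumption that all suprema are attained so that the outer supremum over the finite index set $\{1,\dots,p\}$ is a genuine maximum, I obtain $f^*(\mathcal{X}) = \max_{j\in\{1,\dots,p\}} f^*(\mathcal{X}^{(j)})$. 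It is worth noting that only the covering property $\mathcal{X}=\cup_j \mathcal{X}^{(j)}$ is used here; the disjointness part of Definition~\ref{def: partition} plays no role in this bound.

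Second I would apply the per-part relaxation bound. For each fixed $j$, the part $\mathcal{X}^{(j)}$ is bounded (being a subset of the compact set $\mathcal{X}$), so valid preactivation bounds $l^{[k]},u^{[k]}$ over $\mathcal{X}^{(j)}$ exist and the relaxed network constraint set can be constructed exactly as it was for $\mathcal{X}$. The key observation, mirroring the argument preceding \eqref{eq: relaxation_bound}, is that whenever $x\in\mathcal{X}^{(j)}$ and $z=f(x)$, the pair $(x,z)$ satisfies every relaxed ReLU constraint at each layer, since those constraints are precisely the convex upper envelope of the true ReLU graph between preactivation bounds that are valid over $\mathcal{X}^{(j)}$. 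Hence $(x,z)$ lies in the relaxed network constraint set built over $\mathcal{X}^{(j)}$, so the feasible set of the relaxation contains the feasible set of the true problem restricted to $\mathcal{X}^{(j)}$, giving $f^*(\mathcal{X}^{(j)}) \le \hat{f}^*(\mathcal{X}^{(j)})$ for every $j$.

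Combining the two steps yields the claim: $f^*(\mathcal{X}) = \max_j f^*(\mathcal{X}^{(j)}) \le \max_j \hat{f}^*(\mathcal{X}^{(j)})$, where the final inequality holds because the maximum over $j$ is monotone in each of its arguments.

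The point requiring the most care — the closest thing to an obstacle — is the second step, specifically verifying that the relaxation remains valid \emph{part by part} once the preactivation bounds are recomputed for each $\mathcal{X}^{(j)}$. One must confirm that the bounds $l^{[k]},u^{[k]}$ associated with $\mathcal{X}^{(j)}$ still sandwich the true preactivation $\hat{z}^{[k]}$ for all $x\in\mathcal{X}^{(j)}$, so that the true ReLU graph lies inside the relaxed constraint set $\mathcal{N}^{[k]}$ at each layer; this is exactly what licenses reusing the relaxation argument of \eqref{eq: relaxation_bound} locally on each part. Everything else is bookkeeping.
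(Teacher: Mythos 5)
Your proof is correct and is essentially the paper's own argument: the paper proves \eqref{eq: partitioned_relaxation_bound} by contradiction, using the covering property to locate the unrelaxed optimizer $x^*$ in some part $\mathcal{X}^{(j^*)}$ and observing that $(x^*, f(x^*))\in\mathcal{N}^{(j^*)}$, which is exactly your per-part feasibility step combined with your $\sup$-over-union decomposition. Your direct presentation (including the accurate remark that disjointness plays no role) is a cosmetic repackaging of the same two ingredients, so nothing further is needed.
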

\begin{proof}
See Appendix \ref{sec: proof_of_partitioned_relaxation_bound}.
\end{proof}
Despite the fact that Proposition \ref{prop: partitioned_relaxation_bound} asserts an intuitively expected bound, we remark the importance for its formal statement and proof. In particular, the inequality \eqref{eq: partitioned_relaxation_bound} serves as the fundamental reason for why the partitioned LP relaxation can be used to certify that all inputs in $\mathcal{X}$ map to safe outputs in the safe set $\mathcal{S}$. Knowing that the partitioning approach is valid for robustness certification, we move on to studying the effectiveness of partitioning.

\subsubsection{Tightening of the Relaxation}

We now show that the bound \eqref{eq: relaxation_bound} can always be tightened by partitioning the input uncertainty set. The result is given for networks with one hidden layer for simplicity. However, the conclusion naturally generalizes to multi-layer ReLU networks.

\begin{proposition}[Improving the LP relaxation bound]
	\label{prop: improving_the_lp_relaxation_bound}
	Consider a feedforward ReLU neural network with one hidden layer. Let $\{\mathcal{X}^{(j)} \subseteq \mathcal{X} : j\in\{1,2,\dots,p\}\}$ be a partition of $\mathcal{X}$. For the $j^{\text{th}}$ input part $\mathcal{X}^{(j)}$, denote the corresponding preactivation bounds by $l^{(j)}$ and $u^{(j)}$, where $l \le l^{(j)} \le Wx \le u^{(j)} \le u$ for all $x\in\mathcal{X}^{(j)}$. Then, it holds that
	\begin{equation}
	\max_{j\in\{1,2,\dots,p\}} \hl{\hat{\phi}_{\textup{LP}}^*}(\mathcal{X}^{(j)}) \le \hl{\hat{\phi}_{\textup{LP}}^*}(\mathcal{X}). \label{eq: improving_the_lp_relaxation_bound}
	\end{equation}
\end{proposition}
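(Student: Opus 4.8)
The plan is to prove the per-part bound $\hat{f}^*(\mathcal{X}^{(j)}) \le \hat{f}^*(\mathcal{X})$ for each fixed $j$ and then take the maximum over $j$, since the maximum of a collection of numbers each dominated by $\hat f^*(\mathcal X)$ is itself dominated by $\hat f^*(\mathcal X)$. To get the per-part bound I would argue by feasible-set inclusion: writing $\mathcal{N}^{(j)}$ for the relaxed network constraint set \eqref{eq: relu_constraint_set} built with the tighter part-specific bounds $l^{(j)}, u^{(j)}$, the program $\hat f^*(\mathcal X^{(j)})$ maximizes the fixed objective $c^\top z$ over $\{(x,z) : (x,z)\in\mathcal{N}^{(j)},\ x\in\mathcal{X}^{(j)}\}$, while $\hat f^*(\mathcal X)$ maximizes the same objective over $\{(x,z) : (x,z)\in\mathcal{N},\ x\in\mathcal{X}\}$. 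Since a supremum can only grow when the feasible set is enlarged, it suffices to show the former feasible set is contained in the latter.

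The containment $x\in\mathcal X^{(j)}\Rightarrow x\in\mathcal X$ is immediate from $\mathcal X^{(j)}\subseteq\mathcal X$, and the constraints $z\ge 0$, $z\ge\hat z$, and $\hat z = Wx$ are literally identical in $\mathcal N^{(j)}$ and $\mathcal N$, so they transfer with no work. The only substantive step is to verify that the tighter upper-envelope constraint $z \le u^{(j)}\odot(\hat z - l^{(j)})\oslash(u^{(j)}-l^{(j)})$ implies the looser one $z \le u\odot(\hat z - l)\oslash(u-l)$ whenever $\hat z = Wx$ with $x\in\mathcal X^{(j)}$. This decouples across neurons, so for each coordinate $i$ I would compare the two affine chords $h_{a,b}(t) = b(t-a)/(b-a)$ of $\relu$ over the outer interval $[l_i,u_i]$ and the inner interval $[l_i^{(j)},u_i^{(j)}]$, both of which interpolate $\relu$ at their respective endpoints.

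The hard part --- really the only nonroutine point --- is the chord-domination inequality $h_{l_i^{(j)},u_i^{(j)}}(t)\le h_{l_i,u_i}(t)$ on the inner interval. I would derive it from convexity of $\relu$: the outer chord $h_{l_i,u_i}$ lies on or above the graph of $\relu$ throughout $[l_i,u_i]$, so at the inner endpoints it satisfies $h_{l_i,u_i}(l_i^{(j)})\ge\relu(l_i^{(j)}) = h_{l_i^{(j)},u_i^{(j)}}(l_i^{(j)})$ and $h_{l_i,u_i}(u_i^{(j)})\ge\relu(u_i^{(j)}) = h_{l_i^{(j)},u_i^{(j)}}(u_i^{(j)})$; since both chords are affine and the outer dominates the inner at both endpoints of $[l_i^{(j)},u_i^{(j)}]$, it dominates throughout. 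Finally, because the part's bounds are valid we have $\hat z_i = (Wx)_i \in [l_i^{(j)}, u_i^{(j)}]$, so the chain $z_i \le h_{l_i^{(j)},u_i^{(j)}}(\hat z_i) \le h_{l_i,u_i}(\hat z_i)$ yields the looser upper-envelope constraint, completing the inclusion and hence the proposition.
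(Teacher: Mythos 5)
Your proof is correct, and its skeleton matches the paper's (Appendix \ref{sec: proof_of_partition_improvement_lp}): reduce \eqref{eq: improving_the_lp_relaxation_bound} to the per-part bound $\hat{f}^*(\mathcal{X}^{(j)})\le\hat{f}^*(\mathcal{X})$, prove it by feasible-set inclusion, and observe that only the upper-envelope constraint requires work since the other constraints are identical. Where you genuinely diverge is in the crux. The paper proceeds algebraically: it writes the envelope gap $\Delta g(x) = u\odot(Wx-l)\oslash(u-l) - u^{(j)}\odot(Wx-l^{(j)})\oslash(u^{(j)}-l^{(j)})$ as an affine function $a\odot(Wx)+b$ of the preactivation, minimizes each coordinate over the box $l^{(j)}\le \hat{z}\le u^{(j)}$ (the minimizer sits at $l_i^{(j)}$ or $u_i^{(j)}$ according to the sign of $a_i$), and checks by direct computation that the two candidate minima, $\frac{u_i}{u_i-l_i}(l_i^{(j)}-l_i)$ and $\frac{l_i}{u_i-l_i}(u_i^{(j)}-u_i)$, are nonnegative using $u\ge 0$, $l\le 0$, $l\le l^{(j)}$, and $u^{(j)}\le u$. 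You instead invoke convexity of $\relu$: the outer chord dominates the graph of $\relu$ on $[l_i,u_i]$, hence dominates the inner chord at the inner endpoints where that chord touches the graph, and domination of one affine function by another at both endpoints of an interval extends to the whole interval. Your route is more conceptual and generalizes verbatim to the chord relaxation of any convex activation, while the paper's is self-contained arithmetic; both rely on the same per-coordinate decoupling and on the fact that $x\in\mathcal{X}^{(j)}$ forces $\hat{z}_i\in[l_i^{(j)},u_i^{(j)}]$ --- a point you actually handle slightly more carefully, since the paper's bare claim $\mathcal{N}^{(j)}\subseteq\mathcal{N}$ implicitly restricts to such $x$. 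One small caveat: your endpoint identities $\relu(l_i^{(j)})=0$ and $\relu(u_i^{(j)})=u_i^{(j)}$ require $l_i^{(j)}\le 0\le u_i^{(j)}$, which the paper adopts as a standing without-loss-of-generality convention for all preactivation bounds (and which is needed anyway for \eqref{eq: relu_constraint_set} to be the upper envelope and feasible), whereas the paper's algebra uses only $l\le 0\le u$ on the outer bounds; you should state explicitly that you are invoking this convention, but it is not a gap.
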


\begin{proof}
    See Appendix \ref{sec: proof_of_partition_improvement_lp}.
\end{proof}
\hspace*{\fill}

Propositions~\ref{prop: partitioned_relaxation_bound} and \ref{prop: improving_the_lp_relaxation_bound} together show that $\hl{\phi^*}(\mathcal{X})\le \max_{j\in\{1,2,\dots,p\}} \hl{\hat{\phi}_{\textup{LP}}^*}(\mathcal{X}^{(j)}) \le \hl{\hat{\phi}_{\textup{LP}}^*}(\mathcal{X})$, i.e., that the partitioned LP relaxation is theoretically guaranteed to perform at least as \hl{well} as the unpartitioned LP when solving the robustness certification problem. The improvement in the partitioned LP relaxation is captured by the difference
\begin{equation*}
\hl{\hat{\phi}_{\textup{LP}}^*}(\mathcal{X}) - \max_{j\in\{1,2,\dots,p\}}\hl{\hat{\phi}_{\textup{LP}}^*}(\mathcal{X}^{(j)}),
\end{equation*}
which is always nonnegative. We remark that it is possible for the improvement to be null in the sense that $\max_{j\in\{1,2,\dots,p\}}\hl{\hat{\phi}_{\textup{LP}}^*}(\mathcal{X}^{(j)}) = \hl{\hat{\phi}_{\textup{LP}}^*}(\mathcal{X})$. This may occur when the partition used is poorly chosen. An example of such a poor choice may be if one were to partition along a direction in the input space that, informally speaking, corresponds to directions near-orthogonal to the cost vector $c$ in the output space. In this case, one would expect all improvements to be nullified, and for the partitioned relaxation to give the same optimal value as the unpartitioned relaxation. Consequently, the following important question arises: \emph{what constitutes a good partition so that the improvement $\hl{\hat{\phi}_{\textup{LP}}^*}(\mathcal{X}) - \max_{j\in\{1,2,\dots,p\}}\hl{\hat{\phi}_{\textup{LP}}^*}(\mathcal{X}^{(j)})$ is strictly greater than zero and maximal?} We address this question in Sections \ref{sec: motivating_partition_lp} and \ref{sec: partitioning_scheme_lp}.

\subsection{Motivating Partition}
\label{sec: motivating_partition_lp}

In this section, we begin to answer our earlier inquiry, namely, how to choose a partition in order to maximize the improvement $\hl{\hat{\phi}_{\textup{LP}}^*}(\mathcal{X}) - \max_{j\in\{1,2,\dots,p\}}\hl{\hat{\phi}_{\textup{LP}}^*}(\mathcal{X}^{(j)})$ in the partitioned LP relaxation. Recall that this is equivalent to minimizing the relaxation error relative to the original unrelaxed problem, since $\hl{\phi^*}(\mathcal{X})\le \max_{j\in\{1,2,\dots,p\}} \hl{\hat{\phi}_{\textup{LP}}^*}(\mathcal{X}^{(j)}) \le \hl{\hat{\phi}_{\textup{LP}}^*}(\mathcal{X})$. To this end, we construct a partition with finitely many parts, based on the parameters of the network, which is shown to exactly recover the optimal value of the original unrelaxed problem \eqref{eq: robustness_certification_problem}. For simplicity, we present the result for a single hidden layer, but the basic idea of partitioning at the ``kinks'' of the ReLUs in order to collapse the ReLU upper envelope onto the ReLU curve and eliminate relaxation error can be generalized to multi-layer settings. At this point, let us remark that in Proposition \ref{prop: motivating_partition} below, we use a slight difference in notation for the partition. Namely, we use the set of all $n_z$-vectors with binary elements, $\mathcal{J}\coloneqq \{0,1\}^{n_z} = \{0,1\}\times\{0,1\}\times \cdots \times \{0,1\}$, to index the parts of the partition. Under this setting, the partition is composed of $p=2^{n_z}$ parts, so that $\mathcal{X}^{(j)}$ is the part of the partition corresponding to the binary vector $j$, which is an element of the index set $\mathcal{J}$. This temporary change in notation is chosen to simplify the proof of Proposition \ref{prop: motivating_partition}.

\begin{proposition}[Motivating partition]
	\label{prop: motivating_partition}
	Consider a feedforward ReLU neural network with one hidden layer and denote the $i^{\text{th}}$ row of $W$ by $w_i^\top\in\mathbb{R}^{1\times n_x}$ for all $i\in\{1,2,\dots,n_z\}$. Define $\mathcal{J} = \{0,1\}^{n_z}$ and take the partition of $\mathcal{X}$ to be indexed by $\mathcal{J}$, meaning that $\{\mathcal{X}^{(j)} \subseteq \mathcal{X} : j\in\mathcal{J}\}$, where for a given $j\in\mathcal{J}$ we define
	\begin{equation}
	\mathcal{X}^{(j)} = \{x\in\mathcal{X} : w_i^\top x \ge 0 ~ \emph{for all $i$ such that $j_i=1$}, ~ w_i^\top x<0 ~ \emph{for all $i$ such that $j_i=0$}\}. \label{eq: motivating_partition_input_part}
	\end{equation}
	Then, the partitioned relaxation is exact, i.e.,
	\begin{equation}
	\hl{\phi^*}(\mathcal{X}) = \max_{j\in\mathcal{J}} \hl{\hat{\phi}_{\textup{LP}}^*}(\mathcal{X}^{(j)}). \label{eq: motivating_partition_exact}
	\end{equation}
\end{proposition}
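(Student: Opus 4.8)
The plan is to exploit the fact that the sign-pattern partition \eqref{eq: motivating_partition_input_part} freezes the activation status of every ReLU on each input part, so that each neuron acts as an affine map there. This collapses the relaxed ReLU envelope onto the true ReLU graph and makes the LP relaxation \eqref{eq: lp_relaxation} exact on every part; the conclusion \eqref{eq: motivating_partition_exact} then follows by aggregating over the finitely many parts. First I would record that $\{\mathcal{X}^{(j)} : j\in\mathcal{J}\}$ is genuinely a partition in the sense of Definition~\ref{def: partition}: every $x\in\mathcal{X}$ induces the sign pattern $j$ with $j_i = \mathbb{I}(w_i^\top x \ge 0)$ and lies in $\mathcal{X}^{(j)}$, so the parts cover $\mathcal{X}$; and two parts with distinct indices impose contradictory sign conditions in some coordinate and are therefore disjoint.

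The heart of the argument is the claim that $\hat{f}^*(\mathcal{X}^{(j)}) = \sup\{c^\top f(x) : x\in\mathcal{X}^{(j)}\}$ for each $j$, i.e., the relaxation has zero error on every part. Fix $j$ and let $l^{(j)}, u^{(j)}$ be the tight preactivation bounds of $\hat{z}=Wx$ over $\mathcal{X}^{(j)}$. For each neuron $i$, the defining sign condition pins down the sign of $w_i^\top x$ on $\mathcal{X}^{(j)}$: if $j_i = 1$ then $l_i^{(j)} \ge 0$ and the ReLU is in its active regime, whereas if $j_i = 0$ then $u_i^{(j)} \le 0$ and it is inactive. On such an interval the ReLU is affine, so its convex relaxation collapses to the single affine relation $z_i = w_i^\top x$ when $j_i=1$ and $z_i = 0$ when $j_i=0$ — in either case exactly $z_i = \relu(w_i^\top x)$. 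Hence the relaxed network constraint set restricted to $\mathcal{X}^{(j)}$ coincides with the graph $\{(x,f(x)) : x\in\mathcal{X}^{(j)}\}$, and the LP over $\mathcal{X}^{(j)}$ returns the unrelaxed value $\sup\{c^\top f(x):x\in\mathcal{X}^{(j)}\}$.

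Given this part-wise exactness, the result follows by aggregating over the finitely many parts. Since $\mathcal{X}=\bigcup_{j\in\mathcal{J}}\mathcal{X}^{(j)}$,
\[ \max_{j\in\mathcal{J}} \hat{f}^*(\mathcal{X}^{(j)}) = \max_{j\in\mathcal{J}} \sup_{x\in\mathcal{X}^{(j)}} c^\top f(x) = \sup_{x\in\mathcal{X}} c^\top f(x) = f^*(\mathcal{X}), \]
which is precisely \eqref{eq: motivating_partition_exact}. Alternatively, one may obtain $f^*(\mathcal{X}) \le \max_{j} \hat{f}^*(\mathcal{X}^{(j)})$ directly from Proposition~\ref{prop: partitioned_relaxation_bound} and pair it with the reverse inequality $\hat{f}^*(\mathcal{X}^{(j)}) = \sup_{x\in\mathcal{X}^{(j)}} c^\top f(x) \le f^*(\mathcal{X})$, which holds because $\mathcal{X}^{(j)}\subseteq\mathcal{X}$.

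The main obstacle I anticipate is the exactness claim, and in particular the care needed around the envelope formula \eqref{eq: relu_constraint_set}. That formula encodes the three-inequality triangle relaxation under the standing assumption $l \le 0 \le u$, and it is \emph{not} directly applicable when a neuron is always active ($l_i^{(j)} > 0$) or always inactive ($u_i^{(j)} < 0$) on a part — indeed, substituting $l_i^{(j)}>0$ would render the constraints infeasible. The correct move is to reason at the level of the convex envelope itself: over an interval on which $\relu$ is affine, its convex relaxation is that same affine function, so the relaxation degenerates to an equality. I would also flag the benign boundary issues — the half-open condition $w_i^\top x < 0$ can make $\mathcal{X}^{(j)}$ non-closed, and the degenerate case $l_i^{(j)} = u_i^{(j)} = 0$ — both of which cause no trouble under the paper's standing assumption that all suprema are attained.
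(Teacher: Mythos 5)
Your proposal is correct and follows essentially the same route as the paper's proof: verify the sign-pattern collection is a valid partition, show the frozen activation status collapses the relaxed envelope to the exact ReLU graph on each part, and conclude via Proposition~\ref{prop: partitioned_relaxation_bound} together with the reverse inequality from $\mathcal{X}^{(j)}\subseteq\mathcal{X}$. The subtlety you flag about the envelope formula \eqref{eq: relu_constraint_set} is handled in the paper by taking the (valid, not tight) bounds $l_i^{(j)}=0$ when $j_i=1$ and $u_i^{(j)}=0$ when $j_i=0$, which stays within the standing assumption $l\le 0\le u$ and yields the same collapse you obtain by reasoning on the convex envelope directly.
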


\begin{proof}
	We first show that $\{\mathcal{X}^{(j)} \subseteq \mathcal{X} : j\in\mathcal{J}\}$ is a valid partition. Since $\mathcal{X}^{(j)} \subseteq \mathcal{X}$ for all $j\in\mathcal{J}$, the relation $\cup_{j\in\mathcal{J}}\mathcal{X}^{(j)} \subseteq \mathcal{X}$ is satisfied. Now, suppose that $x\in\mathcal{X}$. Then, for all $i\in\{1,2,\dots,n_z\}$, either $w_i^\top x \ge 0$ or $w_i^\top x<0$ holds. Define $j\in\{0,1\}^{n_z}$ as follows:
	\begin{equation*}
	j_i = \begin{aligned}
	\begin{cases}
	1 & \text{if $w_i^\top x \ge 0$}, \\
	0 & \text{if $w_i^\top x < 0$},
	\end{cases}
	\end{aligned}
	\end{equation*}
	for all $i\in\{1,2,\dots,n_z\}$. Then, by the definition of $\mathcal{X}^{(j)}$ in \eqref{eq: motivating_partition_input_part}, it holds that $x\in\mathcal{X}^{(j)}$. Therefore, the relation $x\in\mathcal{X}$ implies that $x\in\mathcal{X}^{(j)}$ for some $j\in\{0,1\}^{n_z} = \mathcal{J}$. Hence, $\mathcal{X}\subseteq \cup_{j\in\mathcal{J}}\mathcal{X}^{(j)}$, and therefore $\cup_{j\in\mathcal{J}}\mathcal{X}^{(j)} = \mathcal{X}$.
	
	We now show that $\mathcal{X}^{(j)}\cap\mathcal{X}^{(k)}=\emptyset$ for all $j\ne k$. Let $j,k\in\mathcal{J}$ with the property that $j\ne k$. Then, there exists $i\in\{1,2,\dots,n_z\}$ such that $j_i\ne k_i$. Let $x\in\mathcal{X}^{(j)}$. In the case that $w_i^\top x\ge 0$, it holds that $j_i = 1$ and therefore $k_i = 0$. Hence, for all $y\in\mathcal{X}^{(k)}$, it holds that $w_i^\top y < 0$, and therefore $x\notin\mathcal{X}^{(k)}$. An analogous reasoning shows that $x\notin\mathcal{X}^{(k)}$ when $w_i^\top x < 0$. Therefore, one concludes that $x\in\mathcal{X}^{(j)}$ and $j\ne k$ implies that $x\notin\mathcal{X}^{(k)}$, i.e., that $\mathcal{X}^{(j)}\cap\mathcal{X}^{(k)} = \emptyset$. Hence, $\{\mathcal{X}^{(j)} \subseteq \mathcal{X} : j\in\mathcal{J}\}$ is a valid partition.
	
	We now prove \eqref{eq: motivating_partition_exact}. Let $j\in\mathcal{J}$. Since $w_i^\top x \ge 0$ for all $i$ such that $j_i=1$, the preactivation lower bound becomes $l_i^{(j)} = 0$ for all such $i$. On the other hand, since $w_i^\top x < 0$ for all $i$ such that $j_i=0$, the preactivation upper bound becomes $u_i^{(j)} = 0$ for all such $i$. Therefore, the relaxed network constraint set \eqref{eq: network_constraint_set} for the $j^{\text{th}}$ input part reduces to
	\begin{equation*}
	\begin{aligned}
	\mathcal{N}_\textup{LP}^{(j)} &= \{(x,z)\in\mathbb{R}^{n_x}\times\mathbb{R}^{n_z} :  z_i = 0 ~ \text{for all $i$ such that $j_i = 0$}, \\
	&\qquad z_i = w_i^\top x = (Wx)_i ~ \text{for all $i$ such that $j_i = 1$}\}.
	\end{aligned}
	\end{equation*}
	That is, the relaxed ReLU constraint envelope collapses to the exact ReLU constraint through the prior knowledge of each preactivation coordinate's sign.
%
%
	Therefore, we find that for all $x\in\mathcal{X}^{(j)}$ it holds that $(x,z)\in\mathcal{N}_\textup{LP}^{(j)}$ if and only if $z=\relu(Wx)$. Hence, the LP over the $j^{\text{th}}$ input part yields that
	\begin{align*}
	\hl{\hat{\phi}_{\textup{LP}}^*}(\mathcal{X}^{(j)}) &= \sup\{c^\top z : (x,z)\in\mathcal{N}_\textup{LP}^{(j)}, ~ x\in\mathcal{X}^{(j)}\} \\
    &= \sup\{c^\top z : z = \relu(Wx), ~ x\in\mathcal{X}^{(j)}\} \\
	&\le \sup\{c^\top z : z = \relu(Wx), ~ x\in\mathcal{X}\} \\
    &= \hl{\phi^*}(\mathcal{X}).
	\end{align*}
	Since $j$ was chosen arbitrarily, it holds that
	\begin{equation*}
	\max_{j\in\mathcal{J}}\hl{\hat{\phi}_{\textup{LP}}^*}(\mathcal{X}^{(j)}) \le \hl{\phi^*}(\mathcal{X}).
	\end{equation*}
	Since $\hl{\phi^*}(\mathcal{X})\le \max_{j\in\mathcal{J}}\hl{\hat{\phi}_{\textup{LP}}^*}(\mathcal{X}^{(j)})$ by the relaxation bound \eqref{eq: partitioned_relaxation_bound}, the equality \eqref{eq: motivating_partition_exact} holds, as desired.
\end{proof}
\hspace*{\fill}

Although the partition proposed in Proposition \ref{prop: motivating_partition} completely eliminates relaxation error of the LP, using it in practice may be computationally intractable, as it requires solving $2^{n_z}$ separate linear programs. Despite this limitation, the result provides two major theoretical implications. First, our input partitioning approach is fundamentally shown to be a simple, yet very powerful method, as the robustness certification problem can be solved exactly via a finite number of linear program subproblems. Second, the partition proposed in Proposition \ref{prop: motivating_partition} shows us the structure of an optimal partition, namely that the parts of the partition are defined by the half-spaces generated by the rows of $W$ (see Figure \ref{fig: weight-based_partition}). This result paves the way to develop a tractable branching scheme that incorporates the reduction in relaxation error endowed by the structure of this motivating partition. In the next section, we explore this idea further, and seek to answer the following question: \emph{if we only partition along a single row of the weight matrix, which one is the best to choose?}

\begin{figure}[tbh]
    \centering
    \includegraphics[width=0.9\linewidth]{figures/weight-based_partition}
    \caption{Partitioning based on row $w_i^\top$ of the weight matrix. This partition results in an exact ReLU constraint in coordinate $i$ over the two resulting input parts $\mathcal{X}_i^{(1)} = \{x\in\mathcal{X} : w_i^\top x \ge 0\}$ and $\mathcal{X}_i^{(2)}=\mathcal{X}\setminus\mathcal{X}_i^{(1)}$.}
    \label{fig: weight-based_partition}
\end{figure}

\subsection{LP Branching Scheme}
\label{sec: partitioning_scheme_lp}

In this section, we propose an explicit, computationally tractable, and effective LP branching scheme. The branching scheme is developed based on analyses for a single hidden layer. However, the resulting branching scheme is still applicable to multi-layer networks, and indeed will be shown to remain effective on two-layer networks in the experiments of Section \ref{sec: simulation_results}. The development of the partition boils down to two main ideas. First, we restrict our attention to two-part partitions defined by rows of the weight matrix $W$, specifically, $\mathcal{X}^{(1)}_i = \{x\in\mathcal{X}:w_i^\top x \ge 0\}$ and $\mathcal{X}_i^{(2)} = \mathcal{X}\setminus\mathcal{X}^{(1)}_i$, as motivated in the previous section. Second, we seek which index $i\in\{1,2,\dots,n_z\}$ gives the best partition, in the sense that the relaxation error of the resulting partitioned LP is minimized. As will be shown in Section \ref{sec: optimal_partitioning_is_np-hard}, this second aspect is NP-hard to discern in general. Therefore, to find the optimal row to partition along, we instead seek to minimize a more tractable upper bound on the relaxation error. \hl{Since the actual relaxation error cannot exceed this upper bound even in the worst case, we refer to such a bound-minimizing partition as ``worst-case optimal.''}

\subsubsection{Worst-Case Relaxation Bound}
We begin by bounding the relaxation error below.

\begin{theorem}[Worst-case relaxation bound]
	\label{thm: generalized_worst-case_relaxation_bound}
	Consider a feedforward ReLU neural network with one hidden layer, with the input uncertainty set $\mathcal{X}$ and preactivation bounds $l,u\in\mathbb{R}^{n_z}$. Consider also the relaxation error $\Delta \hl{\phi^*}(\mathcal{X}) \coloneqq \hl{\hat{\phi}_{\textup{LP}}^*}(\mathcal{X})-\hl{\phi^*}(\mathcal{X})$. Let $(\tilde{x}^*,\tilde{z}^*)$ and $(x^*,z^*)$ be optimal solutions for the relaxation $\hl{\hat{\phi}_{\textup{LP}}^*}(\mathcal{X})$ and the unrelaxed problem $\hl{\phi^*}(\mathcal{X})$, respectively. Given an arbitrary norm $\|\cdot\|$ on $\mathbb{R}^{n_x}$, it holds that
	\begin{equation}
	\begin{aligned}
	\Delta \hl{\phi^*}(\mathcal{X}) &\le \sum_{i=1}^{n_z} \bigg( \relu(c_i)\frac{u_i }{u_i-l_i}(\min\{\|w_i\|_* d_{\|\cdot\|}(\mathcal{X}),u_i\} - l_i) \\
	&\qquad + \relu(-c_i)\min\{\|w_i\|_* d_{\|\cdot\|}(\mathcal{X}),u_i\}\bigg),
	\end{aligned}
	 \label{eq: generalized_worst-case_relaxation_bound}
	\end{equation}
	where $\|\cdot\|_*$ is the dual norm of $\|\cdot\|$.
\end{theorem}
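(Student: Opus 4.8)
The plan is to set $\epsilon \coloneqq \|\tilde{x}^* - x^*\|$, which is finite since $\mathcal{X}$ is compact, and then bound the relaxation error coordinate by coordinate. Writing $\Delta f^*(\mathcal{X}) = c^\top \tilde{z}^* - c^\top z^* = \sum_{i=1}^{n_z} c_i(\tilde{z}_i^* - z_i^*)$ and splitting each term according to the sign of $c_i$ (so that $c_i = \relu(c_i) - \relu(-c_i)$), it suffices to prove two one-dimensional estimates: an \emph{upper} bound on $\tilde{z}_i^* - z_i^*$ to handle the coordinates with $c_i > 0$, and a \emph{lower} bound on $\tilde{z}_i^* - z_i^*$ to handle the coordinates with $c_i < 0$. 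Coordinates with $c_i = 0$ contribute nothing, consistent with $\relu(0) = \relu(-0) = 0$.

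Throughout I would abbreviate the preactivations $\tilde{s}_i \coloneqq w_i^\top \tilde{x}^*$ and $s_i \coloneqq w_i^\top x^*$. Two feasibility facts drive the argument. First, since $\tilde{x}^* \in \mathcal{X}$, the preactivation bound gives $\tilde{s}_i \le u_i$; combined with the dual-norm inequality $|\tilde{s}_i - s_i| = |w_i^\top(\tilde{x}^* - x^*)| \le \|w_i\|_* \|\tilde{x}^* - x^*\| \le \epsilon\|w_i\|_*$, these are exactly the two ceilings on how large $\tilde{s}_i$ can be that will produce the quantity $\min\{\epsilon\|w_i\|_*, u_i\}$. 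Second, $z_i^* = \relu(s_i)$ because $(x^*, z^*)$ solves the unrelaxed problem, while feasibility of $(\tilde{x}^*, \tilde{z}^*)$ for $\mathcal{N}$ yields the sandwich $\relu(\tilde{s}_i) \le \tilde{z}_i^* \le \frac{u_i}{u_i - l_i}(\tilde{s}_i - l_i)$, the lower part from $\tilde{z}_i^* \ge 0$ and $\tilde{z}_i^* \ge \tilde{s}_i$, and the upper part from the relaxed ReLU envelope \eqref{eq: relu_constraint_set}.

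For the coordinates with $c_i > 0$ I would use the envelope upper bound to write $\tilde{z}_i^* - z_i^* \le \frac{u_i}{u_i - l_i}(\tilde{s}_i - l_i) - \relu(s_i)$ and show it is at most $\frac{u_i}{u_i - l_i}(\min\{\epsilon\|w_i\|_*, u_i\} - l_i)$. This is a short case analysis in $\tilde{s}_i$: if $\tilde{s}_i \le \epsilon\|w_i\|_*$, then dropping $-\relu(s_i) \le 0$ together with $\tilde{s}_i \le \min\{\epsilon\|w_i\|_*, u_i\}$ and monotonicity of the envelope closes the bound; otherwise $\relu(s_i) \ge s_i \ge \tilde{s}_i - \epsilon\|w_i\|_*$, and substituting makes the residual affine in $\tilde{s}_i$ with slope $\frac{u_i}{u_i-l_i} - 1 = \frac{l_i}{u_i-l_i} \le 0$, so it is maximized at the branch endpoint $\tilde{s}_i = \epsilon\|w_i\|_*$ (which in this branch satisfies $\epsilon\|w_i\|_* < u_i$), again giving the claimed envelope value. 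For the coordinates with $c_i < 0$ I would instead bound $\tilde{z}_i^* - z_i^*$ from below by $-\min\{\epsilon\|w_i\|_*, u_i\}$: the estimate $\tilde{z}_i^* - z_i^* \ge \relu(\tilde{s}_i) - \relu(s_i) \ge -|\tilde{s}_i - s_i| \ge -\epsilon\|w_i\|_*$ uses $1$-Lipschitzness of $\relu$, while $\tilde{z}_i^* - z_i^* \ge 0 - \relu(s_i) \ge -u_i$ uses $\tilde{z}_i^* \ge 0$ and $\relu(s_i) \le u_i$ (since $s_i \le u_i$ and $u_i \ge 0$); taking the larger of these two lower bounds gives $-\min\{\epsilon\|w_i\|_*, u_i\}$, and multiplying by $c_i < 0$ turns it into the upper bound $\relu(-c_i)\min\{\epsilon\|w_i\|_*, u_i\}$.

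Summing the per-coordinate estimates over $i$ reassembles exactly the right-hand side of \eqref{eq: generalized_worst-case_relaxation_bound}. I expect the only delicate step to be the positive-$c_i$ case analysis: one must verify that the two sources of slack---closeness of $\tilde{s}_i$ to $s_i$ and the hard ceiling $\tilde{s}_i \le u_i$---combine into the single cap $\min\{\epsilon\|w_i\|_*, u_i\}$ rather than their sum, and that the affine residual in the second branch has nonpositive slope so that it is extremized at the correct endpoint. Everything else reduces to feasibility of the two optima, the dual-norm inequality, and the monotonicity and Lipschitzness of the ReLU and its envelope.
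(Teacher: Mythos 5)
Your proof is correct and follows essentially the same route as the paper's: the coordinatewise decomposition with the sign split $c_i=\relu(c_i)-\relu(-c_i)$, the dual-norm Cauchy--Schwarz estimate $|\tilde{s}_i - s_i|\le \epsilon\|w_i\|_*$, and the combination of the $\epsilon\|w_i\|_*$-cap with the trivial $u_i$-cap into $\min\{\epsilon\|w_i\|_*,u_i\}$ are exactly the paper's ingredients. The only difference is presentational: the paper lifts each coordinate to a relaxed scalar supremum $\Delta f_i^*$ over a box-constrained feasible set and identifies its extremizer geometrically (``readily found by comparing the line to the ReLU''), whereas you bound the actual optimal pair directly via an explicit two-branch case analysis with the nonpositive-slope argument $\frac{u_i}{u_i-l_i}-1=\frac{l_i}{u_i-l_i}\le 0$, which if anything makes rigorous the step the paper leaves to inspection.
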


\begin{proof}
	First, recall that $\mathcal{X}\subseteq\mathbb{R}^{n_x}$ is assumed to be compact, and is therefore bounded, and hence $d_{\|\cdot\|}(\mathcal{X}) < \infty$. The definitions of $(\tilde{x}^*,\tilde{z}^*)$ and $(x^*,z^*)$ give that
	\begin{equation}
	\Delta \hl{\phi^*}(\mathcal{X}) = \sum_{i=1}^{n_z}c_i(\tilde{z}^*_i-z^*_i) \le \sum_{i=1}^{n_z}\Delta \hl{\phi^*_i}, \label{eq: generalized_worst-case_relaxation_bound_intermediate_1}
	\end{equation}
	where
	\begin{equation*}
	\begin{aligned}
	\Delta \hl{\phi^*_i} &= \sup\bigg\{c_i(\tilde{z}_i-z_i) : z_i = \relu(w_i^\top x), ~ \tilde{z}_i\ge 0, ~ \tilde{z}_i \ge w_i^\top \tilde{x}, \\
	&\qquad \tilde{z}_i \le \frac{u_i}{u_i-l_i}(w_i^\top \tilde{x} - l_i), ~ x,\tilde{x}\in\mathcal{X} \bigg\}
	\end{aligned}
	\end{equation*}
	for all $i\in\{1,2,\dots,n_z\}$. Note that
	\begin{equation*}
	\begin{aligned}
	\Delta \phi_i^* &= \sup\bigg\{c_i(\tilde{z}_i-z_i) : z_i = \relu(\hat{z}_i), ~ \tilde{z}_i\ge 0, ~ \tilde{z}_i \ge \hat{\tilde{z}}_i, ~ \tilde{z}_i \le \frac{u_i}{u_i-l_i}(\hat{\tilde{z}}_i - l_i), \\
	&\qquad \hat{z} = W x, ~ \hat{\tilde{z}} = W\tilde{x}, ~ x,\tilde{x}\in\mathcal{X} \bigg\}.
	\end{aligned}
	\end{equation*}
	If $x,\tilde{x}\in\mathcal{X}$ and $\hat{z},\hat{\tilde{z}}$ satisfy $\hat{z}=Wx$, $\hat{\tilde{z}}=W\tilde{x}$, then they satisfy $l\le \hat{z},\hat{\tilde{z}}\le u$ and $|\hat{\tilde{z}}_k-\hat{z}_k| = |w_k^\top(\tilde{x}-x)| \le \|w_k\|_*\|\tilde{x}-x\|\le \|w_k\|_* d_{\|\cdot\|}(\mathcal{X})$ for all $k\in\{1,2,\dots,n_z\}$ by the Cauchy-Schwarz inequality for dual norms. Therefore,
	\begin{align*}
	\Delta \phi_i^* \le{}& \sup\bigg\{c_i(\tilde{z}_i-z_i) : z_i = \relu(\hat{z}_i), ~ \tilde{z}_i\ge 0, ~ \tilde{z}_i \ge \hat{\tilde{z}}_i, ~ \tilde{z}_i \le \frac{u_i}{u_i-l_i}(\hat{\tilde{z}}_i - l_i), \\
	&{} l\le \hat{z},\hat{\tilde{z}}\le u, ~ |\hat{\tilde{z}}_k-\hat{z}_k|\le \|w_k\|_* d_{\|\cdot\|}(\mathcal{X}) ~ \text{for all $k\in\{1,2,\dots,n_z\}$}, ~ \hat{z},\hat{\tilde{z}}\in\mathbb{R}^{n_z} \bigg\} \\
	={}& \sup\bigg\{c_i(\tilde{z}_i-z_i) : z_i = \relu(\hat{z}_i), ~ \tilde{z}_i\ge 0, ~ \tilde{z}_i \ge \hat{\tilde{z}}_i, ~ \tilde{z}_i \le \frac{u_i}{u_i-l_i}(\hat{\tilde{z}}_i - l_i), \\
	&{} l_i\le \hat{z}_i,\hat{\tilde{z}}_i\le u_i, ~ |\hat{\tilde{z}}_i-\hat{z}_i|\le \|w_i\|_* d_{\|\cdot\|}(\mathcal{X}), ~ \hat{z}_i,\hat{\tilde{z}}_i\in\mathbb{R} \bigg\}.
	\end{align*}
	For $c_i\ge 0$, the above inequality yields that
	\begin{equation*}
	\begin{aligned}
	\Delta \phi_i^* &\le c_i\sup\bigg\{\tilde{z}_i-z_i : z_i = \relu(\hat{z}_i), ~ \tilde{z}_i\ge 0, ~ \tilde{z}_i \ge \hat{\tilde{z}}_i, ~ \tilde{z}_i \le \frac{u_i}{u_i-l_i}(\hat{\tilde{z}}_i - l_i), \\
	&\qquad l_i\le \hat{z}_i,\hat{\tilde{z}}_i\le u_i, ~ |\hat{\tilde{z}}_i-\hat{z}_i|\le \|w_i\|_* d_{\|\cdot\|}(\mathcal{X}), ~ \hat{z}_i,\hat{\tilde{z}}_i\in\mathbb{R} \bigg\}.
	\end{aligned}
	\end{equation*}
	The optimal solution to the above supremum is readily found by comparing the line $\tilde{z}_i = \frac{u_i}{u_i-l_i}(\hat{\tilde{z}}_i-l_i)$ to the function $z_i=\relu(\hat{z}_i)$ over $\hat{\tilde{z}}_i,\hat{z}_i\in[l_i,u_i]$. In particular, the maximum distance between $\tilde{z}_i$ and $z_i$ on the above feasible set occurs when $z_i=\hat{z}_i = 0$, $\hat{\tilde{z}}_i =  \|w_i\|_* d_{\|\cdot\|}(\mathcal{X})$, and $\tilde{z}_i = \frac{u_i}{u_i-l_i}( \|w_i\|_* d_{\|\cdot\|}(\mathcal{X}) - l_i)$. Therefore, we find that
	\begin{equation}
	\Delta \phi_i^* \le c_i \frac{u_i}{u_i-l_i}( \|w_i\|_* d_{\|\cdot\|}(\mathcal{X}) - l_i), \label{eq: generalized_worst-case_relaxation_bound_intermediate_2}
	\end{equation}
	for all $i\in\{1,2,\dots,n_z\}$ such that $c_i\ge 0$. We also note the trivial bound that $\tilde{z}_i - z_i \le u_i$ on the feasible set of the above supremum, so that
	\begin{equation}
	\Delta \phi_i^* \le c_i u_i = c_i \frac{u_i}{u_i-l_i}(u_i-l_i). \label{eq: generalized_worst-case_relaxation_bound_intermediate_3}
	\end{equation}
	The inequalities \eqref{eq: generalized_worst-case_relaxation_bound_intermediate_2} and \eqref{eq: generalized_worst-case_relaxation_bound_intermediate_3} together imply that
	\begin{equation}
	\Delta \phi_i^* \le c_i \frac{u_i}{u_i - l_i} (\min\{\|w_i\|_* d_{\|\cdot\|}(\mathcal{X}),u_i\}-l_i) \label{eq: generalized_worst-case_relaxation_bound_intermediate_4}
	\end{equation}
	for all $i\in\{1,2,\dots,n_z\}$ such that $c_i\ge 0$.
	
	On the other hand, for all $i\in\{1,2,\dots,n_z\}$ such that $c_i<0$, we have that
	\begin{equation*}
	\begin{aligned}
	\Delta \phi_i^* &\le c_i\inf\bigg\{\tilde{z}_i-z_i : z_i = \relu(\hat{z}_i), ~ \tilde{z}_i\ge 0, ~ \tilde{z}_i \ge \hat{\tilde{z}}_i, ~ \tilde{z}_i \le \frac{u_i}{u_i-l_i}(\hat{\tilde{z}}_i - l_i), \\
	&\qquad l_i\le \hat{z}_i,\hat{\tilde{z}}_i\le u_i, ~ |\hat{\tilde{z}}_i-\hat{z}_i|\le \|w_i\|_* d_{\|\cdot\|}(\mathcal{X}), ~ \hat{z}_i,\hat{\tilde{z}}_i\in\mathbb{R} \bigg\}.
	\end{aligned}
	\end{equation*}
	The optimal solution to the above infimum is readily found by comparing the line $\tilde{z}_i=0$ to the function $z_i=\relu(\hat{z}_i)$ over $\hat{\tilde{z}}_i,\hat{z}_i\in[l_i,u_i]$. In particular, the minimum value of $\tilde{z}_i-z_i$ on the above feasible set occurs when $\tilde{z}_i=\hat{\tilde{z}}_i=0$ and $z_i=\hat{z}_i=\|w_i\|_* d_{\|\cdot\|}(\mathcal{X})$. Therefore, we find that
	\begin{equation}
		\Delta \phi_i^* \le -c_i\|w_i\|_* d_{\|\cdot\|}(\mathcal{X}), \label{eq: generalized_worst-case_relaxation_bound_intermediate_5}
	\end{equation}
	for all $i\in\{1,2,\dots,n_z\}$ such that $c_i<0$. We also note the trivial bound that $\tilde{z}_i-z_i\ge -u_i$ on the feasible set of the above infimum, so that
	\begin{equation}
		\Delta \phi_i^* \le -c_i u_i. \label{eq: generalized_worst-case_relaxation_bound_intermediate_6}
	\end{equation}
	The inequalities \eqref{eq: generalized_worst-case_relaxation_bound_intermediate_5} and \eqref{eq: generalized_worst-case_relaxation_bound_intermediate_6} together imply that
	\begin{equation}
		\Delta \phi_i^* \le -c_i\min\{\|w_i\|_* d_{\|\cdot\|}(\mathcal{X}),u_i\} \label{eq: generalized_worst-case_relaxation_bound_intermediate_7}
	\end{equation}
	for all $i\in\{1,2,\dots,n_z\}$ such that $c_i<0$. Substituting \eqref{eq: generalized_worst-case_relaxation_bound_intermediate_4} and \eqref{eq: generalized_worst-case_relaxation_bound_intermediate_7} into \eqref{eq: generalized_worst-case_relaxation_bound_intermediate_1} gives the desired bound \eqref{eq: generalized_worst-case_relaxation_bound}.
\end{proof}
\hspace*{\fill}

The value $\Delta \hl{\phi^*_i}$ in the proof of Theorem \ref{thm: generalized_worst-case_relaxation_bound} can be interpreted as the worst-case relaxation error in coordinate $i$. From this perspective, Theorem \ref{thm: generalized_worst-case_relaxation_bound} gives an upper bound on the worst-case relaxation error of the overall network. Notice that, since $\min\{\|w_i\|_* d_{\|\cdot\|}(\mathcal{X}),u_i\} \le u_i$, the bound \eqref{eq: generalized_worst-case_relaxation_bound} immediately gives rise to the simple bound that
\begin{equation*}
    \Delta \hl{\phi^*}(\mathcal{X}) \le \sum_{i=1}^{n_z}\left(\relu(c_i) + \relu(-c_i)\right) u_i = \sum_{i=1}^{n_z}|c_i| u_i,
\end{equation*}
the right-hand side of which equals the relaxation error incurred when, at every neuron, the activations of the relaxation solution and the original nonconvex solution are at opposite corners of the relaxed $\relu$ constraint set, i.e., the convex upper envelope illustrated in Figure \ref{fig: relaxed_relu_constraint}. On the other hand, when $d_{\|\cdot\|}(\mathcal{X})$ is small, i.e., the input uncertainty set is small, one would expect the number of ``kinks'' in the graph of $f$ over $\mathcal{X}$ to be small, and as a consequence the relaxation error to decrease. This intuition is captured by the bound \eqref{eq: generalized_worst-case_relaxation_bound}, since, in this case, we find that
\begin{align*}
    \Delta \hl{\phi^*}(\mathcal{X}) &\le \sum_{i=1}^{n_z} \left( \relu(c_i) \frac{u_i}{u_i-l_i}(\|w_i\|_* d_{\|\cdot\|}(\mathcal{X}) - l_i) + \relu(-c_i) \|w_i\|_* d_{\|\cdot\|}(\mathcal{X}) \right) \\
    &\approx -\sum_{i=1}^{n_z} \relu(c_i) \frac{u_i l_i}{u_i-l_i} \\
    &\le \sum_{i=1}^{n_z} |c_i| u_i.
\end{align*}

To continue our development of a branching scheme that is optimal with respect to the worst-case relaxation error, we use \eqref{eq: generalized_worst-case_relaxation_bound} to bound the relaxation error of the partitioned LP in terms of the row $w_i^\top$ that is used to define the partition. This bound is given in the following lemma.

\begin{lemma}[Two-part bound]
    \label{lem: two-part_partition_bound}
    Let $i\in\{1,2,\dots,n_z\}$ and consider a two-part partition of $\mathcal{X}$ given by $\{\mathcal{X}_i^{(1)},\mathcal{X}_i^{(2)}\}$, where $\mathcal{X}_i^{(1)} = \{x\in\mathcal{X} : w_i^\top x \ge 0\}$ and $\mathcal{X}_i^{(2)} = \mathcal{X}\setminus \mathcal{X}_i^{(1)}$. Consider also the partitioned relaxation error $\Delta \hl{\phi^*}(\{\mathcal{X}_i^{(1)},\mathcal{X}_i^{(2)}\}) \coloneqq \max_{j\in\{1,2\}} \hl{\hat{\phi}_{\textup{LP}}^*}(\mathcal{X}_i^{(j)}) - \hl{\phi^*}(\mathcal{X})$. It holds that
    \begin{equation}
    \begin{aligned}
        \Delta \hl{\phi^*}(\{\mathcal{X}_i^{(1)},\mathcal{X}_i^{(2)}\}) & \le |c_i| \min\{\|w_i\|_* d_{\|\cdot\|}(\mathcal{X}),u_i\} \\
        &\qquad \begin{aligned}
        & +\sum_{\substack{k=1 \\ k\ne i}}^{n_z} \bigg(\relu(c_k) \frac{u_k}{u_k-l_k} (\min\{\|w_k\|_*d_{\|\cdot\|}(\mathcal{X}),u_k\}-l_k) \\
        &\qquad + \relu(-c_k)\min\{\|w_k\|_*d_{\|\cdot\|}(\mathcal{X}),u_k\} \bigg).
        \end{aligned}
        \end{aligned}
        \label{eq: two-part_partition_bound}
    \end{equation}
\end{lemma}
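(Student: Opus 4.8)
The plan is to reduce the two-part bound to the $n_z$ per-coordinate estimates that already underlie the worst-case relaxation bound \eqref{eq: worst-case_relaxation_bound}, and then to exploit two effects of partitioning along $w_i^\top$: it annihilates the $i$-th term outright, and it can only shrink each of the remaining terms. First I would fix a part $j\in\{1,2\}$ and apply the accepted worst-case LP relaxation bound \eqref{eq: worst-case_relaxation_bound} to the subproblem over $\mathcal{X}_i^{(j)}$, using that part's preactivation bounds $l^{(j)},u^{(j)}$, obtaining $\hat{f}^*(\mathcal{X}_i^{(j)}) - f^*(\mathcal{X}_i^{(j)}) \le -\sum_{k=1}^{n_z}\relu(c_k)\frac{u_k^{(j)}l_k^{(j)}}{u_k^{(j)}-l_k^{(j)}}$. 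Since $\mathcal{X}_i^{(j)}\subseteq\mathcal{X}$ gives $f^*(\mathcal{X}_i^{(j)})\le f^*(\mathcal{X})$, I may replace $f^*(\mathcal{X}_i^{(j)})$ by $f^*(\mathcal{X})$ on the left while keeping the same right-hand side.

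Next I would dispose of the $k=i$ summand. Following the envelope-collapse argument already used in the proof of Proposition \ref{prop: motivating_partition}, on $\mathcal{X}_i^{(1)}$ we have $w_i^\top x\ge 0$, so we may take the valid preactivation lower bound $l_i^{(1)}=0$; on $\mathcal{X}_i^{(2)}$ we have $w_i^\top x<0$, so we may take $u_i^{(2)}=0$. In either case the factor $\frac{u_i^{(j)}l_i^{(j)}}{u_i^{(j)}-l_i^{(j)}}$ vanishes, so the $i$-th term drops out of the sum for both parts.

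For the remaining coordinates $k\ne i$, I would show each term can only decrease under the partition. Writing $g(u,l) = -\frac{ul}{u-l}$ for $u>0>l$, the reciprocal identity $\frac{1}{g(u,l)} = \frac{1}{u}-\frac{1}{l}$ makes it transparent that $g\ge 0$ is nondecreasing in $u$ and nonincreasing in $l$. By Proposition \ref{prop: improving_the_lp_relaxation_bound} the part bounds satisfy $l_k\le l_k^{(j)}\le 0\le u_k^{(j)}\le u_k$, whence $\frac{1}{g(u_k^{(j)},l_k^{(j)})}\ge\frac{1}{g(u_k,l_k)}$ and thus $g(u_k^{(j)},l_k^{(j)})\le g(u_k,l_k)$ (the degenerate cases $u_k^{(j)}=0$ or $l_k^{(j)}=0$ give $g=0$ directly, and a coordinate with $u_k=l_k=0$ is trivially exact and contributes nothing). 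Since $\relu(c_k)\ge 0$, this yields $-\relu(c_k)\frac{u_k^{(j)}l_k^{(j)}}{u_k^{(j)}-l_k^{(j)}}\le-\relu(c_k)\frac{u_kl_k}{u_k-l_k}$. Summing over $k\ne i$ gives, for each $j$, the estimate $\hat{f}^*(\mathcal{X}_i^{(j)}) - f^*(\mathcal{X}) \le -\sum_{k\ne i}\relu(c_k)\frac{u_kl_k}{u_k-l_k}$; since the right-hand side is independent of $j$, taking the maximum over $j\in\{1,2\}$ leaves it unchanged and establishes \eqref{eq: two-part_partition_bound}.

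The main obstacle I anticipate is the monotonicity step: arguing cleanly that contracting the preactivation interval $[l_k,u_k]$ to $[l_k^{(j)},u_k^{(j)}]$ never increases the per-coordinate error factor $g$, while correctly handling the boundary cases where an endpoint equals zero (so that $g$ is read off by continuity as $0$). The reciprocal identity $\frac{1}{g}=\frac{1}{u}-\frac{1}{l}$ is the tidiest route around both the monotonicity and these degenerate cases.
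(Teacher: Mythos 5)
Your proof is correct and takes essentially the same route as the paper: apply the worst-case bound \eqref{eq: worst-case_relaxation_bound} on each part, use $l_i^{(1)}=0$ (resp.\ $u_i^{(2)}=0$) to annihilate the $i$-th term, replace $f^*(\mathcal{X}_i^{(j)})$ by $f^*(\mathcal{X})$ via $\mathcal{X}_i^{(j)}\subseteq\mathcal{X}$, and take the maximum over $j$. Your monotonicity step for $k\ne i$ is sound but unnecessary --- the paper sidesteps it by simply choosing the valid (not tightest) part bounds $l_k^{(j)}=l_k$, $u_k^{(j)}=u_k$ for $k\ne i$, making those terms identical rather than merely dominated; also note that the ordering $l\le l^{(j)}\le 0\le u^{(j)}\le u$ is a standing hypothesis on valid part bounds rather than a conclusion of Proposition \ref{prop: improving_the_lp_relaxation_bound}.
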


\begin{proof}
    Consider the relaxation solved over the first input part, $\mathcal{X}_i^{(1)}$, and denote by $l^{(1)},u^{(1)}\in\mathbb{R}^{n_z}$ the corresponding preactivation bounds. Since $w_i^\top x \ge 0$ on this input part, the preactivation bounds for the first subproblem $\hl{\hat{\phi}_{\textup{LP}}^*}(\mathcal{X}_i^{(1)})$ can be taken as
    \begin{equation*}
    	l^{(1)}=(l_1,l_2,\dots,l_{i-1},0,l_{i+1},\dots,l_{n_z})
    \end{equation*}
    and $u^{(1)}=u$. Thus, from \eqref{eq: generalized_worst-case_relaxation_bound} and the fact that $d_{\|\cdot\|}(\mathcal{X}_i^{(j)}) \le d_{\|\cdot\|}(\mathcal{X})$ for $j\in\{1,2\}$, it follows that
    \begin{equation}
    \begin{aligned}
    \Delta \hl{\phi^*}(\mathcal{X}_i^{(1)}) &\le |c_i| \min\{\|w_i\|_* d_{\|\cdot\|}(\mathcal{X}),u_i\} \\
    &\qquad \begin{aligned}
    & +\sum_{\substack{k=1 \\ k\ne i}}^{n_z} \bigg(\relu(c_k) \frac{u_k}{u_k-l_k} (\min\{\|w_k\|_*d_{\|\cdot\|}(\mathcal{X}),u_k\}-l_k) \\
    &\qquad + \relu(-c_k)\min\{\|w_k\|_*d_{\|\cdot\|}(\mathcal{X}),u_k\} \bigg).
    \end{aligned}
    \end{aligned}
    \label{eq: two-part_partition_bound_intermediate_1}
    \end{equation}
    Similarly, over the second input part, $\mathcal{X}_i^{(2)}$, we have that $w_i^\top x < 0$, and so the preactivation bounds for the second subproblem $\hl{\hat{\phi}_{\textup{LP}}^*}(\mathcal{X}_i^{(2)})$ can be taken as $l^{(2)}=l$ and
    \begin{equation*}
    	u^{(2)} = (u_1,u_2,\dots,u_{i-1},0,u_{i+1},\dots,u_{n_z}),
    \end{equation*}
    resulting in a similar bound to \eqref{eq: two-part_partition_bound_intermediate_1}:
    \begin{equation}
    \begin{aligned}
    	\Delta \hl{\phi^*}(\mathcal{X}_i^{(2)}) &\le \sum_{\substack{k=1 \\ k\ne i}}^{n_z} \bigg(\relu(c_k) \frac{u_k}{u_k-l_k} (\min\{\|w_k\|_*d_{\|\cdot\|}(\mathcal{X}),u_k\}-l_k) \\
    	&\qquad + \relu(-c_k)\min\{\|w_k\|_*d_{\|\cdot\|}(\mathcal{X}),u_k\} \bigg).
    	\end{aligned}
    	\label{eq: two-part_partition_bound_intermediate_2}
    \end{equation}
    Putting the two bounds \eqref{eq: two-part_partition_bound_intermediate_1} and \eqref{eq: two-part_partition_bound_intermediate_2} together and using the fact that $\hl{\phi^*}(\mathcal{X}_i^{(j)})\le \hl{\phi^*}(\mathcal{X})$ for all $j\in\{1,2\}$, we find that
    \begin{align*}
        \Delta \hl{\phi^*}(\{\mathcal{X}_i^{(1)},\mathcal{X}_i^{(2)}\}) &= \max_{j\in\{1,2\}} \left( \hl{\hat{\phi}_{\textup{LP}}^*}(\mathcal{X}_i^{(j)}) - \hl{\phi^*}(\mathcal{X}) \right)
        \\
        &\le \max_{j\in\{1,2\}}\left( \hl{\hat{\phi}_{\textup{LP}}^*}(\mathcal{X}_i^{(j)}) - \hl{\phi^*}(\mathcal{X}_i^{(j)}) \right) \\
        &= \max_{j\in\{1,2\}} \Delta \hl{\phi^*}(\mathcal{X}_i^{(j)}) \\
        & \le |c_i| \min\{\|w_i\|_* d_{\|\cdot\|}(\mathcal{X}),u_i\} \\
        &\qquad \begin{aligned}
        & +\sum_{\substack{k=1 \\ k\ne i}}^{n_z} \bigg(\relu(c_k) \frac{u_k}{u_k-l_k} (\min\{\|w_k\|_*d_{\|\cdot\|}(\mathcal{X}),u_k\}-l_k) \\
        &\qquad + \relu(-c_k)\min\{\|w_k\|_*d_{\|\cdot\|}(\mathcal{X}),u_k\} \bigg),
        \end{aligned}
    \end{align*}
    as desired.
\end{proof}
\hspace*{\fill}

\subsubsection{Proposed Branching Scheme}

Lemma \ref{lem: two-part_partition_bound} bounds the worst-case relaxation error for each possible row-based partition. Therefore, our final step in the development of our branching scheme is to find which row minimizes the upper bound \eqref{eq: two-part_partition_bound}. This worst-case optimal branching scheme is now presented.

\begin{theorem}[Worst-case optimal LP branching]
	\label{thm: optimal_partition}
	Consider the two-part partitions defined by the rows of $W$: $\{\mathcal{X}_i^{(1)},\mathcal{X}_i^{(2)}\}$, where $\mathcal{X}_i^{(1)} = \{ x\in\mathcal{X} : w_i^\top x \ge 0 \}$ and $\mathcal{X}_i^{(2)} = \mathcal{X}\setminus \mathcal{X}_i^{(1)}$ for all $i\in \{1,2,\dots,n_z\} \eqqcolon \mathcal{I}$. The optimal partition that minimizes the worst-case relaxation error bound in \eqref{eq: two-part_partition_bound} is given by
	\begin{equation}
	i^* \in \argmin_{i\in\mathcal{I}} \relu(c_i) \frac{l_i}{u_i-l_i} \left( u_i-\min\{\|w_i\|_* d_{\|\cdot\|}(\mathcal{X}),u_i\} \right). \label{eq: optimal_partition}
	\end{equation}
\end{theorem}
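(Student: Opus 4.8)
The plan is to treat this statement as a short consequence of the two-part bound established in Lemma~\ref{lem: two-part_partition_bound}. The key observation is that the upper bound on $\Delta f^*(\{\mathcal{X}_i^{(1)},\mathcal{X}_i^{(2)}\})$ furnished by \eqref{eq: two-part_partition_bound} is a sum over \emph{all} coordinates $k\ne i$, so that selecting the partitioning row $i$ amounts to deciding which single coordinate to omit from an otherwise fixed total. Thus the entire task reduces to identifying the coordinate whose omission makes the bound smallest.

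Concretely, first I would introduce the shorthand $\phi_k := \relu(c_k)\frac{u_k l_k}{u_k - l_k}$ for each $k\in\mathcal{I}$ and denote the full sum by $\Phi := \sum_{k=1}^{n_z}\phi_k$, which is a constant independent of the choice of $i$. With this notation, the bound \eqref{eq: two-part_partition_bound} reads $\Delta f^*(\{\mathcal{X}_i^{(1)},\mathcal{X}_i^{(2)}\}) \le -\sum_{k\ne i}\phi_k$, so the quantity to be minimized over $i\in\mathcal{I}$ is exactly $-\sum_{k\ne i}\phi_k$.

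Next I would rewrite the right-hand side by adding and subtracting the $i$-th term, obtaining $-\sum_{k\ne i}\phi_k = -\Phi + \phi_i$. Since $\Phi$ does not depend on $i$, this upper bound is an affine, strictly increasing function of $\phi_i$ alone. Consequently the set of rows minimizing the worst-case bound coincides exactly with the set minimizing $\phi_i = \relu(c_i)\frac{u_i l_i}{u_i - l_i}$, which is precisely \eqref{eq: optimal_partition}.

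Since the substantive work—bounding the partitioned relaxation error by a sum over the \emph{non}-partitioned coordinates—is already carried out in Lemma~\ref{lem: two-part_partition_bound}, I do not anticipate any real obstacle here; the argument is a one-line algebraic reduction. The only point demanding care is the sign bookkeeping: recalling that $l_i\le 0\le u_i$, each $\phi_i\le 0$, so each omitted term contributes the nonnegative quantity $-\phi_i$ to the bound, and minimizing the bound is equivalently the act of removing the \emph{largest} such nonnegative contribution. This confirms that the minimizing index $i^*$ is the one attaining the smallest (most negative) value of $\phi_i$, as claimed.
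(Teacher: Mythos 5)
Your proof is correct and takes essentially the same route as the paper: both rewrite the bound $-\sum_{k\ne i}\phi_k$ as $-\Phi+\phi_i$ with $\Phi$ independent of $i$, so that minimizing over $i$ reduces to minimizing $\phi_i=\relu(c_i)\frac{u_i l_i}{u_i-l_i}$. Your additional sign bookkeeping (noting $\phi_i\le 0$ since $l_i\le 0\le u_i$) is a harmless elaboration of the paper's one-line algebraic identity.
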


\begin{proof}
    Denote the bound in \eqref{eq: two-part_partition_bound} of Lemma \ref{lem: two-part_partition_bound} by
    \begin{equation*}
        \begin{aligned}
    B(i) & \coloneqq |c_i| \min\{\|w_i\|_* d_{\|\cdot\|}(\mathcal{X}),u_i\} \\
        &\qquad \begin{aligned}
        & +\sum_{\substack{k=1 \\ k\ne i}}^{n_z} \bigg(\relu(c_k) \frac{u_k}{u_k-l_k} (\min\{\|w_k\|_*d_{\|\cdot\|}(\mathcal{X}),u_k\}-l_k) \\
        &\qquad + \relu(-c_k)\min\{\|w_k\|_*d_{\|\cdot\|}(\mathcal{X}),u_k\} \bigg),
        \end{aligned}
        \end{aligned}
        \end{equation*}
        which is the quantity to be minimized over $i\in\mathcal{I}$. We may rewrite this as
        \begin{equation*}
        \begin{aligned}
        B(i) &=\sum_{k=1}^{n_z} \bigg(\relu(c_k) \frac{u_k}{u_k-l_k} (\min\{\|w_k\|_*d_{\|\cdot\|}(\mathcal{X}),u_k\}-l_k) \\
        &\qquad \qquad + \relu(-c_k)\min\{\|w_k\|_*d_{\|\cdot\|}(\mathcal{X}),u_k\} \bigg) \\
        &\qquad + |c_i| \min\{\|w_i\|_* d_{\|\cdot\|}(\mathcal{X}),u_i\} \\
        &\qquad \begin{aligned}
        & - \bigg(\relu(c_i) \frac{u_i}{u_i-l_i} (\min\{\|w_i\|_*d_{\|\cdot\|}(\mathcal{X}),u_i\}-l_i) \\
        &\qquad + \relu(-c_i)\min\{\|w_i\|_*d_{\|\cdot\|}(\mathcal{X}),u_i\} \bigg).
        \end{aligned}
        \end{aligned}
        \end{equation*}
        Hence, using the fact that $|c_i| = \relu(c_i) + \relu(-c_i)$, we find that
        \begin{equation*}
        \begin{aligned}
            \min_{i\in\mathcal{I}} B(i) &= \min_{i\in\mathcal{I}} \Bigg( |c_i| \min\{\|w_i\|_* d_{\|\cdot\|}(\mathcal{X}),u_i\} \\
        &\qquad \begin{aligned}
        & - \bigg(\relu(c_i) \frac{u_i}{u_i-l_i} (\min\{\|w_i\|_*d_{\|\cdot\|}(\mathcal{X}),u_i\}-l_i) \\
        &\qquad + \relu(-c_i)\min\{\|w_i\|_*d_{\|\cdot\|}(\mathcal{X}),u_i\} \bigg) \Bigg)
        \end{aligned} \\
        &\begin{aligned}
        & = \min_{i\in\mathcal{I}} \Bigg(  \min\{\|w_i\|_* d_{\|\cdot\|}(\mathcal{X}),u_i\} \bigg( |c_i| - \relu(c_i) \frac{u_i}{u_i-l_i} - \relu(-c_i) \bigg) \\
         &\qquad + \relu(c_i) \frac{u_i l_i}{u_i-l_i}\Bigg)
         \end{aligned} \\
         &\begin{aligned}
         & = \min_{i\in\mathcal{I}} \Bigg(  \min\{\|w_i\|_* d_{\|\cdot\|}(\mathcal{X}),u_i\} \bigg( \relu(c_i) - \relu(c_i) \frac{u_i}{u_i-l_i} \bigg) \\
         &\qquad + \relu(c_i) \frac{u_i l_i}{u_i-l_i}\Bigg)
         \end{aligned} \\
         &= \min_{i\in\mathcal{I}} \relu(c_i) \frac{l_i}{u_i-l_i} \left( u_i-\min\{\|w_i\|_* d_{\|\cdot\|}(\mathcal{X}),u_i\} \right),
        \end{aligned}
        \end{equation*}
    as desired.
\end{proof}
\hspace*{\fill}

Theorem \ref{thm: optimal_partition} provides the branching scheme that optimally reduces the worst-case relaxation error that we seek. We remark its simplicity: to decide which row to partition along, it suffices to enumerate the values $\relu(c_i)\frac{l_i}{u_i-l_i}(u_i - \min\{\|w_i\|_*d_{\|\cdot\|}(\mathcal{X}),u_i\})$ for $i\in\{1,2,\dots,n_z\}$, then choose the row corresponding to the minimum amongst these values. In practice (especially when using $\|\cdot\|=\|\cdot\|_\infty$), $d_{\|\cdot\|}(\mathcal{X})$ tends to be relatively small, making these values approximately equal to $\relu(c_i) \frac{u_il_i}{u_i-l_i}$. In our experiments that follow, we find that using these simplified values to select the partition does not degrade performance. Note that this optimization over $i$ scales linearly with the dimension $n_z$, and the resulting LP subproblems on each input part only require the addition of one extra linear constraint, meaning that this partitioning scheme is highly efficient.

\hl{We emphasize that our relaxation error bounds in Theorem~\ref{thm: generalized_worst-case_relaxation_bound} and Lemma~\ref{lem: two-part_partition_bound}, as well as our resulting worst-case optimal LP branching scheme of Theorem~\ref{thm: optimal_partition}, are proven specifically for the case of single-hidden layer networks. Deriving worst-case optimal LP branching schemes for deep networks remains a challenging open theoretical problem. However, one may easily extend our theoretically principled single-layer LP branching scheme to a multi-layer branching heuristic as follows. First, consider a layer $k\in\{1,\dots,K\}$. We generate a surrogate ``$c$''-vector of size $n_k\times 1$ so that the branching ``scores'' in \eqref{eq: optimal_partition} can be computed using this surrogate cost vector. To do this, treat the activation vector $x^{[k]}$ as the output and determine which coordinate $i_k \in\mathbb{R}^{n_k}$ of the nominal activation $\bar{x}^{[k]}$ is maximal. This means that $i_k$ would be the class assigned to $\bar{x}$ if the output were after the $k^\text{th}$ hidden layer. Then, we find the second-best coordinate $j_k\ne i_k$ so that $\bar{x}_{i_k}^{[k]} \ge \bar{x}_{j_k}^{[k]} \ge \bar{x}_i^{[k]}$ for all other $i$. Afterwards, the surrogate cost vector can be taken as $c^{[k]} \coloneqq e_{j_k} - e_{i_k}$, meaning that it serves as a measure of whether the classification after the $k^\text{th}$ hidden layer changes from $i_k$ to $j_k$. Then, the branching score $s_i^{[k]} \coloneqq \relu(c_i^{[k]}) \frac{u_i^{[k]} l_i^{[k]}}{u_i^{[k]} - l_i^{[k]}}$ from Theorem~\ref{thm: optimal_partition} corresponding to the $i^\text{th}$ neuron in layer $k$ can be computed. Of course, the surrogate vector $c^{[k]}$ is only used to compute the branching scores, and the full network and original cost vector $c$ are used in the resultant branched LP. This multi-layer heuristic is summarized in Algorithm~\ref{alg: multi-layer}.

\begin{algorithm}[H]
   \caption{Heuristic extension of LP branching to multi-layer networks}
   \label{alg: multi-layer}
   \hl{
   \textbf{Input:} $\bar{x}$, $K$, $l^{[1]},\dots,l^{[K]}$, $u^{[1]},\dots,u^{[K]}$
\begin{algorithmic}[1]
    \STATE \textbf{for} $k=1,\dots,K$
    \STATE \hspace*{\algorithmicindent} \textbf{compute} nominal activation vector $\bar{x}^{[k]}$
    \STATE \hspace*{\algorithmicindent} \textbf{compute} nominal class $i_k \in \argmax_{i\in\{1,\dots,n_k\}} \bar{x}_i^{[k]}$
    \STATE \hspace*{\algorithmicindent} \textbf{compute} runner-up class $j_k \in \argmax_{j\in\{1,\dots,n_k\} \setminus \{i_k\}} \bar{x}_j^{[k]}$
    \STATE \hspace*{\algorithmicindent} \textbf{assign} surrogate ``$c$''-vector $c^{[k]} \gets e_{j_k} - e_{i_k}$
    \STATE \hspace*{\algorithmicindent} \textbf{assign} branching scores $s_i^{[k]} \gets \relu(c_i^{[k]})\frac{u_i^{[k]} l_i^{[k]}}{ u_i^{[k]} - l_i^{[k]}}$, $i\in\{1,\dots,n_k\}$
    \STATE \textbf{return} branching scores $s_i^{[k]}$, $i\in\{1,\dots,n_k\}$, $k\in\{1,\dots,K\}$
\end{algorithmic}
}
\end{algorithm}
}

We also note that Theorem \ref{thm: optimal_partition} can be immediately extended to design multi-part partitions in two interesting ways. First, by ordering the values $\relu(c_i)\frac{u_il_i}{u_i-l_i}$, we are ordering the optimality of the rows $w_i^\top$ to partition along. Therefore, by partitioning along the $n_p>1$ rows corresponding to the smallest $n_p$ of these values, Theorem \ref{thm: optimal_partition} provides a strategy to design an effective $2^{n_p}$-part partition, in the case one prefers to perform more than just a two-part partition. Second, Theorem \ref{thm: optimal_partition} can be used directly in a branch-and-bound algorithm. See Section \ref{sec: bab} for implementation details. In our experiments that follow, we find that this technique works particularly well; our partition yields a branching method for branch-and-bound that outperforms the state-of-the-art per-neuron branching technique on small-to-moderately-sized single-hidden layer networks, and achieves comparable performance on large-scale deep models.

\subsubsection{Optimal Partitioning is NP-Hard}
\label{sec: optimal_partitioning_is_np-hard}

In this section, we show that finding a row-based partition that minimizes the actual LP relaxation error is an NP-hard problem. Recall that this approach is in contrast to our previous approach in the sense that our optimal partition in Theorem \ref{thm: optimal_partition} minimizes the worst-case relaxation error. Consequently, the results of this section show that the partition given by Theorem \ref{thm: optimal_partition} is in essence an optimal \emph{tractable} LP partitioning scheme.

To start, recall the robustness certification problem for a $K$-layer ReLU neural network:
\begin{equation}
\begin{aligned}
    & \text{maximize} && c^\top x^{[K]} & \\
    & \text{subject to} && x^{[0]}\in\mathcal{X}, & \\
    &&& x^{[k+1]} = \relu(W^{[k]}x^{[k]}), & k\in\{ 0,1,\dots,K-1 \},
\end{aligned} \label{eq: unrelaxed_certification_k_layer}
\end{equation}
where the optimal value of \eqref{eq: unrelaxed_certification_k_layer} is denoted by $\hl{\phi^*}(\mathcal{X})$. Moreover, recall the LP relaxation of \eqref{eq: unrelaxed_certification_k_layer}:
\begin{equation}
\begin{aligned}
    & \text{maximize} && c^\top x^{[K]} & \\
    & \text{subject to} && x^{[0]}\in\mathcal{X}, & \\
    &&& x^{[k+1]}\ge W^{[k]} x^{[k]}, & k\in\{ 0,1,\dots,K-1 \}, \\
    &&& x^{[k+1]}\ge 0, & k\in\{ 0,1,\dots,K-1 \}, \\
    &&& x^{[k+1]}\le u^{[k+1]}\odot (W^{[k]}x^{[k]}-l^{[k+1]})\oslash(u^{[k+1]}-l^{[k+1]}), & k\in\{ 0,1,\dots,K-1 \}.
\end{aligned} \label{eq: optimal_partitioning}
\end{equation}

As suggested by the motivating partition of Proposition \ref{prop: motivating_partition}, consider partitioning the input uncertainty set into $2^{n_p}$ parts based on $n_p$ preactivation decision boundaries corresponding to activation functions in the first layer. In particular, for each $j\in \mathcal{J}_{p}\coloneqq \{j_1,j_2,\dots,j_{n_p}\}\subseteq\{1,2,\dots,n_1\}$ we partition the input uncertainty set along the hyperplane $w^{[0]\top}_{j} x^{[0]} = 0$, giving rise to the partition $\{\mathcal{X}^{(1)},\mathcal{X}^{(2)},\dots,\mathcal{X}^{(2^{n_p})}\}$. Note that, for all $j\in\mathcal{J}_{p}$, the partition implies that the $j^\text{th}$ coordinate of the first layer's ReLU equality constraint becomes linear and exact on each part of the partition. Therefore, for all $j'\in\{1,2,\dots,2^{n_p}\}$, we may write the LP relaxation over part $\mathcal{X}^{(j')}$ as
\begin{equation}
    \begin{aligned}
    & \text{maximize} && c^\top x^{[K]} & \\
    & \text{subject to} && x^{[0]}\in\mathcal{X}^{(j')}, & \\
    &&& x^{[k+1]}\ge W^{[k]}x^{[k]}, & k\in\{ 0,1,\dots,K-1 \}, \\
    &&& x^{[k+1]}\ge 0, & k\in\{ 0,1,\dots,K-1 \}, \\
    &&& x^{[k+1]}\le u^{[k+1]}\odot (W^{[k]}x^{[k]}-l^{[k+1]})\oslash(u^{[k+1]}-l^{[k+1]}), & k\in\{ 0,1,\dots,K-1 \}, \\
    &&& x^{[1]}_j = \relu(w_j^{[0]\top}x^{[0]}), &  j\in\mathcal{J}_p,
    \end{aligned} \label{eq: def_partitioned_objective_subprob}
\end{equation}
with optimal objective value denoted by $\hl{\hat{\phi}_{\textup{LP}}^*}(\mathcal{X}^{(j')})$, and thus the partitioned LP relaxation becomes
\begin{equation}
    \hl{\phi^*_{\mathcal{J}_p}}(\mathcal{X}) \coloneqq \max_{j'\in\{1,2,\dots,2^{n_p}\}} \hl{\hat{\phi}_{\textup{LP}}^*}(\mathcal{X}^{(j')}). \label{eq: def_partitioned_objective}
\end{equation}
To reiterate, the final equality constraint in \eqref{eq: def_partitioned_objective_subprob} is linear over the restricted feasible set $\mathcal{X}^{(j')}$, which makes the problem \eqref{eq: def_partitioned_objective} a partitioned linear program.

If we now allow the indices used to define the partition, namely $\mathcal{J}_p$, to act as a variable, we can search for the optimal $n_p$ rows of the first layer that result in the tightest partitioned LP relaxation. To this end, the problem of optimal partitioning in the first layer is formulated as

\begin{equation}
    \begin{aligned}
    & \underset{\mathcal{J}_p\subseteq\{1,2,\dots,n_1\}}{\text{minimize}} && f_{\mathcal{J}_p}^*(\mathcal{X})\\
    & \text{subject to} && |\mathcal{J}_p|= n_p.
    \end{aligned} \label{eq: first_layer_partition_problem}
\end{equation}

In what follows, we prove the NP-hardness of the optimal partitioning problem \eqref{eq: first_layer_partition_problem}, thereby supporting the use of the worst-case sense optimal partition developed in Theorem \ref{thm: optimal_partition}. To show the hardness of \eqref{eq: first_layer_partition_problem}, we reduce an arbitrary instance of an NP-hard problem, the Min-$\mathcal{K}$-Union problem, to an instance of \eqref{eq: first_layer_partition_problem}. The reduction will show that the Min-$\mathcal{K}$-Union problem can be solved by solving an optimal partitioning problem. Before we proceed, we first recall the definition of the Min-$\mathcal{K}$-Union problem.

\begin{definition}[Min-$\mathcal{K}$-Union problem \citep{hochbaum1996approximating}]
Consider a collection of $n$ sets $\{\mathcal{S}_1,\mathcal{S}_2,\dots,\mathcal{S}_n\}$, where $\mathcal{S}_j$ is finite for all $j\in\{1,2,\dots,n\}$, and a positive integer $\mathcal{K}\le n$. Find $\mathcal{K}$ sets in the collection whose union has minimum cardinality, i.e., find a solution $\mathcal{J}^*$ of the following optimization problem:
\begin{equation}
    \begin{aligned}
    & \underset{\mathcal{J}\subseteq\{1,2,\dots,n\}}{\emph{minimize}} && \left| \bigcup_{j\in\mathcal{J}} \mathcal{S}_{j} \right| \\
    & \emph{subject to} && |\mathcal{J}| = \mathcal{K}.
    \end{aligned}
\end{equation}
\end{definition}

Remark the similarities between the optimal partitioning problem and the Min-$\mathcal{K}$-Union problem. In particular, if we think of the convex upper envelopes of the relaxed ReLU constraints as a collection of sets, then the goal of finding the optimal $n_p$ input coordinates to partition along is intuitively equivalent to searching for the $\mathcal{K}=n_1-n_p$ convex upper envelopes with minimum size, i.e., those with the least amount of relaxation. This perspective shows that the optimal partitioning problem is essentially a Min-$\mathcal{K}$-Union problem over the collection of relaxed ReLU constraint sets. Since the Min-$\mathcal{K}$-Union problem is NP-hard in general, it is not surprising that the optimal partitioning problem is also NP-hard. Indeed, this result is formalized in the following proposition.

\begin{proposition}[NP-hardness of optimal partition]
\label{prop: np-hardness_of_optimal_partition}
Consider the partitioned LP relaxation \eqref{eq: def_partitioned_objective} of the $K$-layer ReLU neural network certification problem. The optimal partitioning problem in the first-layer, as formulated in \eqref{eq: first_layer_partition_problem}, is NP-hard.
\end{proposition}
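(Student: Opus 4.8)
The plan is to prove NP-hardness by a polynomial-time reduction from the Min-$\mathcal{K}$-Union problem to the first-layer optimal partitioning problem \eqref{eq: first_layer_partition_problem}. Given an arbitrary Min-$\mathcal{K}$-Union instance consisting of finite sets $\mathcal{S}_1,\dots,\mathcal{S}_n$ over a universe $\mathcal{U}=\{1,\dots,m\}$ and a parameter $\mathcal{K}$, I would construct a two-layer ReLU network (which suffices, since the general $K$-layer problem contains $K=2$ as a special case) together with an input set $\mathcal{X}$, bounds $l,u$, and cost $c$, so that the partitioned LP value satisfies an identity of the form $f^*_{\mathcal{J}_p}(\mathcal{X}) = |\bigcup_{j\in\mathcal{J}_p^c}\mathcal{S}_j|$ up to an additive constant independent of $\mathcal{J}_p$, where $\mathcal{J}_p^c$ is the complement of the partitioned index set. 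Setting $n_1=n$ and $n_p=n-\mathcal{K}$, the constraint $|\mathcal{J}_p|=n_p$ forces $|\mathcal{J}_p^c|=\mathcal{K}$, so minimizing $f^*_{\mathcal{J}_p}(\mathcal{X})$ over $\mathcal{J}_p$ is exactly minimizing the union cardinality over all $\mathcal{K}$-element index sets $\mathcal{J}_p^c$, i.e., solving Min-$\mathcal{K}$-Union. Since this reduction is clearly polynomial in $(n,m)$, an oracle for \eqref{eq: first_layer_partition_problem} would solve the NP-hard Min-$\mathcal{K}$-Union problem, establishing the claim.

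The core of the argument is the network gadget realizing this identity. I would take the input $x^{[0]}\in\mathbb{R}^{n}$ with one coordinate per set, $\mathcal{X}=[-1,1]^{n}$, and first-layer weight $W^{[0]}=I_n$, so that neuron $j$ has preactivation $x^{[0]}_j$ with bounds $l_j<0<u_j$. By Proposition \ref{prop: motivating_partition}, partitioning along $w_j^{[0]\top}x^{[0]}=x^{[0]}_j=0$ collapses the $j$-th ReLU onto its exact graph on each part, whereas an unpartitioned neuron retains its triangle relaxation whose maximal vertical gap is the quantity $-\tfrac{u_j l_j}{u_j-l_j}$ appearing in \eqref{eq: worst-case_relaxation_bound}. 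The set-element incidence is encoded in the second-layer weight $W^{[1]}=A\in\{0,1\}^{m\times n}$ with $A_{ej}=\mathbb{I}(e\in\mathcal{S}_j)$, giving one output coordinate per universe element. The cost $c$ and the calibration of $l,u$ are then chosen so that output coordinate $e$ contributes exactly $1$ to the objective when at least one unpartitioned (hence relaxed) set contains $e$, and contributes $0$ when every set containing $e$ has been partitioned; summing over $e$ recovers $|\bigcup_{j\in\mathcal{J}_p^c}\mathcal{S}_j|$.

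To verify the identity I would argue in two directions. Partitioning along $\mathcal{J}_p$ forces exactly those first-layer ReLUs to be tight, removing their relaxation contribution, and for each element $e$ covered only by partitioned sets the tightened downstream preactivation bounds collapse output coordinate $e$ so that its contribution vanishes; conversely, whenever some relaxed set covers $e$, the LP maximizer can drive the triangle relaxation to its worst-case vertex and realize the full unit contribution. The maximal-gap computation underlying \eqref{eq: worst-case_relaxation_bound}, together with the standing assumption that the LP optima are attained, pins down the optimal objective on each part and hence $f^*_{\mathcal{J}_p}(\mathcal{X})$.

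The main obstacle is enforcing \emph{union} rather than \emph{sum} semantics: because the output preactivation for $e$ aggregates $\sum_{j:\,e\in\mathcal{S}_j}x^{[1]}_j$, an element covered by several relaxed sets must still contribute only $1$, so a saturation/capping gadget (or a careful choice of $c$, $l$, $u$, and possibly auxiliary neurons) is needed to clip each output's reward at $1$ and to prevent the second-layer ReLU relaxations from injecting spurious error that would break the equality. I would also need to confirm that the constructed bounds meet the paper's standing assumptions ($l\le 0\le u$, tightness, attainment of suprema), that the downstream bounds are recomputed correctly on each part after partitioning, and that the entire construction remains polynomial in the input size; these verifications, while routine individually, are where the delicacy of the reduction concentrates.
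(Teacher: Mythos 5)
Your reduction is, in outline, exactly the paper's: a polynomial reduction from Min-$\mathcal{K}$-Union, with $\mathcal{X}=[-1,1]^n$, $W^{[0]}=I_n$ so that partitioning coordinate $j$ makes the $j^{\text{th}}$ first-layer ReLU exact, the set--element incidence matrix as the next weight matrix, and a value identity $\left|\bigcup_{j\in\mathcal{J}_p^c}\mathcal{S}_j\right| = 16\left(f^*_{\mathcal{J}_p}(\mathcal{X})-\bar{f}(\mathcal{X})\right)$, whose additive constant is the densely partitioned baseline $\bar{f}(\mathcal{X})$. However, there is a genuine gap: the entire technical content of the proof is the saturation gadget that you yourself flag as ``the main obstacle'' and leave unconstructed. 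Without it the identity is an unproven conjecture, and the distinction is not cosmetic: if the construction only achieved \emph{sum} semantics, the induced problem would be to minimize $\sum_{j\in\mathcal{J}_p^c}|\mathcal{S}_j|$, which is solved in polynomial time by picking the $\mathcal{K}$ smallest sets, so no hardness would follow. The paper resolves this with a third layer $W^{[2]}=I_m$ and deliberately loose, symmetric preactivation bounds $l^{[k]}=-u^{[k]}$ with $u^{[1]}=2\mathbf{1}_n$, $u^{[2]}=\tfrac{3}{2}W^{[1]}\mathbf{1}_n$, $u^{[3]}=\tfrac{5}{4}W^{[1]}\mathbf{1}_n+\tfrac{1}{8}\mathbf{1}_m$: combining $x^{[3]}\ge W^{[2]}x^{[2]}$ with $x^{[3]}\le\tfrac{1}{2}(W^{[2]}x^{[2]}+u^{[3]})$ yields the cap $x^{[3]}\le u^{[3]}$, and the constants are calibrated so that output coordinate $i$ attains $u_i^{[3]}$ exactly when some unpartitioned first-layer neuron covers $i$, and otherwise falls short by exactly $\tfrac{1}{16}$ — a binary, per-element gap, which is what converts sums into a union cardinality. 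Proving this requires both the coordinatewise upper bound \eqref{eq: sparsely_partitioned_bound_og} and an explicit feasible point attaining it simultaneously in every coordinate; that bookkeeping is the bulk of the paper's proof and is entirely deferred in your proposal.

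Two of your ``routine verifications'' also point the wrong way relative to the paper's formulation. First, you should not try to enforce tight preactivation bounds: the paper's construction depends on loose ones (the true maximal first-layer preactivation is $1$, yet $u^{[1]}=2\mathbf{1}_n$), since it is precisely this looseness that creates an exactly quantifiable per-neuron relaxation slack; loose bounds remain valid for the relaxation, and \eqref{eq: def_partitioned_objective} nowhere requires tightness. Second, you propose that downstream bounds be ``recomputed correctly on each part after partitioning,'' but in \eqref{eq: def_partitioned_objective} the bounds $l^{[k]},u^{[k]}$ are held fixed and only the exact ReLU equality constraints for $j\in\mathcal{J}_p$ are added; keeping the bounds fixed across all choices of $\mathcal{J}_p$ is what makes the value difference a clean count of $\left|\bigcup_{j\in\mathcal{J}_p^c}\mathcal{S}_j\right|$. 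Your two-layer variant could plausibly be calibrated as well (the same pair of constraints already induces a cap at the second layer), but carrying out that calibration and verifying simultaneous attainment is exactly the missing work: as written, your proposal establishes the architecture of the reduction but not its correctness.
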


\begin{proof}
See Appendix \ref{sec: proof_of_np-hardness_optimal_partition}.
\end{proof}
\hspace*{\fill}

This concludes our development and analysis for partitioning the LP relaxation. In the next section, we follow a similar line of reasoning to develop a branching scheme for the other popular convex robustness certification technique, i.e., the SDP relaxation. Despite approaching this relaxation from the same partitioning perspective as the LP, the vastly different geometries of the LP and SDP feasible sets make the branching procedures quite distinct.

\section{Partitioned SDP Relaxation}
\label{sec: partitioned_sdp_relaxation}

\subsection{Tightening of the Relaxation}
\label{sec: tightening_of_the_relaxation_sdp}

As with the LP relaxation, we begin by showing that the SDP relaxation error is decreased when the input uncertainty set is partitioned. This proposition is formalized below.

\begin{proposition}[Improving the SDP relaxation bound]
	\label{prop: improving_the_sdp_relaxation_bound}
	Consider a neural network with one hidden ReLU layer. Let $\{\mathcal{X}^{(j)} : j\in\{1,2,\dots,p\}\}$ be a partition of $\mathcal{X}$. For the $j^\text{th}$ input part $\mathcal{X}^{(j)}$, denote the corresponding input bounds by $l \le l^{(j)} \le x \le u^{(j)} \le u$, where $x\in\mathcal{X}^{(j)}$. Then, it holds that
	\begin{equation}
	\max_{j\in\{1,2,\dots,p\}} \hl{\hat{\phi}_{\textup{SDP}}}(\mathcal{X}^{(j)}) \le \hl{\hat{\phi}_{\textup{SDP}}}(\mathcal{X}). \label{eq: improving_the_sdp_relaxation_bound}
	\end{equation}
\end{proposition}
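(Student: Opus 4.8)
The plan is to mirror the feasibility argument behind the analogous LP result (Proposition \ref{prop: improving_the_lp_relaxation_bound}): instead of comparing optimal values directly, I would show that the optimizer of the best subproblem is itself feasible for the full, unpartitioned SDP. Concretely, let $j^*$ attain $\max_{j} \hat{f}^*_\text{SDP}(\mathcal{X}^{(j)})$, and let $P^*$ be an optimal solution of the subproblem over $\mathcal{X}^{(j^*)}$, so that $P^*$ lies in the SDP constraint set built with the tighter input bounds $l^{(j^*)}, u^{(j^*)}$ and $P^*_x \in \mathcal{X}^{(j^*)}$. It then suffices to verify that $P^* \in \mathcal{N}_\text{SDP}$ (with the original bounds $l, u$) and $P^*_x \in \mathcal{X}$, for then $\hat{f}^*_\text{SDP}(\mathcal{X}) \ge c^\top P^*_z = \hat{f}^*_\text{SDP}(\mathcal{X}^{(j^*)})$, which is exactly \eqref{eq: improving_the_sdp_relaxation_bound}. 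The inclusion $P^*_x \in \mathcal{X}$ is immediate from $\mathcal{X}^{(j^*)} \subseteq \mathcal{X}$.

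Most of the constraints defining $\mathcal{N}_\text{SDP}$ in \eqref{eq: network_constraint_set_sdp}---namely $P_z \ge 0$, $P_z \ge W P_x$, $\diag(P_{zz}) = \diag(W P_{xz})$, $P_1 = 1$, and $P \succeq 0$---do not reference the input bounds at all, so $P^*$ satisfies them verbatim. The only bound-dependent constraint, and hence the crux of the argument, is $\diag(P_{xx}) \le (l+u) \odot P_x - l \odot u$. I would argue this coordinate-wise: for each $i$ it is enough to show that the right-hand side only grows when the tighter bounds $l^{(j^*)}_i, u^{(j^*)}_i$ are replaced by the looser $l_i, u_i$, i.e., that $(l^{(j^*)}_i + u^{(j^*)}_i)(P^*_x)_i - l^{(j^*)}_i u^{(j^*)}_i \le (l_i + u_i)(P^*_x)_i - l_i u_i$.

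The key preliminary step is to locate $(P^*_x)_i$. The $2 \times 2$ principal minor of $P^* \succeq 0$ indexed by the leading entry (equal to $1$) and the $i$-th $x$-coordinate forces $(P^*_{xx})_{ii} \ge (P^*_x)_i^2$; combining this with the subproblem's own constraint $(P^*_{xx})_{ii} \le (l^{(j^*)}_i + u^{(j^*)}_i)(P^*_x)_i - l^{(j^*)}_i u^{(j^*)}_i$ yields $\bigl((P^*_x)_i - l^{(j^*)}_i\bigr)\bigl((P^*_x)_i - u^{(j^*)}_i\bigr) \le 0$, i.e., $(P^*_x)_i \in [l^{(j^*)}_i, u^{(j^*)}_i]$. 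Writing the envelope quadratic as $(a+b)t - ab = t^2 - (a-t)(b-t)$ with $t = (P^*_x)_i$, the desired inequality reduces to $(t - l^{(j^*)}_i)(u^{(j^*)}_i - t) \le (t - l_i)(u_i - t)$. Since $t \in [l^{(j^*)}_i, u^{(j^*)}_i]$ all four factors are nonnegative, and the bound relations $l_i \le l^{(j^*)}_i$ and $u^{(j^*)}_i \le u_i$ make each factor on the left no larger than its counterpart on the right, establishing the inequality.

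The main obstacle is precisely this last monotonicity check. Unlike the LP case, where the bound-dependent constraint is a simple linear envelope, here the constraint is quadratic in the input bounds, so one must first certify---crucially via the positive semidefiniteness of $P^*$---that $(P^*_x)_i$ lies in the tighter box $[l^{(j^*)}_i, u^{(j^*)}_i]$ before the comparison of the two parabolas can be carried out. Once that localization is secured, the remaining factor-by-factor comparison is routine, and feasibility of $P^*$ for the unpartitioned SDP follows.
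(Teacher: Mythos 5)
Your proof is correct, and while it shares the paper's overall skeleton---transfer feasibility from a subproblem to the unpartitioned SDP by checking the one bound-dependent constraint $\diag(P_{xx}) \le (l+u)\odot P_x - l\odot u$---your treatment of the crux inequality is genuinely different and, in one respect, tighter than the paper's. The paper fixes an arbitrary $j$, proves the full inclusion $\mathcal{N}_\text{SDP}^{(j)} \subseteq \mathcal{N}_\text{SDP}$ by writing $l_i' = l_i + \Delta l_i$, $u_i' = u_i - \Delta u_i$, and runs a sign case analysis on $\Delta_i = (\Delta l_i - \Delta u_i)(P_x)_i - (\Delta l_i u_i - \Delta u_i l_i - \Delta u_i \Delta l_i)$; in doing so it bounds $(P_x)_i$ above by $u_i'$ and below by $l_i'$ without deriving this localization from the constraints of $\mathcal{N}_\text{SDP}^{(j)}$ alone, and its opening claim that $-l_i u_i \ge -l_i' u_i'$ holds ``by definition'' tacitly relies on a sign condition such as $l \le 0 \le u$ (it fails, e.g., for $l_i = 1$, $u_i = 10$, $l_i' = 1$, $u_i' = 2$). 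Your argument plugs exactly these gaps: the $2\times 2$ principal minor of $P \succeq 0$ with $P_1 = 1$ gives $(P_{xx})_{ii} \ge (P_x)_i^2$, which combined with the subproblem's envelope constraint yields $\bigl((P_x)_i - l_i'\bigr)\bigl((P_x)_i - u_i'\bigr) \le 0$ and hence the box localization intrinsically, and your factor-by-factor comparison $(t - l_i')(u_i' - t) \le (t - l_i)(u_i - t)$ for $t \in [l_i', u_i']$ needs no case split and no sign assumption on the bounds. The only (immaterial) concession is that you certify feasibility just for the optimizer of the best part $j^*$ rather than the full set inclusion for every $j$; either suffices for \eqref{eq: improving_the_sdp_relaxation_bound}.
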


\begin{proof}
See Appendix \ref{sec: proof_of_partition_improvement_sdp}.
\end{proof}
\hspace*{\fill}

Proposition \ref{prop: improving_the_sdp_relaxation_bound} guarantees that partitioning yields a tighter SDP relaxation. However, it is not immediately clear how to design the partition in order to maximally reduce the relaxation error. Indeed, a poorly designed partition may even yield an equality in the bound \eqref{eq: improving_the_sdp_relaxation_bound}. One notable challenge in designing the SDP partition relates to an inherent difference between the SDP relaxation and the LP relaxation. With the LP relaxation, the effect of partitioning can be visualized by how the geometry of the feasible set changes; see Figure \ref{fig: weight-based_partition}. However, with the SDP, the relaxation comes from dropping the nonconvex rank constraint, the geometry of which is more abstract and harder to exploit.

In the next section, we develop a bound measuring how far the SDP solution is from being rank-1, which corresponds to an exact relaxation, where the improvement in \eqref{eq: improving_the_sdp_relaxation_bound} is as good as possible. By studying the geometry of the SDP feasible set through this more tractable bound, we find that the partition design for the SDP naturally reduces to a uniform partition along the coordinate axes of the input set.

\subsection{Motivating Partition}
\label{sec: motivating_partition_sdp}

In this section, we seek the form of a partition that best reduces the SDP relaxation error. By restricting our focus to ReLU networks with one hidden layer, we develop a simple necessary condition for the SDP relaxation to be exact, i.e., for the matrix $P$ to be rank-1. We then work on the violation of this condition to define a measure of how close $P$ is to being rank-1 in the case it has higher rank. Next, we develop a tractable upper bound on this rank-1 gap. Finally, we formulate an optimization problem in which we search for a \hl{coordinate-wise} partition of the input uncertainty set that minimizes our upper bound. \hl{Since this partition robustly minimizes our rank-1 gap-based upper bound on the relaxation error, we refer to it as ``worst-case optimal,'' similarly to our previous worst-case optimal LP partition.} We show that such a \hl{coordinate-wise} worst-case optimal SDP partition takes the form of a uniform division of the input set. The result motivates the use of uniform partitions of the input uncertainty set, and in Section \ref{sec: partitioning_scheme_sdp}, we answer the question of which coordinate is best to uniformly partition along. Note that, despite the motivating partition being derived for networks with one hidden layer, the relaxation tightening in Proposition \ref{prop: improving_the_sdp_relaxation_bound} still holds for multi-layer networks. Indeed, the experiments in Section \ref{sec: simulation_results} will show that the resulting SDP branching scheme design maintains a relatively constant efficacy as the number of layers increases.
	
	\begin{proposition}[Necessary condition for exact SDP]
	\label{prop: rank-1_necessary_condition}
	Let $P^*\in\mathbb{S}^{n_x+n_z+1}$ denote a solution to the semidefinite programming relaxation \eqref{eq: sdp_relaxation}. If the relaxation is exact, meaning that $\rank(P^*)=1$, then the following conditions hold:
	\begin{equation}
	    \tr(P_{xx}^*) = \|P_x^*\|_2^2, \qquad \tr(P_{zz}^*) = \|P_z^*\|_2^2. \label{eq: rank-1_necessary_condition}
	\end{equation}
	\end{proposition}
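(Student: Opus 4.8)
The plan is to exploit the two structural facts we are handed about $P^*$: that it is positive semidefinite (by feasibility in $\mathcal{N}_\text{SDP}$) and that it has rank exactly $1$. Together these force $P^*$ to be an outer product, and the claimed identities will then fall out of the elementary fact that $\tr(vv^\top) = \|v\|_2^2$ applied blockwise. So the entire argument is a factorization followed by a two-line trace computation; there is no genuine inequality or optimization to wrestle with here.

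Concretely, I would first invoke the spectral decomposition: since $P^*\succeq 0$ and $\rank(P^*)=1$, there exists a vector $v\in\mathbb{R}^{n_x+n_z+1}$ with $P^* = vv^\top$. Partitioning $v$ conformally with the block-indexing of $P$ as $v = (v_0,\, v_x^\top,\, v_z^\top)^\top$, where $v_0\in\mathbb{R}$, $v_x\in\mathbb{R}^{n_x}$, and $v_z\in\mathbb{R}^{n_z}$, the block structure of the outer product gives $P_1^* = v_0^2$, $P_x^* = v_0 v_x$, $P_z^* = v_0 v_z$, $P_{xx}^* = v_x v_x^\top$, and $P_{zz}^* = v_z v_z^\top$. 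The feasibility constraint $P_1^*=1$ then yields $v_0^2=1$; replacing $v$ by $-v$ if necessary (which leaves $P^*=vv^\top$ unchanged) I may take $v_0=1$, so that $P_x^* = v_x$ and $P_z^* = v_z$. This is precisely the factorization $P^*=\begin{bmatrix}1\\ x^*\\ z^*\end{bmatrix}\begin{bmatrix}1 & x^{*\top} & z^{*\top}\end{bmatrix}$ recorded in the text, with $x^*=v_x$ and $z^*=v_z$.

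It then remains only to take traces of the two relevant blocks. Since $P_{xx}^* = v_x v_x^\top$, we get $\tr(P_{xx}^*) = \tr(v_x v_x^\top) = \|v_x\|_2^2 = \|P_x^*\|_2^2$, and identically $\tr(P_{zz}^*) = \tr(v_z v_z^\top) = \|v_z\|_2^2 = \|P_z^*\|_2^2$, which is exactly \eqref{eq: rank-1_necessary_condition}. The only point requiring a sentence of care—and the nearest thing to an obstacle—is the sign normalization of $v_0$; one must note that flipping the sign of $v$ is harmless because only the product $vv^\top$ enters, so fixing $v_0=1$ costs no generality. Everything else is immediate from $\tr(uu^\top)=\|u\|_2^2$, so I would keep the write-up short and emphasize that these equalities are \emph{necessary} (a rank-$1$ $P^*$ must satisfy them) rather than sufficient, which is what makes their violation a usable surrogate measure of the rank-$1$ gap in the subsequent development.
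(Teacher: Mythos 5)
Your proof is correct and follows essentially the same route as the paper's: factor the rank-$1$ PSD matrix as $P^* = vv^\top$, read off the blocks, and apply $\tr(uu^\top) = \|u\|_2^2$. The only difference is that you make explicit the normalization step (using $P_1^* = 1$ and a sign flip to fix $v_0 = 1$) that the paper's proof passes over silently when it writes the factorization with leading entry $1$; this is a harmless refinement, and in fact even without it the identities follow since $v_0^2 = 1$ gives $\|v_0 v_x\|_2^2 = \|v_x\|_2^2$.
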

	
	\begin{proof}
	    Since the SDP relaxation is exact, it holds that $\rank(P^*)=1$. Therefore, $P^*$ can be expressed as
	    \begin{equation*}
	        P^* = \begin{bmatrix} 1 \\ v \\ w \end{bmatrix}
	        \begin{bmatrix} 1 & v^\top & w^\top \end{bmatrix} = \begin{bmatrix}
	        1 & v^\top & w^\top \\
	        v & vv^\top & vw^\top \\
	        w & wv^\top & ww^\top
	        \end{bmatrix}
	    \end{equation*}
	    for some vectors $v\in\mathbb{R}^{n_x}$ and $w\in\mathbb{R}^{n_z}$. Recall the block decomposition of $P^*$:
	    \begin{equation*}
	        P^* = \begin{bmatrix}
	        P_1^* & P_x^{*\top} & P_z^{*\top} \\
	        P_x^* & P_{xx}^* & P_{xz}^* \\
	        P_z^* & P_{zx}^* & P_{zz}^*
	        \end{bmatrix}.
	    \end{equation*}
	    Equating coefficients, we find that $P_{xx}^* = vv^\top = P_x^* P_x^{*\top}$ and $P_{zz}^* = ww^\top = P_z^* P_z^{*\top}$. Therefore,
	    \begin{equation*}
	        \tr(P_{xx}^*) = \tr(P_x^* P_x^{*\top}) = \tr(P_x^{*\top} P_x^*) = \|P_x^*\|_2^2,
	    \end{equation*}
	    proving the first condition in \eqref{eq: rank-1_necessary_condition}. The second condition follows in the same way.
	\end{proof}
	\hspace*{\fill}

Enforcing the conditions \eqref{eq: rank-1_necessary_condition} as constraints in the SDP relaxation may assist in pushing the optimization variable $P$ towards a rank-1 solution. However, because the conditions in \eqref{eq: rank-1_necessary_condition} are nonlinear equality constraints in the variable $P$, we cannot impose them directly on the SDP without making the problem nonconvex. Instead, we will develop a convex method based on the rank-1 conditions \eqref{eq: rank-1_necessary_condition} that can be used to motivate the SDP solution to have a lower rank.
	
	In the general case that $\rank(P)=r\ge 1$, $P$ may be written as $P=VV^\top$, where
	\begin{equation*}
	    V = \begin{bmatrix}
	    e^\top \\ X \\ Z
	    \end{bmatrix}, ~ e\in\mathbb{R}^r, ~ X\in\mathbb{R}^{n_x\times r}, ~ Z\in\mathbb{R}^{n_z\times r},
	\end{equation*}
	and where the vector $e$ satisfies the equation $e^\top e = \|e\|_2^2 = 1$. The $i^\text{th}$ row of $X$ (respectively, $Z$) is denoted by $X_i^\top \in\mathbb{R}^{1\times r}$ (respectively, $Z_i^\top\in\mathbb{R}^{1\times r}$). Under this expansion, we find that $P_x = Xe$, $P_z = Ze$, $P_{xx} = XX^\top$, and $P_{zz} = ZZ^\top$. Therefore, the conditions \eqref{eq: rank-1_necessary_condition} can be written as
	\begin{equation*}
	    \tr(XX^\top) = \|Xe\|_2^2, \qquad \tr(ZZ^\top) = \|Ze\|_2^2.
	\end{equation*}
	
	To simplify the subsequent analysis, we will restrict our attention to the first of these two necessary conditions for $P$ to be rank-1. As the simulation results in Section \ref{sec: simulation_results} show, this restriction still yields significant reduction in relaxation error. Now, note that
	\begin{equation*}
	    \tr(XX^\top) = \sum_{i=1}^{n_x}(XX^\top)_{ii} = \sum_{i=1}^{n_x} \|X_i\|_2^2,
	\end{equation*}
	where $(XX^\top)_{ii}$ is the $(i,i)$ element of the matrix $XX^\top$, and also that
	\begin{equation*}
	    \|Xe\|_2^2 = \sum_{i=1}^{n_x} (Xe)_i^2 = \sum_{i=1}^{n_x} (X_i^\top e)^2,
	\end{equation*}
	where $(Xe)_i$ is the $i^\text{th}$ element of the vector $Xe$. Therefore, the rank-1 necessary condition is equivalently written as
	\begin{equation*}
	    g(P) \coloneqq \sum_{i=1}^{n_x} (\|X_i\|_2^2 - (X_i^\top e)^2) = 0,
	\end{equation*}
	where $g(P)$ serves as a measure of the rank-1 gap. Note that $g(P)$ is solely determined by $P=VV^\top$, even though it is written in terms of $X$ and $e$, which are blocks of $V$. In general, $g(P)\ge 0$ when $\rank(P)\ge 1$.
	
	\begin{lemma}[Rank-1 gap]
	\label{lem: rank-1_gap}
	Let $P\in\mathbb{S}^{n_x+n_z+1}$ be an arbitrary feasible point for the SDP relaxation \eqref{eq: sdp_relaxation}. The rank-1 gap $g(P)$ is nonnegative, and is zero if $P$ is rank-1.
	\end{lemma}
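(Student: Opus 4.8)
The plan is to prove both claims directly from the rank-$r$ factorization $P = VV^\top$ introduced just above the lemma, with $V = \begin{bmatrix} e^\top \\ X \\ Z \end{bmatrix}$, relying crucially on the normalization $\|e\|_2^2 = 1$. This normalization is legitimate because the first row of any factor $V$ of $P$ is $e^\top$, so $P_1 = (VV^\top)_{11} = \|e\|_2^2$, and feasibility of $P$ for $\mathcal{N}_\text{SDP}$ forces $P_1 = 1$.

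First I would establish nonnegativity. Since $g(P) = \sum_{i=1}^{n_x}\bigl(\|X_i\|_2^2 - (X_i^\top e)^2\bigr)$ is a finite sum, it suffices to show that each summand is nonnegative. Fixing an index $i$ and applying the Cauchy--Schwarz inequality to the inner product $X_i^\top e$ of the two $r$-vectors $X_i$ and $e$ gives $(X_i^\top e)^2 \le \|X_i\|_2^2 \|e\|_2^2$. Because $\|e\|_2^2 = 1$, the right-hand side collapses to $\|X_i\|_2^2$, so $\|X_i\|_2^2 - (X_i^\top e)^2 \ge 0$ for every $i\in\{1,2,\dots,n_x\}$. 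Summing over $i$ then yields $g(P) \ge 0$, as claimed.

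Next I would handle the rank-$1$ case. If $\rank(P) = 1$, then $r = 1$, so each row $X_i^\top$ is a scalar and $e$ is a scalar satisfying $e^2 = \|e\|_2^2 = 1$. In that case $(X_i^\top e)^2 = X_i^2 e^2 = X_i^2 = \|X_i\|_2^2$, so each summand vanishes and hence $g(P) = 0$. Alternatively, one may invoke Proposition \ref{prop: rank-1_necessary_condition}: its first condition $\tr(P_{xx}^*) = \|P_x^*\|_2^2$ reads, under the factorization, as $\tr(XX^\top) = \|Xe\|_2^2$, i.e.\ $\sum_{i=1}^{n_x}\|X_i\|_2^2 = \sum_{i=1}^{n_x}(X_i^\top e)^2$, which is precisely the statement $g(P) = 0$.

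I do not anticipate a serious obstacle, as the argument is elementary once the factorization is in place. The only point requiring care is confirming that the normalization $\|e\|_2 = 1$ is valid; this is exactly where the constraint $P_1 = 1$ from the definition of $\mathcal{N}_\text{SDP}$ enters, guaranteeing that Cauchy--Schwarz collapses cleanly to the desired per-coordinate bound. One may also remark that the argument shows each coordinate contributes nonnegatively, so $g(P)$ genuinely measures a cumulative gap summed over the input coordinates.
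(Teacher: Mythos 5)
Your proof is correct and takes essentially the same route as the paper's: Cauchy--Schwarz applied coordinatewise with the normalization $\|e\|_2 = 1$ (forced by the feasibility constraint $P_1 = 1$) gives nonnegativity, and the rank-$1$ case collapses to the scalar identity $X_i^2 - X_i^2 e^2 = 0$ since $r=1$. Your explicit justification that $P_1 = \|e\|_2^2$ and the alternative derivation via Proposition~\ref{prop: rank-1_necessary_condition} are fine additions but do not change the substance of the argument.
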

	\begin{proof}
	    By the Cauchy-Schwarz inequality, we have that $|X_i^\top e| \le \|X_i\|_2 \|e\|_2$ for all $i\in\{1,2,\dots,n_x\}$. Since $P$ is feasible for \eqref{eq: sdp_relaxation} we also have that $\|e\|_2 = 1$, so squaring both sides of the inequality gives that $(X_i^\top e)^2 \le \|X_i\|_2^2$. Summing these inequalities over $i$ gives
	    \begin{equation*}
	        g(P) = \sum_{i=1}^{n_x} (\|X_i\|_2^2 - (X_i^\top e)^2) \ge 0.
	    \end{equation*}
	    If $P$ is rank-1, then the dimension $r$ of the vectors $e$ and $X_i$ is equal to $1$. That is, $e,X_i\in\mathbb{R}$. Hence, $\|X_i\|_2 = |X_i|$ and $|e| = \|e\|_2 = 1$, yielding $\|X_i\|_2^2 - (X_i^\top e)^2 = X_i^2 - X_i^2 e^2 = 0$. Therefore, $g(P)=0$ in the case that $\rank(P)=1$.
	\end{proof}
	\hspace*{\fill}

	Since $g(P) = 0$ is necessary for $P$ to be rank-1 and $g(P) \ge 0$, it is desirable to make $g(P^*)$ as small as possible at the optimal solution $P^*$ of the partitioned SDP relaxation. Indeed, this is our partitioning motivation: we seek to partition the input uncertainty set to minimize $g(P^*)$, in order to influence $P^*$ to be of low rank. However, there is a notable hurdle with this approach. In particular, the optimal solution $P^*$ depends on the partition we choose, and finding a partition to minimize $g(P^*)$ in turn depends on $P^*$ itself. To overcome this cyclic dependence, we propose first bounding $g(P^*)$ by a worst-case upper bound, and then choosing an optimal partition to minimize the upper bound. This will make the partition design tractable, resulting in a closed-form solution.
	
	To derive the upper bound on the rank-1 gap at optimality, let $\{\mathcal{X}^{(j)} : j\in\{1,2,\dots,p\}\}$ denote the partition of $\mathcal{X}$. For the $j^{\text{th}}$ input part $\mathcal{X}^{(j)}$, denote the corresponding input bounds by $l^{(j)},u^{(j)}$. The upper bound is derived below.
	
	\begin{lemma}[Rank-1 gap upper bound]
	\label{lem: rank-1_gap_upper_bound}
	The rank-1 gap at the solution $P^*$ of the partitioned SDP satisfies
	\begin{equation}
        0 \le g(P^*) \le \frac{1}{4}\sum_{i=1}^{n_x} \max_{j\in\{1,2,\dots,p\}} (u_i^{(j)}-l_i^{(j)})^2. \label{eq: rank-1_gap_bound}
    \end{equation}
	\end{lemma}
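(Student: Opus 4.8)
The plan is to split the claim into its two halves. The lower bound $g(P^*)\ge 0$ requires no new work: since $P^*$ is in particular a feasible point of the partitioned SDP relaxation, Lemma~\ref{lem: rank-1_gap} applies directly and gives $g(P^*)\ge 0$. The substance of the result is therefore the upper bound, and the key realization is that the per-coordinate terms $\|X_i\|_2^2 - (X_i^\top e)^2$ comprising $g(P^*)$ are each individually controlled by the diagonal constraint already present in the SDP feasible set $\mathcal{N}_\text{SDP}$.

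First I would fix the part of the partition to which the optimizer belongs. The partitioned SDP value is $\max_{j\in\{1,2,\dots,p\}} \hat{f}^*_\text{SDP}(\mathcal{X}^{(j)})$, so the solution $P^*$ is feasible for the relaxation over some part $\mathcal{X}^{(j)}$ whose input bounds are $l^{(j)},u^{(j)}$. Writing $P^* = VV^\top$ with the block factorization $V = \begin{bmatrix} e^\top \\ X \\ Z \end{bmatrix}$ introduced above, I would record the identities $P_x^* = Xe$ and $P_{xx}^* = XX^\top$, so that $(P_x^*)_i = X_i^\top e$ and $(P_{xx}^*)_{ii} = \|X_i\|_2^2$. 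The diagonal constraint $\diag(P_{xx}) \le (l^{(j)}+u^{(j)})\odot P_x - l^{(j)}\odot u^{(j)}$ from $\mathcal{N}_\text{SDP}$ then reads, in coordinate $i$,
\begin{equation*}
\|X_i\|_2^2 \le (l_i^{(j)}+u_i^{(j)})(X_i^\top e) - l_i^{(j)} u_i^{(j)}.
\end{equation*}

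Next I would subtract $(X_i^\top e)^2$ from both sides to expose the summand of the rank-1 gap. Setting $t_i = X_i^\top e = (P_x^*)_i$ and completing the square yields
\begin{equation*}
\|X_i\|_2^2 - (X_i^\top e)^2 \le -(t_i - l_i^{(j)})(t_i - u_i^{(j)}) = (t_i - l_i^{(j)})(u_i^{(j)} - t_i).
\end{equation*}
The right-hand side is a downward-opening quadratic in the scalar $t_i$, maximized at $t_i = \tfrac{1}{2}(l_i^{(j)}+u_i^{(j)})$ with value $\tfrac{1}{4}(u_i^{(j)}-l_i^{(j)})^2$. Hence each coordinate gap satisfies $\|X_i\|_2^2 - (X_i^\top e)^2 \le \tfrac{1}{4}(u_i^{(j)}-l_i^{(j)})^2 \le \tfrac{1}{4}\max_{j\in\{1,2,\dots,p\}}(u_i^{(j)}-l_i^{(j)})^2$, and summing over $i\in\{1,2,\dots,n_x\}$ gives the claimed bound.

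I do not expect a serious obstacle here; the only subtle point is conceptual rather than technical, namely recognizing that the SDP's diagonal constraint encodes exactly the interval-membership information $l_i^{(j)}\le (P_x^*)_i \le u_i^{(j)}$ through the product form $-(t_i-l_i^{(j)})(t_i-u_i^{(j)})$, which collapses the bound into a one-dimensional quadratic maximization and produces the factor $\tfrac{1}{4}$. Passing to the maximum over parts $j$ (rather than tracking the specific part containing $P^*$) is what makes the bound uniform and partition-agnostic, which is precisely the form needed to minimize the right-hand side over candidate partitions in the next section.
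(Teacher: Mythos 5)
Your proof is correct, and its skeleton is the same as the paper's: the lower bound is quoted from Lemma~\ref{lem: rank-1_gap}, and the upper bound is obtained coordinatewise from the feasibility constraint $\diag(P_{xx}) \le (l^{(j)}+u^{(j)})\odot P_x - l^{(j)}\odot u^{(j)}$, followed by a maximum over parts and a sum over coordinates. The only real difference is in how the per-coordinate supremum is evaluated. The paper argues geometrically (its Figure~\ref{fig: sdp_geometry}): it rewrites the constraint as $\|X_i - \tfrac{1}{2}(u_i^{(j)}+l_i^{(j)})e\|_2^2 \le \bigl(\tfrac{1}{2}(u_i^{(j)}-l_i^{(j)})\bigr)^2$, so that $X_i$ lies in a ball of radius $r_i^{(j)}$ about a point on the $e$-axis, introduces auxiliary lengths $a,b\ge 0$ with $\|X_i\|_2^2 = a^2 + (X_i^\top e)^2$ and $r_i^{(j)2} = (a+b)^2 + (X_i^\top e - \tfrac{1}{2}(u_i^{(j)}+l_i^{(j)}))^2$, and drops the nonnegative cross terms to get $\|X_i\|_2^2 - (X_i^\top e)^2 \le r_i^{(j)2}$. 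You instead subtract $(X_i^\top e)^2$ from the constraint directly and maximize the resulting downward-opening quadratic $(t_i - l_i^{(j)})(u_i^{(j)} - t_i)$ in the scalar $t_i = X_i^\top e$, which gives the same value $\tfrac{1}{4}(u_i^{(j)}-l_i^{(j)})^2$ at the midpoint. Your algebraic route is the completion-of-square underlying the paper's picture; it buys a shorter, figure-free, and arguably more self-contained argument (no appeal to the sign conventions of $a$ and $b$ read off a diagram), while the paper's version buys the geometric intuition — the gap is the squared distance from $X_i$ to the span of $e$, bounded by the ball's radius — that motivates the uniform-partition result of Theorem~\ref{thm: optimal_sdp_partition_rank-1_gap}. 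One minor presentational difference: you fix the part $j$ containing $P^*$ and then relax to $\max_j$, whereas the paper bounds $g(P^*)$ from the outset by $\max_j \sup_{P\in\mathcal{N}_\text{SDP}^{(j)}}g(P)$; both are valid and yield the identical bound.
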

	
	\begin{proof}
	    The left inequality is a direct result of Lemma \ref{lem: rank-1_gap}. For the right inequality, note that
	    \begin{equation}
	    g(P^*) \le \max_{j\in\{1,2,\dots,p\}}\sup_{P\in\mathcal{N}_\textup{SDP}^{(j)},~P_x\in\mathcal{X}^{(j)}} g(P) \le \sum_{i=1}^{n_x} \max_{j\in\{1,2,\dots,p\}}\sup_{P\in\mathcal{N}_\textup{SDP}^{(j)},~P_x\in\mathcal{X}^{(j)}} (\|X_i\|_2^2 - (X_i^\top e)^2). \label{eq: rank-1_gap_bound_intermediate}
	    \end{equation}
	Let us focus on the optimization over the $j^\text{th}$ part of the partition, namely,
	\begin{equation*}
	    \sup_{P\in\mathcal{N}_\textup{SDP}^{(j)},~P_x\in\mathcal{X}^{(j)}} (\|X_i\|_2^2 - (X_i^\top e)^2).
	\end{equation*}
	To bound this quantity, we analyze the geometry of the SDP relaxation over part $j$, following the methodology of \citet{raghunathan2018semidefinite}; see Figure \ref{fig: sdp_geometry}.
	
	\begin{figure}[tbh]
        \centering
        \includegraphics[width=0.6\linewidth]{figures/sdp_geometry.tikz}
        \caption{Geometry of the SDP relaxation in coordinate $i$ over part $j$ of the partition. The shaded region shows the feasible $X_i$ satisfying the input constraint \citep{raghunathan2018semidefinite}.}
        \label{fig: sdp_geometry}
    \end{figure}
	
	The shaded circle represents the set of feasible $X_i$ over part $j$ of the partition, namely, those satisfying the $i^\text{th}$ coordinate of the constraint $\diag(P_{xx}) \le (l^{(j)}+u^{(j)})\odot P_x - l^{(j)}\odot u^{(j)}$. To understand this, note that the constraint is equivalent to $\|X_i\|_2^2 \le (l_i^{(j)}+u_i^{(j)}) X_i^\top e - l_i^{(j)} u_i^{(j)}$, or, more geometrically written, that $\|X_i-\tfrac{1}{2}(u_i^{(j)}+l_i^{(j)})e\|_2^2 \le \left( \tfrac{1}{2}(u_i^{(j)}-l_i^{(j)}) \right)^2$. This shows that $X_i$ is constrained to a 2-norm ball of radius $r_i^{(j)}=\tfrac{1}{2}(u_i^{(j)}-l_i^{(j)})$ centered at $\tfrac{1}{2}(u_i^{(j)}+l_i^{(j)})e$, as shown in Figure \ref{fig: sdp_geometry}.
    
    The geometry of Figure \ref{fig: sdp_geometry} immediately shows that $\|X_i\|_2^2 = a^2 + (X_i^\top e)^2$ and $r_i^{(j)2} = (a+b)^2 + (X_i^\top e - \tfrac{1}{2}(u_i^{(j)}+l_i^{(j)}))^2$, and therefore
    \begin{equation*}
        \|X_i\|_2^2 - (X_i^\top e)^2 = a^2 = r_i^{(j)2} - (X_i^\top e - \tfrac{1}{2}(u_i^{(j)}+l_i^{(j)}))^2 - 2ab - b^2.
    \end{equation*}
    Since $a$ and $b$ are nonnegative,
    \begin{gather*}
        \sup_{P\in\mathcal{N}_\textup{SDP}^{(j)},~P_x\in\mathcal{X}^{(j)}} \|X_i\|_2^2 - (X_i^\top e)^2 \\
	= \sup_{P\in\mathcal{N}_\textup{SDP}^{(j)},~P_x\in\mathcal{X}^{(j)}} (r_i^{(j)2} - (X_i^\top e - \tfrac{1}{2}(u_i^{(j)}+l_i^{(j)}))^2 -2ab - b^2) \\
        \le \sup_{P\in\mathcal{N}_\textup{SDP}^{(j)},~P_x\in\mathcal{X}^{(j)}} (r_i^{(j)2} - (X_i^\top e - \tfrac{1}{2}(u_i^{(j)}+l_i^{(j)}))^2 ) \le r_i^{(j)2} = \frac{1}{4}(u_i^{(j)}-l_i^{(j)})^2.
    \end{gather*}
    Thus, \eqref{eq: rank-1_gap_bound_intermediate} gives
    \begin{equation*}
        g(P^*) \le \frac{1}{4}\sum_{i=1}^{n_x} \max_{j\in\{1,2,\dots,p\}} (u_i^{(j)}-l_i^{(j)})^2,
    \end{equation*}
    as desired.
	\end{proof}
	\hspace*{\fill}

\hl{
 \begin{remark}
     The upper bound in \eqref{eq: rank-1_gap_bound} may not be tight in general, since, if the solution $P^*$ to the partitioned SDP is rank-1---which occurs under some technical conditions (see \citet{zhang2020tightness})---then $g(P^*)=0$, whereas the upper bound will be strictly positive whenever the input bounds are such that $l_i^{(j)} < u_i^{(j)}$ for some $i,j$.
 \end{remark}
 }

	With Lemma \ref{lem: rank-1_gap_upper_bound} in place, we now have an upper bound on the rank-1 gap at optimality, in terms of the input bounds $\{l^{(j)},u^{(j)}\}_{j=1}^p$ associated with the partition. At this point, we turn to minimizing the upper bound over all valid choices of $p$-part partitions of the input uncertainty set along a given coordinate. Note that, in order for $\{l^{(j)},u^{(j)}\}_{j=1}^p$ to define valid input bounds for a $p$-part partition, it must be that the union of the input parts cover the input uncertainty set. In terms of the input bounds, this leads to the constraint that $[l,u] = \cup_{j=1}^p [l^{(j)},u^{(j)}]$, where $[l,u] \coloneqq [l_1,u_1]\times[l_2,u_2]\times\cdots\times[l_{n_x},u_{n_x}]$, and similarly for $[l^{(j)},u^{(j)}]$. Since we consider the partition along a single coordinate $k$, this constraint becomes equivalent to $\cup_{j=1}^p [l_k^{(j)},u_k^{(j)}] = [l_k,u_k]$, because all other coordinates $i\ne k$ satisfy $l_i^{(j)}=l_i$ and $u_i^{(j)}=u_i$ for all $j$ by assumption. We now give the \hl{coordinate-wise} worst-case optimal partitioning scheme for the SDP that minimizes the upper bound in Lemma \ref{lem: rank-1_gap_upper_bound}, which turns out to be a uniform split:

    \begin{theorem}[\hl{Uniform coordinate-wise SDP partition}]
    \label{thm: optimal_sdp_partition_rank-1_gap}
    Let $k\in\{1,\dots,n_x\}$ and define $\mathcal{I}_k=\{1,2,\dots,n_x\}\setminus\{k\}$. Consider the optimization problem of finding the partition to minimize the upper bound \eqref{eq: rank-1_gap_bound}, namely
    \begin{equation}
        \begin{aligned}
        & \underset{\mathcal{P}=\{l^{(j)},u^{(j)}\}_{j=1}^p\subseteq\mathbb{R}^{n_x}}{\emph{minimize}} && h(\mathcal{P}) = \sum_{i=1}^{n_x} \max_{j\in\{1,2,\dots,p\}}(u_i^{(j)} - l_i^{(j)})^2 \\
	& \emph{subject to} && \bigcup_{j=1}^p [l_k^{(j)},u_k^{(j)}] = [l_k,u_k], & i\in\mathcal{I}_k,~j\in\{1,2,\dots,p\}, & \\
	&&& l_i^{(j)} = l_i, & i\in\mathcal{I}_k,~j\in\{1,2,\dots,p\}, \\
	&&& u_i^{(j)} = u_i, & i\in\mathcal{I}_k,~j\in\{1,2,\dots,p\},
        \end{aligned} \label{eq: optimal_sdp_partition_optimization}
    \end{equation}
    Consider also the uniform partition defined by $\bar{\mathcal{P}} = \{\bar{l}^{(j)},\bar{u}^{(j)}\}_{j=1}^p\subseteq\mathbb{R}^{n_x}$, where
    \begin{align*}
        \bar{l}_i^{(j)} &= \begin{aligned}
        \begin{cases}
        \frac{j-1}{p}(u_i-l_i)+l_i & \emph{if $i=k$}, \\
        l_i & \emph{otherwise},
        \end{cases}
        \end{aligned} \\
        \bar{u}_i^{(j)} &=
        \begin{aligned}
        \begin{cases}
        \frac{j}{p}(u_i-l_i)+l_i & \emph{if $i=k$}, \\
        u_i & \emph{otherwise},
        \end{cases}
        \end{aligned}
    \end{align*}
    for all $j\in\{1,2,\dots,p\}$. It holds that $\bar{\mathcal{P}}$ is a solution to \eqref{eq: optimal_sdp_partition_optimization}.
    \end{theorem}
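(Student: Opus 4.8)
The plan is to exploit the fact that the objective $h(\mathcal{P})$ decouples across input coordinates and that only the coordinate-$k$ term actually depends on the partition. Indeed, for every $i\in\mathcal{I}_k$ the feasibility constraints force $l_i^{(j)}=l_i$ and $u_i^{(j)}=u_i$ for all $j$, so $\max_{j}(u_i^{(j)}-l_i^{(j)})^2=(u_i-l_i)^2$ is a fixed constant that is the same for every feasible $\mathcal{P}$. Hence minimizing $h$ over the feasible set is equivalent to minimizing the single quantity $\phi(\mathcal{P})\coloneqq\max_{j\in\{1,\dots,p\}}(u_k^{(j)}-l_k^{(j)})^2$ subject to the covering constraint $\cup_{j=1}^p[l_k^{(j)},u_k^{(j)}]=[l_k,u_k]$. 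Since the interval widths are nonnegative and $t\mapsto t^2$ is increasing on $[0,\infty)$, this in turn reduces to minimizing the largest width $\max_j(u_k^{(j)}-l_k^{(j)})$.

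Next I would establish a lower bound on this largest width directly from the covering constraint. Writing $w_j=u_k^{(j)}-l_k^{(j)}$ and $L=u_k-l_k$, the union of the $p$ intervals has total length at most the sum of the individual lengths $\sum_{j=1}^p w_j$; since this union equals $[l_k,u_k]$, whose length is $L$, we obtain $L\le\sum_{j=1}^p w_j\le p\,\max_j w_j$. Therefore $\max_j w_j\ge L/p$, and consequently $\phi(\mathcal{P})\ge L^2/p^2=(u_k-l_k)^2/p^2$ for every feasible $\mathcal{P}$.

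It then remains to verify that the proposed uniform partition $\bar{\mathcal{P}}$ is feasible and attains this lower bound. Feasibility in the coordinates $i\in\mathcal{I}_k$ is immediate from the definitions $\bar{l}_i^{(j)}=l_i$ and $\bar{u}_i^{(j)}=u_i$. For coordinate $k$, the intervals $[\bar{l}_k^{(j)},\bar{u}_k^{(j)}]=[\tfrac{j-1}{p}(u_k-l_k)+l_k,\ \tfrac{j}{p}(u_k-l_k)+l_k]$ are consecutive, since the right endpoint of part $j$ equals the left endpoint of part $j+1$; thus their union telescopes to $[l_k,u_k]$, confirming the covering constraint. Each such interval has width exactly $\tfrac{1}{p}(u_k-l_k)$, so $\phi(\bar{\mathcal{P}})=(u_k-l_k)^2/p^2$, matching the lower bound. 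Combining with the fixed terms, $h(\bar{\mathcal{P}})=\sum_{i\ne k}(u_i-l_i)^2+(u_k-l_k)^2/p^2$ equals the minimum value of $h$ over the feasible set, so $\bar{\mathcal{P}}$ solves \eqref{eq: optimal_sdp_partition_optimization}.

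The only genuinely delicate step is the lower bound, where one must argue that overlapping intervals cannot do better than a uniform disjoint cover; this is exactly what the elementary subadditivity inequality (length of a union is at most the sum of the lengths) supplies, after which an averaging argument yields $\max_j w_j\ge L/p$. Everything else is bookkeeping: the decoupling of the objective across coordinates and the explicit width computation for the uniform grid.
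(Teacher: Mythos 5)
Your proof is correct and follows essentially the same route as the paper's: you verify feasibility of the uniform partition via the telescoping intervals, decouple the objective so only the coordinate-$k$ term matters, and obtain the lower bound $\max_j (u_k^{(j)}-l_k^{(j)}) \ge (u_k-l_k)/p$ from subadditivity of interval length combined with a max-versus-average argument, exactly as the paper does with Lebesgue measure and the inequality $\max_j(\cdot) \ge \frac{1}{p}\sum_j(\cdot)$. The only cosmetic difference is that you state a clean lower bound on the optimal value first and then show $\bar{\mathcal{P}}$ attains it, whereas the paper compares $h(\bar{\mathcal{P}})$ to $h(\mathcal{P})$ directly for an arbitrary feasible $\mathcal{P}$.
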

    
    \begin{proof}
        To prove the result, we show that the proposed $\bar{\mathcal{P}}$ is feasible for the optimization, and that $h(\bar{\mathcal{P}}) \le h(\mathcal{P})$ for all feasible $\mathcal{P}$. First, note that it is obvious by the definition of $\bar{\mathcal{P}}$ that $\bar{l}_i^{(j)}=l_i$ and $\bar{u}_i^{(j)}=u_i$ for all $i\in\{1,2,\dots,n_x\}\setminus \{k\}$ and all $j\in\{1,2,\dots,p\}$. Therefore, to prove that $\bar{\mathcal{P}}$ is feasible, it suffices to show that $\cup_{j=1}^p [\bar{l}_k^{(j)},\bar{u}_k^{(j)}] = [l_k,u_k]$. Indeed, since
        \begin{align*}
            \bar{u}_k^{(j)} = \frac{j}{p}(u_k-l_k)+l_k=\frac{(j+1)-1}{p}(u_k-l_k)+l_k=\bar{l}_k^{(j+1)}
        \end{align*}
        for all $j\in\{1,2,\dots,p-1\}$,
        \begin{equation*}
            \bar{l}_k^{(1)} = \frac{1-1}{p}(u_k-l_k)+l_k=l_k,
        \end{equation*}
        and
        \begin{equation*}
            \bar{u}_k^{(p)} = \frac{p}{p}(u_k-l_k)+l_k=u_k,
        \end{equation*}
        we have that
        \begin{equation*}
            \bigcup_{j=1}^p [\bar{l}^{(j)}_k,\bar{u}^{(j)}_k] = [\bar{l}_k^{(1)},\bar{u}_k^{(1)}]\cup[\bar{l}_k^{(2)},\bar{u}_k^{(2)}]\cup\cdots\cup[\bar{l}_k^{(p)},\bar{u}_k^{(p)}] = [\bar{l}_k^{(1)},\bar{u}_k^{(p)}] = [l_k,u_k].
        \end{equation*}
        Hence, $\bar{\mathcal{P}}=\{\bar{l}^{(j)},\bar{u}^{(j)}\}_{j=1}^p$ is feasible.
        
        The objective at the proposed feasible point can be computed as
        \begin{gather*}
            h(\bar{\mathcal{P}}) = \sum_{i=1}^{n_x} \max_{j\in\{1,2,\dots,p\}} (\bar{u}_i^{(j)} - \bar{l}_i^{(j)})^2 = \sum_{\substack{i=1\\i\ne k}}^{n_x} \max_{j\in\{1,2,\dots,p\}}(\bar{u}_i^{(j)}-\bar{l}_i^{(j)})^2 + \max_{j\in\{1,2,\dots,p\}}(\bar{u}_k^{(j)}-\bar{l}_k^{(j)})^2 \\
            = \sum_{\substack{i=1\\i\ne k}}^{n_x} \max_{j\in\{1,2,\dots,p\}} (u_i-l_i)^2 + \max_{j\in\{1,2,\dots,p\}} \left( \frac{j}{p}(u_k-l_k)+l_k - \frac{j-1}{p}(u_k-l_k)-l_k \right)^2 \\
            = \sum_{\substack{i=1 \\ i\ne k}}^{n_x}(u_i-l_i)^2 + \max_{j\in\{1,2,\dots,p\}}\left( \frac{1}{p}(u_k-l_k) \right)^2 = C + \frac{1}{p^2}(u_k-l_k)^2,
        \end{gather*}
        where $C\coloneqq \sum_{\substack{i=1 \\ i\ne k}}^{n_x}(u_i-l_i)^2$. Now, let $\mathcal{P}=\{l^{(j)},u^{(j)}\}_{j=1}^p$ be an arbitrary feasible point for the optimization \eqref{eq: optimal_sdp_partition_optimization}. Then by a similar analysis as above, the objective value at $\mathcal{P}$ satisfies
        \begin{gather*}
            h(\mathcal{P}) = \sum_{i=1}^{n_x} \max_{j\in\{1,2,\dots,p\}}(u_i^{(j)}-l_i^{(j)})^2 = \sum_{\substack{i=1 \\ i\ne k}}^{n_x} \max_{j\in\{1,2,\dots,p\}} (u_i^{(j)}-l_i^{(j)})^2 + \max_{j\in\{1,2,\dots,p\}}(u_k^{(j)}-l_k^{(j)})^2 \\
            = C + \max_{j\in\{1,2,\dots,p\}}(u_k^{(j)}-l_k^{(j)})^2 = C + \left(\max_{j\in\{1,2,\dots,p\}}(u_k^{(j)}-l_k^{(j)})\right)^2 \\
            \ge C + \left( \frac{1}{p}\sum_{j=1}^p (u_k^{(j)}-l_k^{(j)}) \right)^2 = C + \frac{1}{p^2}\left( \sum_{j=1}^p (u_k^{(j)}-l_k^{(j)})\right)^2.
        \end{gather*}
        Since $\mathcal{P}$ is feasible, it holds that $[l_k,u_k]= \bigcup_{j=1}^p [l_k^{(j)},u_k^{(j)}]$. Therefore, by subadditivity of Lebesgue measure $\mu$ on the Borel $\sigma$-algebra of $\mathbb{R}$, we have that
        \begin{equation*}
            u_k - l_k = \mu([l_k,u_k]) = \mu\left( \bigcup_{j=1}^p [l_k^{(j)},u_k^{(j)}] \right) \le \sum_{j=1}^p \mu([l_k^{(j)},u_k^{(j)}]) = \sum_{j=1}^p (u_k^{(j)} - l_k^{(j)}).
        \end{equation*}
        Substituting this into our above expressions, we conclude that
        \begin{align*}
            h(\bar{\mathcal{P}}) = C+\frac{1}{p^2}(u_k-l_k)^2 \le C+\frac{1}{p^2}\left( \sum_{j=1}^p (u_k^{(j)}-l_k^{(j)}) \right)^2 \le h(\mathcal{P}).
        \end{align*}
        Since $\mathcal{P}$ was an arbitrary feasible point for the optimization, this implies that $\bar{\mathcal{P}}$ is a solution to the optimization.
    \end{proof}
    \hspace*{\fill}

    \hl{It is important to note that, in general, an optimal partition of the input uncertainty set that minimizes the actual SDP relaxation error may not be along a coordinate axis. However, finding such an optimal general-form partition poses a significantly challenging open problem. Our Theorem~\ref{thm: optimal_sdp_partition_rank-1_gap} shows that, by restricting ourselves to coordinate-wise partitions, the analysis becomes tractable, and we find that the partition that uniformly splits the input set along the given coordinate is an optimal one (with respect to our worst-case upper bound \eqref{eq: rank-1_gap_bound}).} This gives a well-motivated, yet simple way to design a partition of the input uncertainty set in order to push the SDP relaxation towards being rank-1, thereby reducing relaxation error.

\subsection{SDP Branching Scheme}
\label{sec: partitioning_scheme_sdp}

With the motivating partition of Section \ref{sec: motivating_partition_sdp} now established, we turn our attention from the \textit{form} of a \hl{coordinate-wise} worst-case optimal SDP partition to the \textit{coordinate} of such a partition. In particular, we seek to find the best branching scheme to minimize relaxation error of the SDP. The results of Section \ref{sec: motivating_partition_sdp} suggest using a uniform partition, and in this section we seek to find which coordinate to apply the partitioning to. Similar to the LP relaxation, we derive an optimal branching scheme by first bounding the relaxation error in the worst-case sense.

\subsubsection{Worst-Case Relaxation Bound}
\label{sec: worst-case_relaxation_bound_sdp}

In the worst-case relaxation bound of Theorem \ref{thm: worst-case_relaxation_bound_sdp} below, and the subsequent \hl{coordinate-wise} worst-case optimal SDP branching scheme proposed in Theorem \ref{thm: optimal_sdp_partition}, we restrict our attention to a single hidden ReLU layer and make the following assumption on the weight matrix.

\begin{assumption}[Normalized rows]
\label{ass: normalized_rows}
The rows of the weight matrix are assumed to be normalized with respect to the $\ell_1$-norm, i.e., that $\|w_i\|_1 = 1$ for all $i\in\{1,2,\dots,n_z\}$.
\end{assumption}

We briefly remark that Assumption \ref{ass: normalized_rows} imposes no loss of generality, as it can be made to hold for any network by a simple rescaling. In particular, if the assumption does not hold, the network architecture can be rescaled as follows:
\begin{gather*}
    z = \relu(Wx) = \relu\left( \begin{bmatrix}
    w_1^\top \\ w_2^\top \\ \vdots \\ w_{n_z}^\top
    \end{bmatrix} x \right) \\
    = \relu\left( \diag(\|w_1\|_1,\|w_2\|_1,\dots,\|w_{n_z}\|_1) \begin{bmatrix}
    \frac{w_1^\top}{\|w_1\|_1} \\ \frac{w_2^\top}{\|w_2\|_1} \\ \vdots \\ \frac{w_{n_z}^\top}{\|w_{n_z}\|_1}
    \end{bmatrix} x \right) = W_\text{scale}\relu(W_\text{norm}x),
\end{gather*}
where $W_\text{scale}=\diag(\|w_1\|_1,\|w_2\|_1,\dots,\|w_{n_z}\|_1)\in\mathbb{R}^{n_z\times n_z}$ and
\begin{equation*}
    W_\text{norm} = \begin{bmatrix}
    \frac{w_1^\top}{\|w_1\|_1} \\ \frac{w_2^\top}{\|w_2\|_1} \\ \vdots \\ \frac{w_{n_z}^\top}{\|w_{n_z}\|_1}
    \end{bmatrix} \in\mathbb{R}^{n_z\times n_x}
\end{equation*}
are the scaling and normalized factors of the weight matrix $W$, respectively. The scaling factor can therefore be absorbed into the optimization cost vector $c$, yielding a problem with normalized rows as desired.

Before introducing the worst-case relaxation bound of Theorem \ref{thm: worst-case_relaxation_bound_sdp}, we state a short lemma that will be used in proving the relaxation bound.

\begin{lemma}[Bound on elements of PSD matrices]
\label{lem: bound_elements_psd_matrices}
Let $P\in\mathbb{S}^{n}$ be a positive semidefinite matrix. Then $|P_{ij}|\le \frac{1}{2}(P_{ii}+P_{jj})$ for all $i,j\in\{1,2,\dots,n\}$.
\end{lemma}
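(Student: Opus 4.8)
The plan is to prove the bound $|P_{ij}| \le \frac{1}{2}(P_{ii}+P_{jj})$ by exploiting positive semidefiniteness through carefully chosen test vectors. The key observation is that if $P \succeq 0$, then $v^\top P v \ge 0$ for all vectors $v$, and by restricting attention to the $2\times 2$ principal submatrix indexed by $\{i,j\}$, I can reduce the problem to a statement about $2\times 2$ positive semidefinite matrices.

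First I would note that any principal submatrix of a positive semidefinite matrix is itself positive semidefinite. In particular, the $2\times 2$ submatrix
\begin{equation*}
M = \begin{bmatrix} P_{ii} & P_{ij} \\ P_{ji} & P_{jj} \end{bmatrix}
\end{equation*}
satisfies $M \succeq 0$ (using $P_{ji}=P_{ij}$ since $P\in\mathbb{S}^n$). A positive semidefinite matrix has nonnegative diagonal entries and nonnegative determinant, so $P_{ii}\ge 0$, $P_{jj}\ge 0$, and $P_{ii}P_{jj} - P_{ij}^2 \ge 0$, which gives $|P_{ij}| \le \sqrt{P_{ii}P_{jj}}$.

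The final step is to apply the arithmetic-geometric mean inequality: $\sqrt{P_{ii}P_{jj}} \le \frac{1}{2}(P_{ii}+P_{jj})$, which holds precisely because $P_{ii}$ and $P_{jj}$ are nonnegative. Chaining these inequalities yields $|P_{ij}| \le \sqrt{P_{ii}P_{jj}} \le \frac{1}{2}(P_{ii}+P_{jj})$, as desired. The case $i=j$ is trivial since then the bound reads $|P_{ii}|\le P_{ii}$, which holds because $P_{ii}\ge 0$.

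I do not anticipate any serious obstacle here, as the result is a standard consequence of the positive semidefinite structure. The only point requiring a little care is justifying that the $2\times 2$ principal submatrix is positive semidefinite; this follows by testing $P\succeq 0$ against vectors supported on coordinates $i$ and $j$, i.e., restricting the quadratic form $v^\top P v$ to vectors $v$ with $v_k=0$ for $k\notin\{i,j\}$. Alternatively, one could give a direct proof by plugging $v = P_{jj}^{1/2}e_i - \sign(P_{ij})P_{ii}^{1/2}e_j$ (or a similar choice) into $v^\top P v \ge 0$, but the principal-submatrix argument is cleaner and avoids case analysis on signs.
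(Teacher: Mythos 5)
Your proof is correct and follows essentially the same route as the paper: both derive $|P_{ij}| \le \sqrt{P_{ii}P_{jj}}$ from the nonnegativity of the $2\times 2$ principal minor and then apply the AM--GM inequality. Your additional justification of why the principal submatrix is positive semidefinite (testing against vectors supported on $\{i,j\}$) is a detail the paper leaves implicit, but the argument is identical in substance.
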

\begin{proof}
    See Appendix \ref{sec: proof_of_bound_elements_psd_matrices}.
\end{proof}
\hspace*{\fill}

\begin{theorem}[Worst-case relaxation bound for SDP]
\label{thm: worst-case_relaxation_bound_sdp}
Consider a feedforward ReLU neural network with one hidden layer, and with the input uncertainty set $\mathcal{X}$. Let the network have input bounds $l,u\in\mathbb{R}^{n_x}$ and preactivation bounds $\hat{l},\hat{u}\in\mathbb{R}^{n_z}$. Consider also the relaxation error $\Delta \hl{\phi^*_{\textup{SDP}}}(\mathcal{X}) \coloneqq \hl{\hat{\phi}_{\textup{SDP}}}(\mathcal{X}) - \hl{\phi^*}(\mathcal{X})$. Let $P^*$ and $(x^*,z^*)$ be optimal solutions for the relaxation $\hl{\hat{\phi}_{\textup{SDP}}}(\mathcal{X})$ and the unrelaxed problem $\hl{\phi^*}(\mathcal{X})$, respectively. Given an arbitrary norm $\|\cdot\|$ on $\mathbb{R}^{n_x}$, it holds that
\begin{equation}
    \Delta \hl{\phi^*_{\textup{SDP}}}(\mathcal{X}) \le \sum_{i=1}^{n_z} \bigg( \relu(c_i)q(l,u) + \relu(-c_i)\min\{\hat{u}_i , \|w_i\|_* d_{\|\cdot\|}(\mathcal{X})\} \bigg), \label{eqn: worst-case_relaxation_bound_sdp}
\end{equation}
where $\|\cdot\|_*$ is the dual norm of $\|\cdot\|$, and where
\begin{equation*}
    q(l,u) = \max_{k\in\{1,2,\dots,n_x\}} \max\{|l_k|,|u_k|\}.
\end{equation*}
\end{theorem}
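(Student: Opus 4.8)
The plan is to mirror the structure of the LP worst-case bound (Theorem \ref{thm: generalized_worst-case_relaxation_bound}): decompose the relaxation error coordinate-wise, split each term according to the sign of $c_i$ via the identity $c_i = \relu(c_i) - \relu(-c_i)$, and bound the two pieces separately. Writing $\Delta f^*_{\text{SDP}}(\mathcal{X}) = c^\top P_z^* - c^\top z^* = \sum_{i=1}^{n_z} c_i\big((P_z^*)_i - z_i^*\big)$, it suffices to establish two per-coordinate estimates: a positive-cost bound $(P_z^*)_i - z_i^* \le q(l,u)$ and a negative-cost bound $z_i^* - (P_z^*)_i \le \min\{\hat u_i, \epsilon\|w_i\|_*\}$. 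Weighting these by $\relu(c_i)$ and $\relu(-c_i)$ respectively and summing reproduces exactly \eqref{eqn: worst-case_relaxation_bound_sdp}; existence of an $\epsilon$ with $\|P_x^* - x^*\| \le \epsilon$ follows from boundedness of $\mathcal{X}$.

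The negative-cost estimate is the easier one and runs parallel to the LP argument. Since $z_i^* = \relu(w_i^\top x^*)$ with $w_i^\top x^* \le \hat u_i$ and $\hat u_i \ge 0$, monotonicity of $\relu$ gives $z_i^* \le \hat u_i$, which together with $(P_z^*)_i \ge 0$ yields the bound $\hat u_i$. For the $\epsilon\|w_i\|_*$ alternative, the constraints $P_z \ge 0$ and $P_z \ge W P_x$ give $(P_z^*)_i \ge \relu(w_i^\top P_x^*)$, so that $z_i^* - (P_z^*)_i \le \relu(w_i^\top x^*) - \relu(w_i^\top P_x^*)$; $1$-Lipschitzness of $\relu$ together with the dual-norm (Cauchy--Schwarz/Hölder) inequality and $\|P_x^* - x^*\| \le \epsilon$ then gives $z_i^* - (P_z^*)_i \le |w_i^\top(x^* - P_x^*)| \le \epsilon\|w_i\|_*$. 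Taking the smaller of the two bounds completes this part.

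The main obstacle is the positive-cost estimate, which must exploit the semidefinite structure rather than merely the linear inequalities. Using $z_i^* \ge 0$ reduces the task to showing $(P_z^*)_i \le q(l,u)$, which I would obtain as follows. The principal $2\times 2$ minor of $P^* \succeq 0$ indexed by the leading coordinate (where $P_1 = 1$) and the $i^{\text{th}}$ output coordinate gives $(P_z^*)_i^2 \le (P_{zz}^*)_{ii}$. The diagonal coupling constraint rewrites $(P_{zz}^*)_{ii} = (W P_{xz}^*)_{ii} = \sum_{k=1}^{n_x} W_{ik}(P_{xz}^*)_{ki}$, and applying Lemma \ref{lem: bound_elements_psd_matrices} to the off-diagonal entries yields $|(P_{xz}^*)_{ki}| \le \tfrac12\big((P_{xx}^*)_{kk} + (P_{zz}^*)_{ii}\big)$. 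The crucial point is that this produces a self-referential inequality in $(P_{zz}^*)_{ii}$: invoking the row-normalization Assumption \ref{ass: normalized_rows}, $\sum_k |W_{ik}| = \|w_i\|_1 = 1$, the coefficient multiplying $(P_{zz}^*)_{ii}$ on the right is exactly $\tfrac12$, so it can be absorbed to the left to give $(P_{zz}^*)_{ii} \le \sum_k |W_{ik}|(P_{xx}^*)_{kk}$. Finally, the input-moment constraint $\diag(P_{xx}^*) \le (l+u)\odot P_x^* - l\odot u$, read coordinate-wise as a linear upper bound on $(P_{xx}^*)_{kk}$ over $(P_x^*)_k \in [l_k, u_k]$ (the range being forced by $(P_x^*)_k^2 \le (P_{xx}^*)_{kk}$ together with the same constraint), is maximized at an endpoint and delivers $(P_{xx}^*)_{kk} \le \max\{l_k^2, u_k^2\} \le q(l,u)^2$. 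Chaining these bounds gives $(P_{zz}^*)_{ii} \le \|w_i\|_1\, q(l,u)^2 = q(l,u)^2$, hence $(P_z^*)_i \le q(l,u)$ by $P_z \ge 0$, and the proof assembles as described.
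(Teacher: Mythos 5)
Your proposal is correct and follows essentially the same route as the paper: the coordinate-wise split by the sign of $c_i$, the chain $(P_z^*)_i^2 \le (P_{zz}^*)_{ii} \le \max_k (P_{xx}^*)_{kk} \le q(l,u)^2$ obtained from Lemma \ref{lem: bound_elements_psd_matrices} together with the $\ell_1$-normalization of Assumption \ref{ass: normalized_rows} to absorb the self-referential $\tfrac{1}{2}(P_{zz}^*)_{ii}$ term, and the endpoint maximization of the input-moment constraint yielding $(P_{xx}^*)_{kk} \le \max\{l_k^2,u_k^2\}$ are exactly the paper's argument. Your two minor departures---using the $2\times 2$ principal minor with $P_1=1$ in place of the paper's Schur-complement step to get $(P_z^*)_i^2 \le (P_{zz}^*)_{ii}$, and the direct $1$-Lipschitz estimate $z_i^* - (P_z^*)_i \le \relu(w_i^\top x^*) - \relu(w_i^\top P_x^*) \le \epsilon\|w_i\|_*$ in place of the paper's active-constraint reasoning on the reduced scalar supremum---are equivalent (and arguably cleaner) but do not change the substance.
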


\begin{proof}
    First, recall that $\mathcal{X}\subseteq\mathbb{R}^{n_x}$ is assumed to be compact, and is therefore bounded, and hence $d_{\|\cdot\|}(\mathcal{X}) < \infty$. The definitions of $P_x^*$ and $(x^*,z^*)$ give that
    \begin{equation}
        \Delta \hl{\phi^*_{\textup{SDP}}}(\mathcal{X}) = \sum_{i=1}^{n_z} c_i((P_{z}^*)_i - z_i^*) \le \sum_{i=1}^{n_z} \Delta \hl{\phi^*_i}, \label{eqn: worst-case_relaxation_bound_sdp_intermediate_1}
    \end{equation}
    where
    \begin{equation*}
    \begin{aligned}
        \Delta \hl{\phi^*_i} &= \sup \bigg\{ c_i((P_z)_i - z_i) : z_i = \relu(w_i^\top x), ~ P_z \ge 0, ~ P_z \ge WP_x, \\
        &\qquad \diag(P_{zz}) = \diag(WP_{xz}), ~ P_1 = 1, ~ P\succeq 0, ~ x,P_x\in\mathcal{X} \bigg\}
    \end{aligned}
    \end{equation*}
    for all $i\in\{1,2,\dots,n_z\}$. Defining the auxiliary variables $P_{\hat{z}} = WP_x$ and $\hat{z} = Wx$, this is equivalent to
    \begin{equation*}
    \begin{aligned}
        \Delta \hl{\phi^*_i} &= \sup \bigg\{ c_i((P_z)_i - z_i) : z_i = \relu(\hat{z}_i), ~ P_z \ge 0, ~ P_z \ge P_{\hat{z}}, ~ \diag(P_{zz}) = \diag(WP_{xz}), \\
        &\qquad P_1 = 1, ~ P\succeq 0, ~ P_{\hat{z}} = WP_x, ~ \hat{z}_i = w_i^\top x, ~ x,P_x\in\mathcal{X} \bigg\}.
    \end{aligned}
    \end{equation*}
    If $x,P_x\in\mathcal{X}$ and $\hat{z},P_{\hat{z}}$ satisfy $\hat{z}=Wx$ and $P_{\hat{z}}=WP_x$, then $|(P_{\hat{z}})_i-\hat{z}_i| = |w_i^\top(P_x-x)|\le \|w_i\|_*\|P_x-x\|\le\|w_i\|_* d_{\|\cdot\|}(\mathcal{X})$ for all $i\in\{1,2,\dots,n_z\}$ by the Cauchy-Schwarz inequality for dual norms. Therefore,
    \begin{equation*}
    \begin{aligned}
        \Delta \hl{\phi^*_i} &\le \sup\bigg\{ c_i((P_z)_i-z_i) : z_i = \relu(\hat{z}_i), ~ P_z\ge 0, ~ P_z \ge P_{\hat{z}}, \\
		&\qquad \diag(P_{zz}) = \diag(WP_{xz}), ~ P_1 = 1, ~ P\succeq 0, ~ \hat{l}\le \hat{z}, P_{\hat{z}} \le \hat{u}, \\
	   &\qquad |(P_{\hat{z}})_k-\hat{z}_k| \le \|w_k\|_* d_{\|\cdot\|}(\mathcal{X}) ~ \text{for all $k\in\{1,2,\dots,n_z\}$}, ~ \hat{z},P_{\hat{z}}\in\mathbb{R}^{n_z} \bigg\}.
    \end{aligned}
    \end{equation*}
    We now translate the optimization variables in the above problem from $\hat{z}\in\mathbb{R}^{n_z}$ and $P\in\mathbb{S}^{1+n_x+n_z}$ to the scalars $\hat{z}_i,(P_{\hat{z}})_i\in\mathbb{R}$. To this end, we note that if $P$ is feasible for the above supremum, then
    \begin{equation*}
        \diag(P_{zz})_i = \diag(WP_{xz})_i = w_i^\top (P_{xz})_i \le \|(P_{xz})_i\|_\infty \|w_i\|_1,
    \end{equation*}
    where $(P_{xz})_i$ is the $i^\text{th}$ column of the matrix $P_{xz}$, and the inequality again comes from Cauchy-Schwarz. By the weight matrix scaling assumption, this yields
    \begin{equation*}
        \diag(P_{zz})_i \le \|(P_{xz})_i\|_\infty.
    \end{equation*}
    Now, since $P$ is positive semidefinite, Lemma \ref{lem: bound_elements_psd_matrices} gives that
    \begin{gather*}
        \|(P_{xz})_i\|_\infty = \max_{k\in\{1,2,\dots,n_x\}}|(P_{xz})_i|_k = \max_{k\in\{1,2,\dots,n_x\}}|(P_{xz})_{ki}| \\
	\le \max_{k\in\{1,2,\dots,n_z\}}\frac{1}{2}\bigg((P_{xx})_{kk} + (P_{zz})_{ii}\bigg) = \frac{1}{2}(P_{zz})_{ii} + \frac{1}{2}\max_{k\in\{1,2,\dots,n_x\}}(P_{xx})_{kk}.
    \end{gather*}
    Noting that $(P_{zz})_{ii}=\diag(P_{zz})_i$, the bound of interest becomes
    \begin{equation*}
        \diag(P_{zz})_i \le \max_{k\in\{1,2,\dots,n_z\}}(P_{xx})_{kk}.
    \end{equation*}
    We now seek to bound $(P_{xx})_{kk}$. Recall that $(P_{xx})_{kk} = \diag(P_{xx})_k \le (l_k+u_k)(P_x)_k - l_ku_k$. If $(l_k+u_k)\ge 0$, then $(P_x)_k\le u_k$ implies that $(l_k+u_k)(P_x)_k \le (l_k+u_k)u_k$, and therefore $(P_{xx})_{kk} \le (l_k+u_k)u_k-l_ku_k=u_k^2$. On the other hand, if $(l_k+u_k)< 0$, then $(P_x)_k\ge l_k$ implies that $(l_k+u_k)(P_x)_k \le (l_k+u_k)l_k$, and therefore $(P_{xx})_{kk} \le (l_k+u_k)l_k-l_ku_k=l_k^2$. Hence, in all cases, it holds that
    \begin{equation*}
        (P_{xx})_{kk} \le \mathbb{I}(l_k+u_k\ge 0) u_k^2 + \mathbb{I}(l_k+u_k < 0) l_k^2.
    \end{equation*}
    We can further simplify this bound as follows. If $l_k+u_k\ge 0$, then $u_k\ge -l_k$ and $u_k\ge l_k$, implying $|l_k|\le u_k$, so $l_k^2\le u_k^2$ and therefore $u_k^2 = \max\{l_k^2,u_k^2\}$. On the other hand, if $l_k+u_k<0$, then an analogous argument shows that $l_k^2=\max\{l_k^2,u_k^2\}$. Hence, we conclude that the above bound on $(P_{xx})_{kk}$ can be rewritten as
    \begin{equation*}
        (P_{xx})_{kk}\le\max\{l_k^2,u_k^2\}.
    \end{equation*}
    Therefore, returning to the bound on $(P_{zz})_i$, we find that
    \begin{equation*}
        \diag(P_{zz})_i \le \max_{k\in\{1,2,\dots,n_x\}} \max\{l_k^2,u_k^2\},
    \end{equation*}
    for all $i\in\{1,2,\dots,n_z\}$. Now, note that since $P\succeq 0$, the Schur complement gives that
    \begin{equation*}
        \begin{bmatrix}
        P_{xx}-P_xP_x^\top & P_{xz} - P_xP_z^\top \\
        P_{xz}^\top - P_zP_x^\top & P_{zz}-P_zP_z^\top
        \end{bmatrix} \succeq 0,
    \end{equation*}
    which implies that
    \begin{equation*}
        \diag(P_{zz}) \ge \diag(P_zP_z^\top) = P_z\odot P_z.
    \end{equation*}
    Therefore, our upper bound on the diagonal elements of $P_{zz}$ yields that
    \begin{equation*}
        (P_z)_i \le \max_{k\in\{1,2,\dots,n_x\}} \max\{|l_k|,|u_k|\} = q(l,u).
    \end{equation*}
    Hence, we have derived a condition on the component $(P_z)_i$ that all feasible $P$ must satisfy. The supremum of interest may now be further upper bounded giving rise to
    \begin{equation}
    \begin{aligned}
        \Delta \phi_i^* &\le \sup\bigg\{ c_i((P_z)_i-z_i) : z_i = \relu(\hat{z}_i), ~ (P_z)_i \ge 0, ~ (P_z)_i \ge (P_{\hat{z}})_i, ~ (P_z)_i \le q(l,u), \\
        &\qquad \hat{l}_i \le \hat{z}_i,(P_{\hat{z}})_i\le \hat{u}_i, ~ |(P_{\hat{z}})_i - \hat{z}_i| \le  \|w_i\|_* d_{\|\cdot\|}(\mathcal{X}), ~ \hat{z}_i,(P_{\hat{z}})_i\in\mathbb{R}\bigg\}, \label{eq: worst-case_relaxation_bound_sdp_intermediate}
    \end{aligned}
    \end{equation}
    which is now in terms of the scalar optimization variables $\hat{z}_i$ and $(P_{\hat{z}})_i$, as we desired. This reformulation makes it tractable to compute the supremum in \eqref{eq: worst-case_relaxation_bound_sdp_intermediate} in closed-form, which we now turn to do.
    
    First, consider the case that $c_i\ge 0$. Then we seek to maximize the difference $(P_z)_i-z_i$ subject to the given constraints. Noting that $(P_z)_i\le q(l,u)$ and $z_i\ge 0$ on the above feasible set, we remark that the objective is upper bounded as $c_i((P_z)_i-z_i)\le c_iq(l,u)$. Indeed, this upper bound is attained at the feasible point defined by $z_i=\hat{z}_i=(P_{\hat{z}})_i=0$ and $(P_z)_i = q(l,u)$. Hence, we conclude that for all $i\in\{1,2,\dots,n_z\}$ such that $c_i\ge 0$, it holds that
    \begin{equation}
        \Delta \phi_i^* \le c_i q(l,u). \label{eqn: worst-case_relaxation_bound_sdp_intermediate_2}
    \end{equation}
    Now consider the case that $c_i<0$. Then we seek to minimize the difference $(P_z)_i-z_i$ subject to the given constraints. In this case, the optimal objective value depends on the relative sizes of $\hat{u}_i$ and $\|w_i\|_* d_{\|\cdot\|}(\mathcal{X})$. In particular, when $\hat{u}_i \le \|w_i\|_* d_{\|\cdot\|}(\mathcal{X})$, the constraint $\hat{z}_i\le \hat{u}_i$ becomes active at optimum, yielding a supremum value of $-c_iu_i$. Alternatively, when $\|w_i\|_* d_{\|\cdot\|}(\mathcal{X}) \le\hat{u}_i$, the constraint $|(P_{\hat{z}})_i-\hat{z}_i|\le \|w_i\|_* d_{\|\cdot\|}(\mathcal{X})$ becomes active at optimum, yielding the supremum value of $-c_i \|w_i\|_* d_{\|\cdot\|}(\mathcal{X})$. Therefore, we conclude that for all $i\in\{1,2,\dots,n_z\}$ such that $c_i<0$, it holds that
    \begin{equation}
        \Delta \phi_i^* \le -c_i\min\{\hat{u}_i,\|w_i\|_* d_{\|\cdot\|}(\mathcal{X})\}. \label{eqn: worst-case_relaxation_bound_sdp_intermediate_3}
    \end{equation}
    Substituting \eqref{eqn: worst-case_relaxation_bound_sdp_intermediate_2} and \eqref{eqn: worst-case_relaxation_bound_sdp_intermediate_3} into \eqref{eqn: worst-case_relaxation_bound_sdp_intermediate_1} gives the desired bound.
\end{proof}
\hspace*{\fill}

When the $x$-block $P^*_x$ of the SDP relaxation stays close to the true solution $x^*$, the bound \eqref{eqn: worst-case_relaxation_bound_sdp} shows that the worst-case relaxation error scales with the loosest input bound, i.e., the maximum value amongst the limits $|l_k|$ and $|u_k|$. This fact allows us to choose which coordinate to partition along in order to maximally reduce the relaxation bound on the individual parts of the partition. We state our proposed SDP branching scheme next.

\subsubsection{Proposed Branching Scheme}

We now focus on developing a \hl{coordinate-wise} worst-case optimal branching scheme based on the relaxation bound of Theorem \ref{thm: worst-case_relaxation_bound_sdp}. Similar to the partitioned LP relaxation, the diameter $d_{\|\cdot\|}(\mathcal{X})$ tends to be small in practical settings, making the terms being summed in \eqref{eqn: worst-case_relaxation_bound_sdp} approximately equal to $\relu(c_i)q(l,u)$. We restrict ourselves to this form in order to simplify the subsequent analysis. The objective to optimize therefore takes the form
\begin{equation}
    q(l,u)\sum_{i=1}^{n_z}\relu(c_i), \label{eq: sdp_useable_upper_bound}
\end{equation}
where
\begin{equation*}
    q(l,u) = \max_{k\in\{1,2,\dots,n_x\}} \max\{|l_k|,|u_k|\}.
\end{equation*}
Since the design of the partition amounts to choosing input bounds $l$ and $u$ for the input parts, the input bounds serve as our optimization variables in minimizing the above relaxation bound. By restricting the form of our partition to the uniform division motivated in Theorem \ref{thm: optimal_sdp_partition_rank-1_gap}, it follows from the form of $q$ that the best coordinate to partition along is that with the loosest input bound, i.e., along coordinate $i^* \in \argmax_{k\in\{1,2,\dots,n_x\}} \max\{|l_k|,|u_k|\}$. This observation is formalized below.

\begin{theorem}[\hl{Coordinate-wise} worst-case optimal SDP branching]
\label{thm: optimal_sdp_partition}
\begingroup
\setlength{\emergencystretch}{.1em}
Consider the two-part partitions defined by dividing $\mathcal{X}$ uniformly along the coordinate axes: $\{\mathcal{X}_i^{(1)},\mathcal{X}^{(2)}_i\}$, with $\mathcal{X}_i^{(1)} = \{x\in\mathcal{X} : l_i^{(1)}\le x \le u_i^{(1)} \}$ and $\mathcal{X}_i^{(2)} = \{x\in\mathcal{X} : l_i^{(2)}\le x \le u^{(2)}_i\}$, where $l_i^{(1)}=l$, $u_i^{(1)} = (u_1,u_2,\dots,u_{i-1},\tfrac{1}{2}(l_i+u_i),u_{i+1},\dots,u_{n_x})$, $l_i^{(2)}= (l_1,l_2,\dots,l_{i-1},\tfrac{1}{2}(l_i+u_i),l_{i+1},\dots,l_{n_x})$, and $u_i^{(2)}=u$, for all $i\in\{1,2,\dots,n_x\} \eqqcolon \mathcal{I}$. Let
\begin{equation}
    i^* \in \argmax_{k\in\{1,2,\dots,n_x\}}\max\{|l_k|,|u_k|\}, \label{eq: optimal_sdp_partition}
\end{equation}
and assume that $|l_{i^*}|\ne |u_{i^*}|$. Then the partition $\{\mathcal{X}_{i^*}^{(1)},\mathcal{X}_{i^*}^{(2)}\}$ is optimal in the sense that the upper bound factor $q(l_{i^*}^{(j)},u_{i^*}^{(j)})$ in \eqref{eq: sdp_useable_upper_bound} equals the unpartitioned upper bound $q(l,u)$ on one part $j$ of the partition, is strictly less than $q(l,u)$ on the other part, and $q(l_i^{(j)},u_i^{(j)})=q(l,u)$ for both $j\in\{1,2\}$ for all other $i\notin\argmax_{k\in\{1,2,\dots,n_x\}}\max\{|l_k|,|u_k|\}$.
\endgroup
\end{theorem}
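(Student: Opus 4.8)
The plan is to reduce the entire statement to an elementary fact about how the scalar quantity $\max\{|\cdot|,|\cdot|\}$ behaves when an interval is bisected at its midpoint, and then read off the three assertions directly from the formula for $q$. By Assumption \ref{ass: normalized_rows} together with the simplification $\epsilon\approx 0$ that produces \eqref{eq: sdp_useable_upper_bound}, the only partition-dependent factor in the worst-case bound is $q(l,u)=\max_{k}\max\{|l_k|,|u_k|\}$, so minimizing the worst-case bound over a uniform two-part partition amounts to tracking $q$ on the bound vectors of the two resulting parts. I write $M=q(l,u)$, and for the coordinate being split I set $m=\tfrac12(l_i+u_i)$; then part $1$ has bounds obtained from $(l,u)$ by replacing $u_i$ with $m$, part $2$ by replacing $l_i$ with $m$, and every other coordinate is left untouched.

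First I would record two elementary facts. Fact (i): bisection never increases the factor, since $|m|\le\tfrac12(|l_i|+|u_i|)\le\max\{|l_i|,|u_i|\}$ by convexity of $|\cdot|$, so replacing $l_i$ or $u_i$ by $m$ can only decrease $\max\{|l_i^{(j)}|,|u_i^{(j)}|\}$ while all other coordinates are unchanged; hence $q(l_i^{(j)},u_i^{(j)})\le q(l,u)$ for both $j$ and every $i$. Fact (ii): when $|l_i|\ne|u_i|$, exactly one endpoint is ``dominant'', i.e.\ equals $\max\{|l_i|,|u_i|\}$, and the subinterval containing that dominant endpoint retains it, whereas the complementary subinterval has both endpoints strictly smaller in magnitude, because $|m|<\max\{|l_i|,|u_i|\}$ whenever $l_i\ne u_i$ (tracking equality in the convexity bound, noting $l_{i^*}\ne u_{i^*}$ follows from $|l_{i^*}|\ne|u_{i^*}|$).

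Then I would apply these facts at $i=i^*$. Since $i^*$ attains the global maximum, $\max\{|l_{i^*}|,|u_{i^*}|\}=M$, and the assumption $|l_{i^*}|\ne|u_{i^*}|$ lets fact (ii) apply; without loss of generality take $|u_{i^*}|=M>|l_{i^*}|$ (the other case is symmetric). The subinterval containing the dominant endpoint $u_{i^*}$ is part $2$, whose $i^*$-contribution is $\max\{|m|,|u_{i^*}|\}=M$, so with fact (i) this gives $q(l_{i^*}^{(2)},u_{i^*}^{(2)})=M=q(l,u)$, the ``one part'' assertion. On part $1$ the $i^*$-contribution is $\max\{|l_{i^*}|,|m|\}<M$, and to conclude $q(l_{i^*}^{(1)},u_{i^*}^{(1)})<M$ I would invoke that $i^*$ is the coordinate attaining $M$, so every other coordinate already contributes strictly less, yielding the strict ``other part'' inequality. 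Finally, for claim three, given any $i$ outside the argmax set, the maximizing coordinate $i^*\ne i$ is untouched by the split along $i$ and so contributes $M$ on both parts; combined with fact (i) this pins $q(l_i^{(j)},u_i^{(j)})=M=q(l,u)$ for both $j$ (and $q(u,l)=q(l,u)$ since $q$ depends only on the multiset of endpoint magnitudes).

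The step needing the most care is the strict inequality on the ``other part'': it silently relies on the maximizing coordinate being \emph{unique}, since bisecting a single coordinate leaves any other coordinate attaining $M$ at value $M$ on both parts, in which case the strict reduction fails. I would therefore state the result under the natural genericity that the maximum in \eqref{eq: optimal_sdp_partition} is attained at a single coordinate (consistent with the standing hypothesis $|l_{i^*}|\ne|u_{i^*}|$), with the understanding that otherwise ``strictly less'' must be weakened to ``no larger.'' The remaining steps---the convexity/midpoint inequalities and the bookkeeping over the two symmetric cases for the dominant endpoint---are routine once the two facts above are in place.
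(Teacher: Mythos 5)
Your proof is correct and takes essentially the same route as the paper's: compute $q$ coordinate-wise on each part, use the midpoint inequality $\bigl|\tfrac{1}{2}(l_i+u_i)\bigr|\le\tfrac{1}{2}(|l_i|+|u_i|)\le\max\{|l_i|,|u_i|\}$ (strict whenever $l_i\ne u_i$), and split into the cases $i\in\arg\max$ versus $i\notin\arg\max$ with the dominant-endpoint bookkeeping, exactly as the paper does via its explicit enumeration of the modified magnitude lists. The one place you go beyond the paper is your closing caveat, and it is a genuine catch: the paper's proof asserts $q(l_{i^*}^{(2)},u_{i^*}^{(2)})<|l_{i^*}|=q(l,u)$ justified only by $|l_{i^*}|\ne|u_{i^*}|$, which rules out a tie coming from coordinate $i^*$ itself but not from a second coordinate $k\ne i^*$ attaining the same maximum --- for instance $l=(-2,-2)$, $u=(1,1)$ satisfies $|l_{i^*}|\ne|u_{i^*}|$ yet leaves $q$ equal to $q(l,u)$ on both parts after splitting along $i^*=1$. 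So the strict-inequality clause of the theorem does implicitly require the maximizer in \eqref{eq: optimal_sdp_partition} to be unique, exactly as you state, and your weakening of ``strictly less'' to ``no larger'' in the non-generic case is the right repair.
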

\begin{proof}
First, consider partitioning along coordinate $i\notin\argmax_{k\in\{1,2,\dots,n_x\}}\max\{|l_k|,|u_k|\}$. Then
\begin{gather*}
    q(l_i^{(1)},u_i^{(1)}) = \max_{k\in\{1,2,\dots,n_x\}}\max\{|(l_i^{(1)})_k|,|(u_i^{(1)})_k|\} \\
    = \max\left\{|l_1|,\dots,|l_{n_x}|,|u_1|,\dots,\left|\frac{l_i+u_i}{2}\right|,\dots,|u_{n_x}|\right\} = \max\{|l_{i^*}|,|u_{i^*}|\} = q(l,u),
\end{gather*}
since $\left|\frac{l_i+u_i}{2}\right|\le \frac{|l_i|+|u_i|}{2}<\max\{|l_{i^*}|,|u_{i^*}|\}$ and $i\ne i^*$ implies that
\begin{equation*}
\max\{|l_{i^*}|,|u_{i^*}|\}\in \left\{|l_1|,\dots,|l_{n_x}|,|u_1|,\dots,\left|\frac{l_i+u_i}{2}\right|,\dots,|u_{n_x}|\right\}.
\end{equation*}
In an analogous fashion, it follows that
\begin{equation*}
    q(l_i^{(2)},u_i^{(2)}) = q(l,u).
\end{equation*}

Now, consider partitioning along coordinate $i^*$. Note that either
\begin{equation*}
	\max_{k\in\{1,2,\dots,n_x\}}\max\{|l_k|,|u_k|\} = |l_{i^*}|,~\text{or}~\max_{k\in\{1,2,\dots,n_x\}}\max\{|l_k|,|u_k|\} = |u_{i^*}|.
\end{equation*}
Suppose that the first case holds true. Then
\begin{gather*}
    q(l_{i^*}^{(1)},u_{i^*}^{(1)}) = \max_{k\in\{1,2,\dots,n_x\}}\max\{|(l_{i^*}^{(1)})_k|,|(u_{i^*}^{(1)})_k|\} \\
    = \max\left\{ |l_1|,\dots,|l_{n_x}|,|u_1|,\dots,\left|\frac{l_{i^*}+u_{i^*}}{2}\right|,\dots,|u_{n_x}| \right\}  = |l_{i^*}| = q(l,u),
\end{gather*}
since $|(l_{i^*}+u_{i^*})/2| \le (|l_i^*|+|u_i^*|)/2 < |l_i^*|$ and
\begin{equation*}
    |l_{i^*}| \in \max\left\{ |l_1|,\dots,|l_{n_x}|,|u_1|,\dots,\left|\frac{l_{i^*}+u_{i^*}}{2}\right|,\dots,|u_{n_x}| \right\}.
\end{equation*}
Over the second part of the partition,
\begin{gather*}
    q(l_{i^*}^{(2)},u_{i^*}^{(2)}) = \max_{k\in\{1,2,\dots,n_x\}}\max\{|(l_{i^*}^{(2)})_k|,|(u_{i^*}^{(2)})_k|\} \\
    = \max\left\{ |l_1|,\dots,\left|\frac{l_{i^*}+u_{i^*}}{2}\right|,\dots,|l_{n_x}|,|u_1|,\dots,|u_{n_x}| \right\} < |l_{i^*}| = q(l,u),
\end{gather*}
since $|(l_{i^*}+u_{i^*})/2|< |l_{i^*}|$ and
\begin{equation*}
    |l_i^*| \notin \left\{ |l_1|,\dots,\left|\frac{l_{i^*}+u_{i^*}}{2}\right|,\dots,|l_{n_x}|,|u_1|,\dots,|u_{n_x}| \right\}
\end{equation*}
since $|l_{i^*}|\ne |u_{i^*}|$. In the other case that $\max_{k\in\{1,2,\dots,n_x\}}\max\{|l_k|,|u_k|\} = |u_{i^*}|$, it follows via the same argument that $q(l_{i^*}^{(1)},u_{i^*}^{(1)}) < q(l,u)$ and $q(l_{i^*}^{(2)},u_{i^*}^{(2)}) = q(l,u)$. Since partitioning along any other coordinate $i\notin\argmax_{k\in\{1,2,\dots,n_x\}}\max\{|l_k|,|u_k|\}$ was shown to yield $q(l_i^{(1)},u_i^{(1)})=q(l_i^{(2)},u_i^{(2)})=q(l,u)$, the coordinate $i^*$ is optimal in the sense proposed.
\end{proof}
\hspace*{\fill}

Intuitively, the branching scheme defined in Theorem \ref{thm: optimal_sdp_partition} is \hl{the worst-case optimal coordinate-wise method}, because any other uniform partition along a coordinate axis cannot tighten the relaxation error bound \eqref{eq: sdp_useable_upper_bound}. On the other hand, Theorem \ref{thm: optimal_sdp_partition} guarantees that using the partition coordinate in \eqref{eq: optimal_sdp_partition} results in a strict tightening of the relaxation error upper bound factor $q$ on at least one part of the partition.

\hl{
As was the case with our LP branching scheme, we emphasize that our theoretical results for the SDP (namely, Theorems~\ref{thm: optimal_sdp_partition_rank-1_gap},~\ref{thm: worst-case_relaxation_bound_sdp}, and~\ref{thm: optimal_sdp_partition}) are proven under the assumption of a single hidden layer. However, we may extend our SDP branching scheme \eqref{eq: optimal_sdp_partition} into a branching heuristic for deep multi-layer networks, as we did with the LP branching scheme. Unlike the LP, where a surrogate ``$c$''-vector was defined in order to apply our LP branching score to neurons in layers other than the last, our SDP branching score values can be directly computed in the multi-layer setting at neuron $i$ in layer $k$ as $\max\{|l_i^{[k]}|,|u_i^{[k]}|\}$, since such scores only depend on the $k^\text{th}$-layer activation bounds $l^{[k]},u^{[k]}$ satisfying $l^{[k]} \le x^{[k]} \le u^{[k]}$ for all $x\in\mathcal{X}$.
}

\section{Implementing the Branching Schemes}
\label{sec: bab}

For the reader's convenience, we give a pseudocode in Algorithm \ref{alg: bab} to embed our branching schemes (Theorems \ref{thm: optimal_partition} and \ref{thm: optimal_sdp_partition}) into an overall branch-and-bound algorithm, which is what we use to compute the simulation results in Section \ref{sec: simulation_results} below. The pseudocode for the SDP branch-and-bound procedure is the same as in Algorithm \ref{alg: bab} albeit with lines $2$ and $3$ modified to use Theorem \ref{thm: optimal_sdp_partition} and \eqref{eq: sdp_relaxation}, respectively. \hl{Solving the two subproblems in line 3 can be done independently, and in a parallel fashion, to enhance computational efficiency and scalability.}

\begin{algorithm}[tb]
   \caption{LP branch-and-bound procedure for certification}
   \label{alg: bab}
   \textbf{Input:} $f$, $\mathcal{X}$, $n_\text{branches}$
\begin{algorithmic}[1]
    \STATE \textbf{for} $n=1,\dots,n_\text{branches}$
    \STATE \hspace*{\algorithmicindent} \textbf{compute} \hl{$\mathcal{X}_{i^*}^{(1)},\mathcal{X}_{i^*}^{(2)}$} according to Theorem \ref{thm: optimal_partition}
    \STATE \hspace*{\algorithmicindent} \textbf{compute} \hl{$\hl{\hat{\phi}_{\textup{LP}}}(\mathcal{X}_{i^*}^{(1)}),\hl{\hat{\phi}_{\textup{LP}}}(\mathcal{X}_{i^*}^{(2)})$} according to \eqref{eq: lp_relaxation}
    \STATE \hspace*{\algorithmicindent} \textbf{if} \hl{$\hl{\hat{\phi}_{\textup{LP}}^*}(\mathcal{X}_{i^*}^{(1)}) \ge \hl{\hat{\phi}_{\textup{LP}}^*}(\mathcal{X}_{i^*}^{(2)})$}
    \STATE \hspace*{2\algorithmicindent} \textbf{assign} $\mathcal{X} \gets \mathcal{X}_{i^*}^{(1)}$
    \STATE \hspace*{\algorithmicindent} \textbf{else}
    \STATE \hspace*{2\algorithmicindent} \textbf{assign} $\mathcal{X} \gets \mathcal{X}_{i^*}^{(2)}$
    \STATE \textbf{if} $\max_{j\in\{1,2\}} \hl{\hat{\phi}_{\textup{LP}}^*}(\mathcal{X}_{i^*}^{(j)}) \le 0$
    \STATE \hspace*{\algorithmicindent} \textbf{assign} \texttt{is\textunderscore certified} $\gets$ \texttt{True}
    \STATE \textbf{else}
    \STATE \hspace*{\algorithmicindent} \textbf{assign} \texttt{is\textunderscore certified} $\gets$ \texttt{False}
    \STATE \textbf{return} \texttt{is\textunderscore certified}
\end{algorithmic}
\end{algorithm}

\section{Simulation Results}
\label{sec: simulation_results}

In this section, we experimentally corroborate the effectiveness of our proposed certification methods. We first perform \hl{single-hidden layer} LP branch-and-bound on moderately-sized benchmark datasets and find that our derived branching scheme beats the current state-of-the-art. We then compute the SDP and branched SDP, and compare to the LP results. Next, we explore the effectiveness of branching on the LP and SDP as networks grow in size, namely, as the number of inputs and the number of layers independently increase. \hl{Finally, we compare our multi-layer LP branching heuristic to the state-of-the-art on large-scale certification problems from the $\alpha,\beta$-CROWN benchmarks and VNN competition benchmarks \citep{wang2021beta,bak2021second}.} The experiments in Sections~\ref{sec: partitioned_lp_results},~\ref{sec: partitioned_sdp_results}, and~\ref{sec: effectiveness_as_network_grows} are performed on a standard laptop computer using Tensorflow 2.5 in Python 3.9. Training of networks is done using the Adam optimizer \citep{kingma2014adam}, and certifications are performed using MOSEK in CVXPY \citep{diamond2016cvxpy}. \hl{The experiments in Section~\ref{sec: deep_nn_results} are conducted on a desktop computer equipped with an Nvidia 4080 GPU, using PyTorch 2.0.1 and Python 3.11.}

\subsection{\hl{Single-Hidden Layer} LP Results}
\label{sec: partitioned_lp_results}

In this experiment, we consider classification networks trained on three datasets: the Wisconsin breast cancer diagnosis dataset with $(n_x,n_z)=(30,2)$ \citep{dua_2017}, the MNIST handwritten digit dataset with $(n_x,n_z)=(784,10)$ \citep{lecun1998mnist}, and the CIFAR-10 image classification dataset with $(n_x,n_z)=(3072,10)$ \citep{krizhevsky2009learning}. The Wisconsin breast cancer dataset is characteristic of a real-world machine learning setting in which robustness guarantees are crucial for safety; a misdiagnosis of breast cancer may result in grave consequences for the patient under concern. Each neural network is composed of an affine layer followed by a $\relu$ hidden layer followed by another affine layer. For each network, we consider $15$ different nominal inputs $\bar{x}$ and corresponding uncertainty sets $\mathcal{X} = \{x\in\mathbb{R}^{n_x} : \|x-\bar{x}\|_\infty \le \epsilon\}$, where we choose a range of attack radii $\epsilon$. We also perform a sweep over the number of branching steps to use in the branch-and-bound certification scheme. Recall that a negative optimal objective value of the robustness certification problem proves that no perturbation of $\bar{x}$ within $\mathcal{X}$ results in misclassification.

\hl{Figures~\ref{fig: wisconsin},~\ref{fig: mnist}, and~\ref{fig: cifar} display the percent of test inputs certified using the LP relaxation with our branching method, as well as those certified using the state-of-the-art LP branching method, filtered smart branching (FSB) \citep{de2021improved}. FSB was used in the $\alpha,\beta$-CROWN branch-and-bound scheme to win the 2021, 2022, and 2023 VNN-COMP certification competitions \citep{wang2021beta,bak2021second,muller2022third,brix2023fourth}. We see that, for all three benchmarks and at all considered attack radii $\epsilon$ and number of branching steps, our LP branching method meets or exceeds the performance of FSB; our method attains higher certification percentages in fewer branching steps and at larger radii than FSB. At some radius-number of branches settings, we even see that our LP branching scheme substantially increases the percentage certified by approximately $10\%$ (Wisconsin), $20\%$ (MNIST), and $20\%$ (CIFAR-10).} 

\begin{figure}[tbh]
    \setlength\fwidth{0.49\linewidth}
    \centering
    \hl{
    \subfloat[Attack radius $\epsilon=0.2$.]{%
		\centering
		\includegraphics[width=\fwidth]{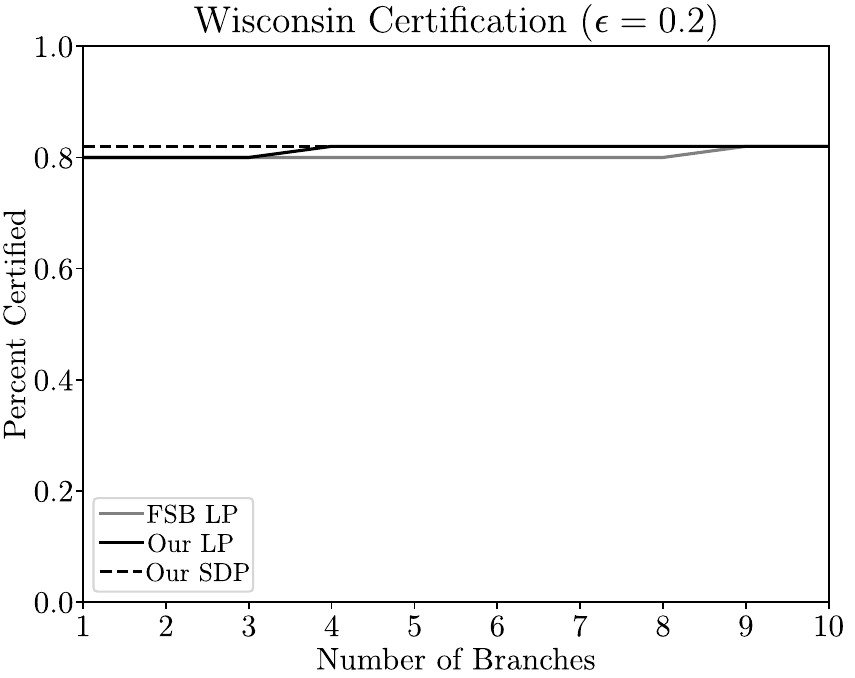}
		\label{fig: wisconsin-1}
	}
	\hfil
	\subfloat[Attack radius $\epsilon=0.4$.]{%
		\centering
		\includegraphics[width=\fwidth]{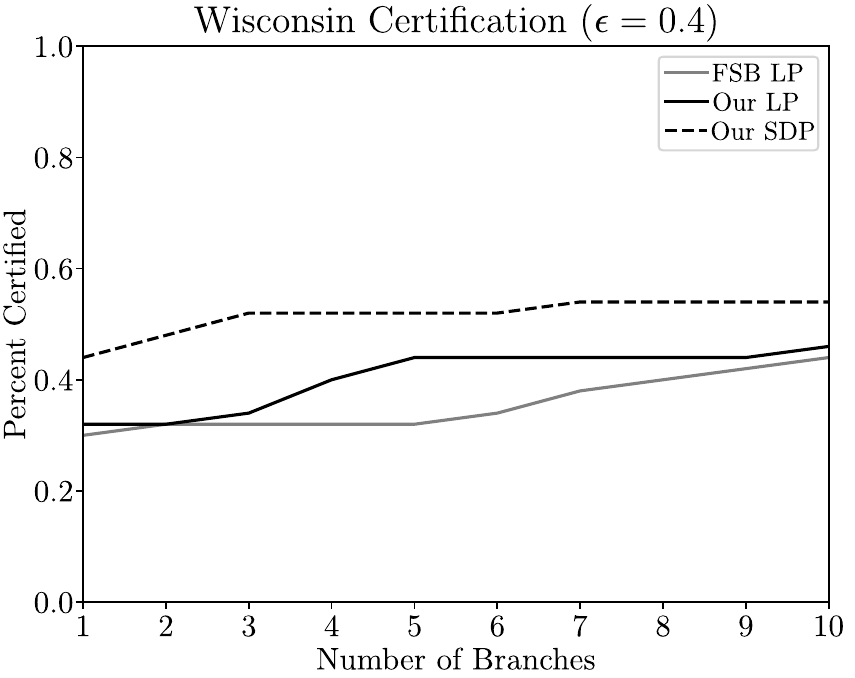}
		\label{fig: wisconsin-2}
	}\\
    \subfloat[Attack radius $\epsilon=0.6$.]{%
		\centering
		\includegraphics[width=\fwidth]{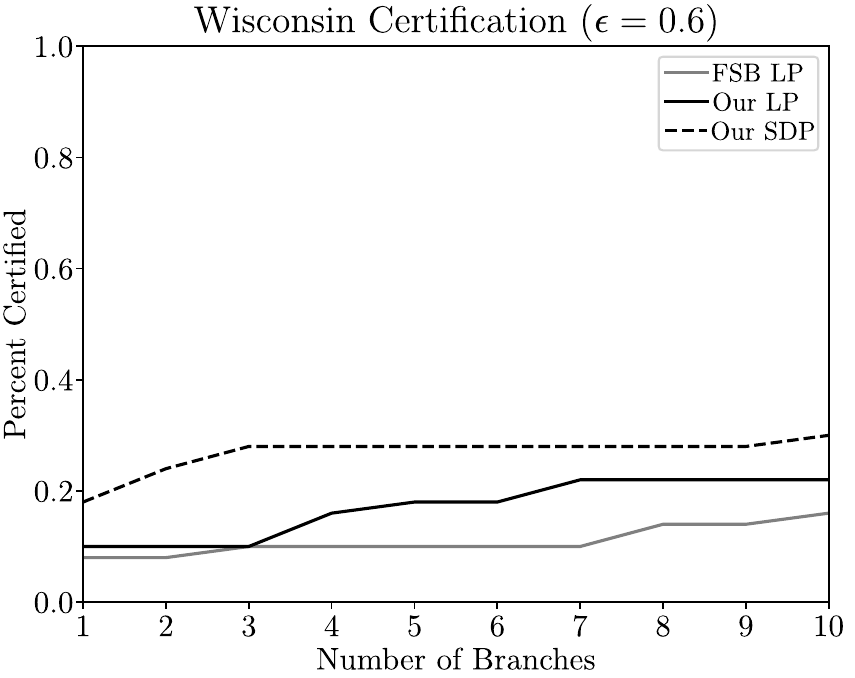}
		\label{fig: wisconsin-3}
	}
        \hfil
        \subfloat[Attack radius $\epsilon=0.8$.]{%
		\centering
		\includegraphics[width=\fwidth]{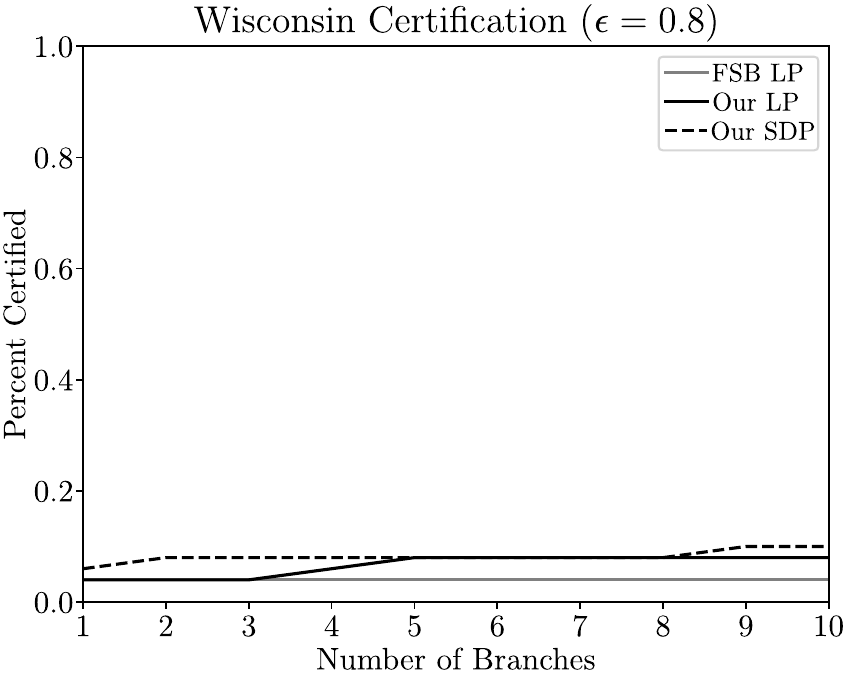}
		\label{fig: wisconsin-4}
	}
    \caption{Percent certified on Wisconsin breast cancer diagnosis dataset using branching on LP and SDP relaxations \hl{of single-hidden layer network}.}
    }
    \label{fig: wisconsin}
\end{figure}

\begin{figure}[tbh]
    \setlength\fwidth{0.49\linewidth}
    \centering
    \hl{
    \subfloat[Attack radius $\epsilon=0.015$.]{%
		\centering
		\includegraphics[width=\fwidth]{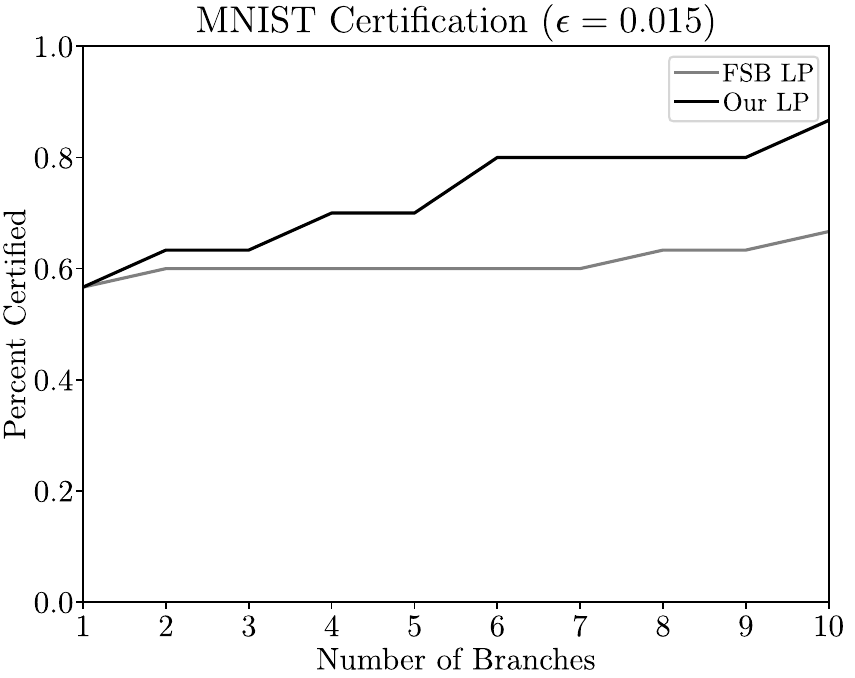}
		\label{fig: mnist-1}
	}
	\hfil
	\subfloat[Attack radius $\epsilon=0.02$.]{%
		\centering
		\includegraphics[width=\fwidth]{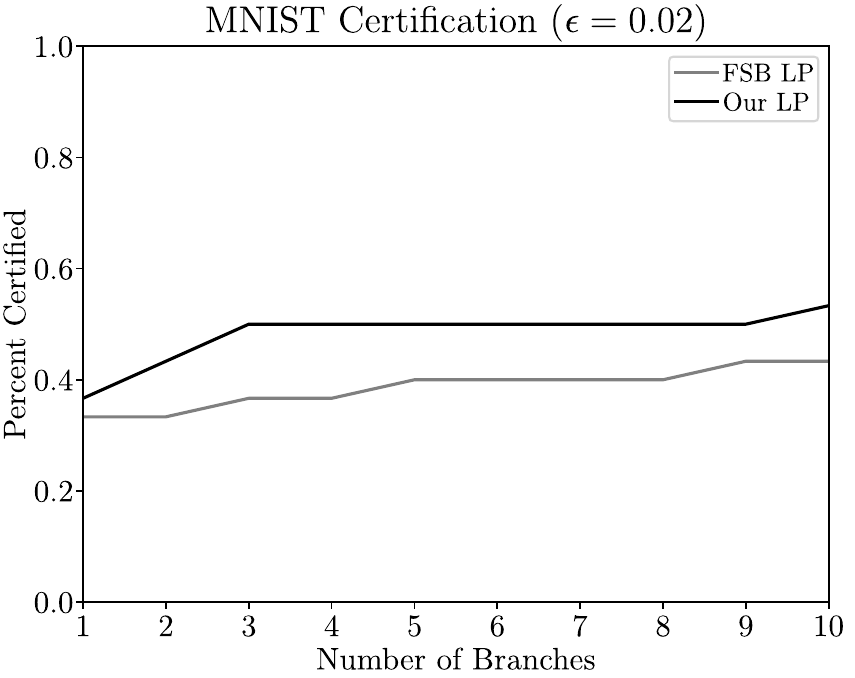}
		\label{fig: mnist-2}
	}\\
    \subfloat[Attack radius $\epsilon=0.025$.]{%
		\centering
		\includegraphics[width=\fwidth]{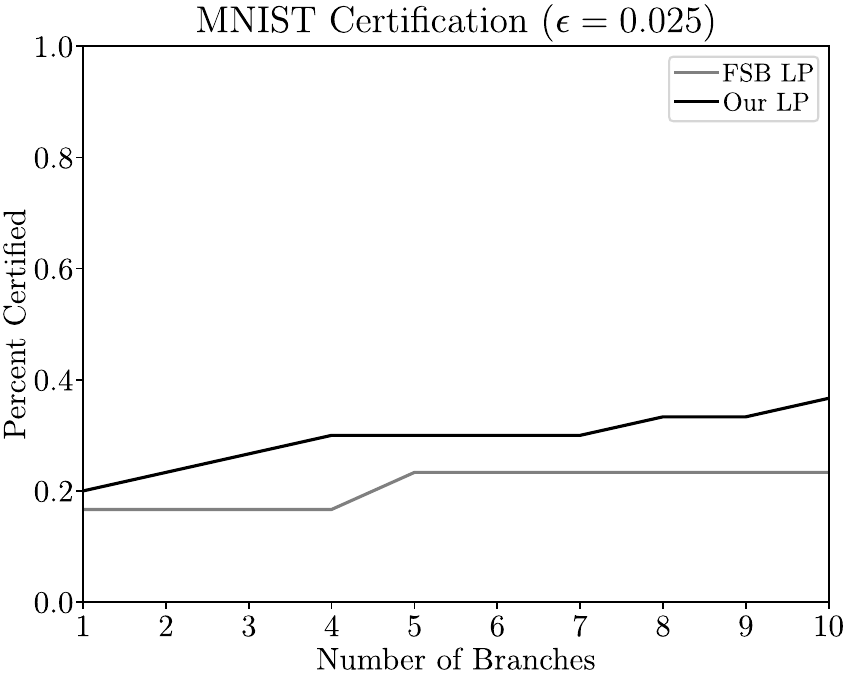}
		\label{fig: mnist-3}
	}
        \hfil
        \subfloat[Attack radius $\epsilon=0.03$.]{%
		\centering
		\includegraphics[width=\fwidth]{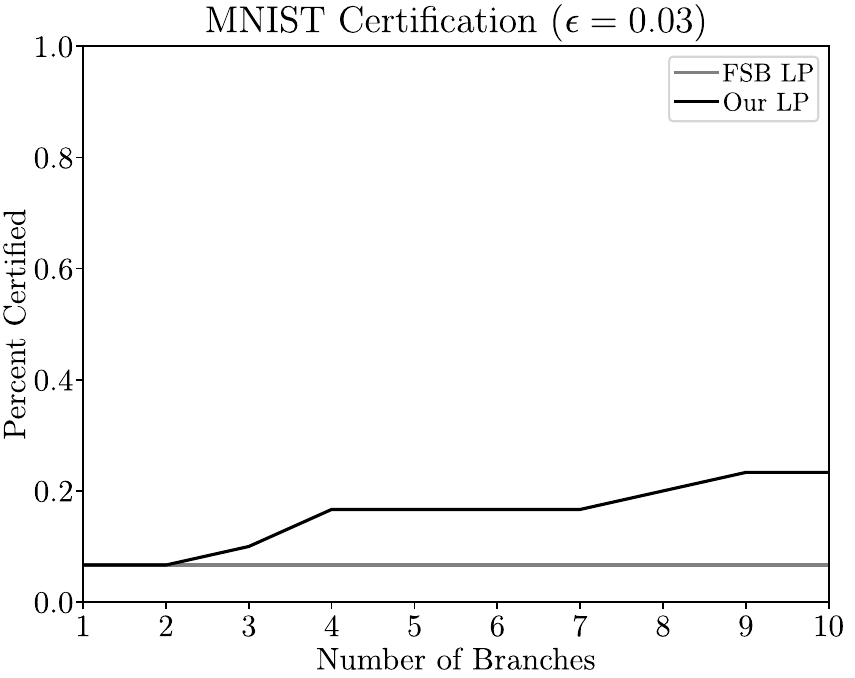}
		\label{fig: mnist-4}
	}
    \caption{Percent certified on MNIST dataset using branching on LP relaxation \hl{of single-hidden layer network}.}
    }
    \label{fig: mnist}
\end{figure}

\begin{figure}[tbh]
    \setlength\fwidth{0.49\linewidth}
    \centering
    \hl{
    \subfloat[Attack radius $\epsilon=0.015$.]{%
		\centering
		\includegraphics[width=\fwidth]{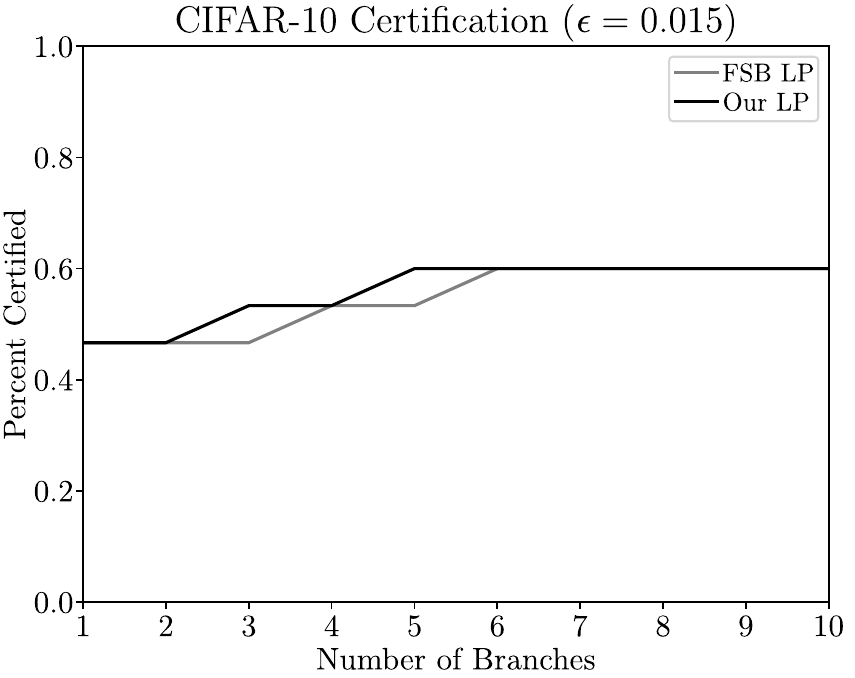}
		\label{fig: cifar-1}
	}
	\hfil
	\subfloat[Attack radius $\epsilon=0.0175$.]{%
		\centering
		\includegraphics[width=\fwidth]{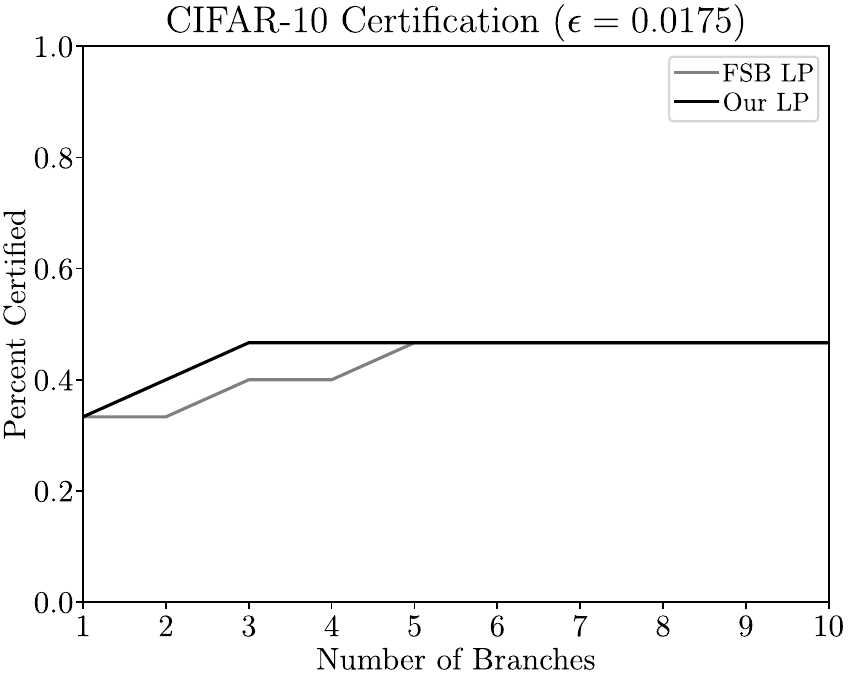}
		\label{fig: cifar-2}
	}\\
    \subfloat[Attack radius $\epsilon=0.02$.]{%
		\centering
		\includegraphics[width=\fwidth]{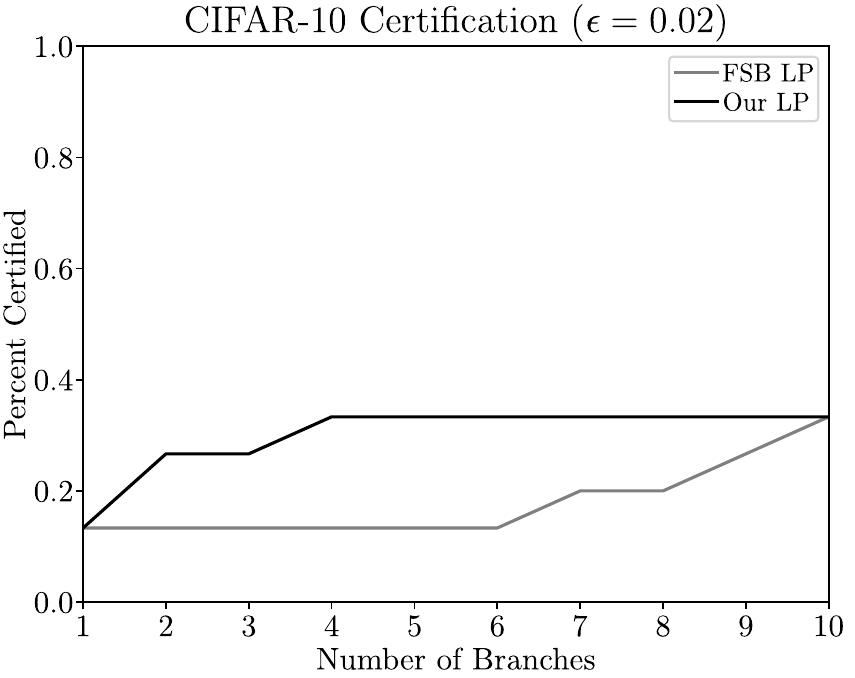}
		\label{fig: cifar-3}
	}
        \hfil
        \subfloat[Attack radius $\epsilon=0.0225$.]{%
		\centering
		\includegraphics[width=\fwidth]{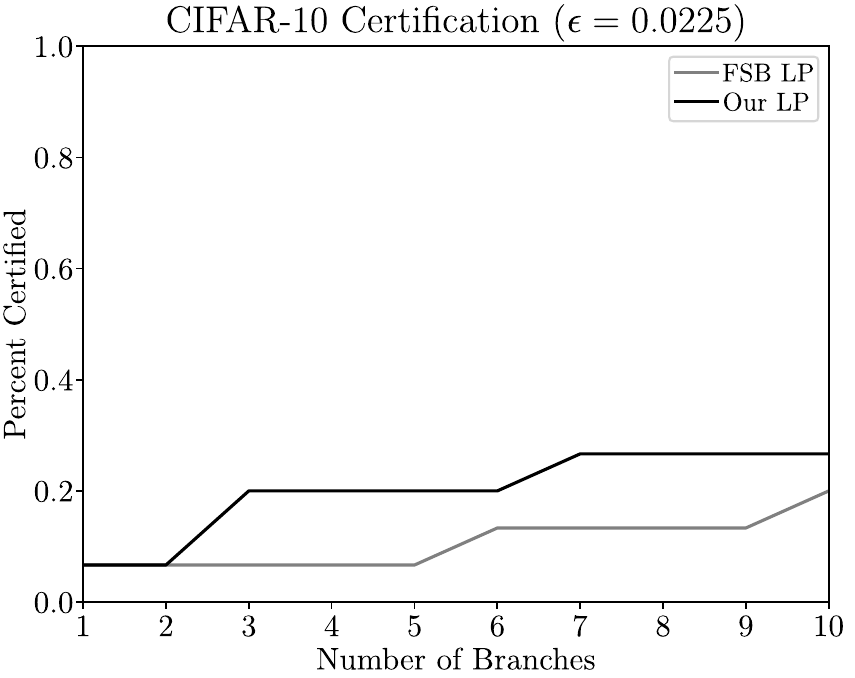}
		\label{fig: cifar-4}
	}
    \caption{Percent certified on CIFAR-10 dataset using branching on LP relaxation \hl{of single-hidden layer network}.}
    }
    \label{fig: cifar}
\end{figure}

\subsection{\hl{Single-Hidden Layer} SDP Results}
\label{sec: partitioned_sdp_results}
In this experiment, we consider the same single-hidden layer $\relu$ neural network trained on the Wisconsin breast cancer dataset as used in Section \ref{sec: partitioned_lp_results}. We solve the branched SDP using our proposed branching scheme from Theorem \ref{thm: optimal_sdp_partition}, where the same test data and simulation parameters are used as with the LP in Section \ref{sec: partitioned_lp_results}. The results are shown in Figure~\ref{fig: wisconsin}. In comparing the SDP results to the LP results in Figure~\ref{fig: wisconsin}, we see that our branched SDP achieves better certification percentages compared to both of the branched LP approaches, up to around $20\%$ better in some radius-number of branches settings. We remark that \citet{zhang2020tightness} provides theoretical guarantees for the tightness of the SDP relaxation under some technical conditions. However, since we find a strict increase in the certified percentages upon branching on the SDP, we conclude that such conditions for exactness may not be satisfied in general by practical networks, indicating that branching on the SDP may in fact be necessary in settings where high accuracy is the primary concern at hand. As we will see in the next experiment, this certification enhancement via SDP branching becomes even more substantial as the network depth increases. We now move to this experiment, and propose a general rule of thumb for when LP, SDP, and their branch-and-bound variants are best applied based on depth and width of the network.

\subsection{Effectiveness as Network Grows}
\label{sec: effectiveness_as_network_grows}

In this section, we perform two experiments to test the effectiveness of branching as the size and structure of the network changes. First, we consider two-layer networks of structure $n_x\times 100\times 5$, where $n_x$ is the input dimension. For each input size $n_x\in\{5,10,20,40,80,100\}$, we generate one network with standard normal random weights, and another network with uniformly distributed weights (where each element is distributed uniformly on the interval $[0,1]$). The weights are normalized according to Assumption \ref{ass: normalized_rows}. For each network being tested, we compute the LP, branched LP, SDP, and branched SDP relaxations at a fixed nominal input $\bar{x}$ using the input uncertainty set $\mathcal{X} = \{x\in\mathbb{R}^{n_x}:\|x-\bar{x}\|_\infty \le \epsilon\}$ with $\epsilon=0.5$. The optimal values, corresponding computation times, and percentage improvements induced by branching are reported in Table \ref{tab: vary_input_size}. The effectiveness of branching for the LP remains relatively constant between $5$ and $10$ percent improvement, whereas the branching appears to lose its efficacy on the SDP as the input size grows. As expected, the two-part partitioned convex relaxations take twice as long to solve as their unpartitioned counterparts. Note that, despite the fact that branching works better for the LP with wide networks, the actual optimal value of the SDP-based certificates are always lower (tighter) than the LP-based ones. This matches what is known in the literature: the SDP is a tighter relaxation technique than the LP \citep{raghunathan2018semidefinite}. However, the computation times of the SDP and branched SDP quickly increase as the network size increases, whereas the LP and branched LP computation times are seen to slowly increase. \hl{Indeed, SDP-based certification is known to suffer from scalability issues in high-dimensional settings \citep{chiu2023tight}. This limitation also applies to our branched SDP method as well.} All of this suggests the following: in the regime of shallow (i.e., one or two hidden layers) but very wide networks, the branched LP should be used, since the branching remains effective in tightening the relaxation, yet the method is scalable to large networks where the SDP cannot be feasibly applied.

\begin{table}[tbh]
	\centering
	\caption{Varying input size $n_x$ for $n_x\times 100 \times 5$ ReLU network. Optimal values and corresponding computation times reported. B-LP and B-SDP correspond to branched LP and branched SDP, respectively. \%-LP and \%-SDP represent the percentage tightening of the optimal values obtained from branching.}
	\captionsetup[subfloat]{position=top} 
	\subfloat[Normally distributed network weights.]{%
	\begin{tabular}{l r r r r r r}
	\toprule
	Input size & LP & B-LP & \%-LP & SDP & B-SDP & \%-SDP \\
	\midrule %
	\multirow{2}{*}{$5$} & $126.93$ & $117.92$ & $\mathbf{7.10\%}$ & $16.82$ & $14.83$ & $\mathbf{11.85\%}$ \\
    & $\SI{0.71}{\second}$ & $\SI{1.46}{\second}$ & $104.18\%$ & $\SI{1.66}{\second}$ & $\SI{3.33}{\second}$ & $101.33\%$ \\ %
	\multirow{2}{*}{$10$} & $187.57$ & $176.19$ & $\mathbf{6.07\%}$ & $33.62$ & $32.96$ & $\mathbf{1.98\%}$ \\
    & $\SI{0.77}{\second}$ & $\SI{1.36}{\second}$ & $76.13\%$ & $\SI{1.54}{\second}$ & $\SI{3.16}{\second}$ & $105.57\%$ \\ %
	\multirow{2}{*}{$20$} & $386.49$ & $364.53$ & $\mathbf{5.68\%}$ & $54.02$ & $54.01$ & $\mathbf{0.02\%}$ \\
    & $\SI{0.71}{\second}$ & $\SI{1.42}{\second}$ & $100.49\%$ & $\SI{1.85}{\second}$ & $\SI{4.31}{\second}$ & $132.94\%$ \\ %
	\multirow{2}{*}{$40$} & $874.70$ & $864.56$ & $\mathbf{1.16\%}$ & $104.90$ & $104.38$ & $\mathbf{0.49\%}$ \\
    & $\SI{1.27}{\second}$ & $\SI{2.68}{\second}$ & $110.93\%$ & $\SI{4.79}{\second}$ & $\SI{9.33}{\second}$ & $95.01\%$ \\ %
	\multirow{2}{*}{$80$} & $1591.41$ & $1496.23$ & $\mathbf{5.98\%}$ & $310.37$ & $310.31$ & $\mathbf{0.02\%}$ \\
    & $\SI{1.76}{\second}$ & $\SI{2.97}{\second}$ & $69.00\%$ & $\SI{9.81}{\second}$ & $\SI{17.87}{\second}$ & $82.11\%$ \\ %
	\multirow{2}{*}{$100$} & $2184.94$ & $2175.87$ & $\mathbf{0.42\%}$ & $383.63$ & $383.50$ & $\mathbf{0.03\%}$ \\
    & $\SI{0.78}{\second}$ & $\SI{1.84}{\second}$ & $136.93\%$ & $\SI{5.02}{\second}$ & $\SI{10.52}{\second}$ & $109.46\%$ \\ %
	\bottomrule
	\end{tabular}
	\label{tab: vary_input_size_normal}%
	} \\
	\vspace*{\baselineskip}%
	\subfloat[Uniformly distributed network weights.]{%
	\begin{tabular}{l r r r r r r}
	\toprule
	Input size & LP & B-LP & \%-LP & SDP & B-SDP & \%-SDP \\
	\midrule %
	\multirow{2}{*}{$5$} & $11.65$ & $10.69$ & $\mathbf{8.31\%}$ & $5.95$ & $5.74$ & $\mathbf{3.44\%}$ \\
    & $\SI{0.65}{\second}$ & $\SI{1.36}{\second}$ & $109.54\%$ & $\SI{1.39}{\second}$ & $\SI{2.20}{\second}$ & $58.32\%$ \\ %
	\multirow{2}{*}{$10$} & $34.13$ & $34.13$ & $\mathbf{0.00\%}$ & $12.61$ & $11.92$ & $\mathbf{5.47\%}$ \\
    & $\SI{0.68}{\second}$ & $\SI{1.36}{\second}$ & $101.35\%$ & $\SI{1.32}{\second}$ & $\SI{2.48}{\second}$ & $87.45\%$ \\ %
	\multirow{2}{*}{$20$} & $83.74$ & $83.02$ & $\mathbf{0.86\%}$ & $19.20$ & $19.00$ & $\mathbf{1.06\%}$ \\
    & $\SI{0.67}{\second}$ & $\SI{1.40}{\second}$ & $106.88\%$ & $\SI{1.31}{\second}$ & $\SI{2.85}{\second}$ & $118.39\%$ \\ %
	\multirow{2}{*}{$40$} & $141.37$ & $133.30$ & $\mathbf{5.71\%}$ & $25.89$ & $25.69$ & $\mathbf{0.74\%}$ \\
    & $\SI{0.69}{\second}$ & $\SI{1.43}{\second}$ & $106.67\%$ & $\SI{1.63}{\second}$ & $\SI{3.23}{\second}$ & $97.62\%$ \\ %
	\multirow{2}{*}{$80$} & $260.80$ & $242.19$ & $\mathbf{7.14\%}$ & $21.86$ & $21.68$ & $\mathbf{0.84\%}$ \\
    & $\SI{0.71}{\second}$ & $\SI{1.42}{\second}$ & $99.25\%$ & $\SI{2.82}{\second}$ & $\SI{5.44}{\second}$ & $92.97\%$ \\ %
	\multirow{2}{*}{$100$} & $400.73$ & $387.24$ & $\mathbf{3.37\%}$ & $102.87$ & $102.35$ & $\mathbf{0.51\%}$ \\
    & $\SI{0.74}{\second}$ & $\SI{1.56}{\second}$ & $111.10\%$ & $\SI{3.33}{\second}$ & $\SI{6.89}{\second}$ & $106.64\%$ \\ %
	\bottomrule
	\end{tabular}
	\label{tab: vary_input_size_uniform}%
	}
	\label{tab: vary_input_size}
\end{table}

In the second simulation of this section, we analyze the effectiveness of branching as the depth of the network increases. In particular, we consider networks with normal random weights having $5$ inputs and $5$ outputs, and each intermediate layer having $10$ neurons. We run the experiment on networks having $1$ through $6$ such intermediate layers. \hl{Recall that our LP branching scheme was derived for single-hidden layer networks. Therefore, we heuristically extend our LP branching scheme to multi-layer networks using the technique proposed in Algorithm~\ref{alg: multi-layer}. Unlike the branched LP, the SDP branching scheme given in Theorem \ref{thm: optimal_sdp_partition} can directly be applied to deep networks, without the need to use surrogate parameters or other intermediate steps to compute the partition.} We compute the LP, branched LP, SDP, and branched SDP on the networks at hand and report the objective values and computation times in Table \ref{tab: vary_num_layers}. In this simulation, we see a stark contrast to the results in Table \ref{tab: vary_input_size}. Specifically, the percentage improvement induced by branching on the LP reduces quickly to nearly zero percent for networks with 3 or more intermediate $10$-neuron hidden layers. Indeed, this is one fundamental drawback behind the LP relaxation: the convex upper envelope is used independently at every neuron, so the relaxation error quickly compounds as the network becomes deeper. On the other hand, the SDP relaxation takes into account the coupling between the layers of the network. This theoretical advantage is demonstrated empirically, as the percentage improvement gained by the branched SDP hovers around $10\%$ even for the deep networks tested here. Moreover, note how the SDP computation time remains relatively close to that of the LP, unlike the rapid increase in computation time seen when increasing the input size. This behavior suggests the following: in the regime of deep but relatively narrow networks, the branched SDP should be used, since the branching is effective in tightening the relaxation, yet the computational cost grows relatively slowly as more layers are added (compared to the case where more inputs are added).

\begin{table}[tbh]
	\centering
	\caption{Varying number of hidden layers for a $5\times 10\times10\times\cdots\times 10 \times 5$ ReLU network with normal random weights. Optimal values and corresponding computation times reported. B-LP and B-SDP correspond to branched LP and branched SDP, respectively. \%-LP and \%-SDP represent the percentage tightening of the optimal values obtained from branching.}
	\begin{tabular}{l r r r r r r}
	\toprule
	Layers & LP & B-LP & \%-LP & SDP & B-SDP & \%-SDP \\
	\midrule %
	\multirow{2}{*}{$1$} & $10.16$ & $7.03$ & $\mathbf{30.79\%}$ & $4.70$ & $4.65$ & $\mathbf{1.12\%}$ \\
& $\SI{0.59}{\second}$ & $\SI{1.21}{\second}$ & $105.06\%$ & $\SI{0.68}{\second}$ & $\SI{1.29}{\second}$ & $91.17\%$ \\ %
	\multirow{2}{*}{$2$} & $46.29$ & $44.89$ & $\mathbf{3.03\%}$ & $2.42$ & $1.94$ & $\mathbf{19.94\%}$ \\
& $\SI{0.62}{\second}$ & $\SI{1.21}{\second}$ & $93.07\%$ & $\SI{0.71}{\second}$ & $\SI{1.49}{\second}$ & $108.81\%$ \\ %
	\multirow{2}{*}{$3$} & $626.96$ & $626.96$ & $\mathbf{0.00\%}$ & $36.29$ & $34.36$ & $\mathbf{5.31\%}$ \\
& $\SI{0.61}{\second}$ & $\SI{1.29}{\second}$ & $110.86\%$ & $\SI{0.72}{\second}$ & $\SI{1.47}{\second}$ & $103.32\%$ \\ %
	\multirow{2}{*}{$4$} & $5229.32$ & $5229.32$ & $\mathbf{0.00\%}$ & $179.79$ & $167.34$ & $\mathbf{6.93\%}$ \\
& $\SI{0.65}{\second}$ & $\SI{1.29}{\second}$ & $97.47\%$ & $\SI{0.99}{\second}$ & $\SI{1.88}{\second}$ & $89.81\%$ \\ %
	\multirow{2}{*}{$5$} & $37625.91$ & $37625.86$ & $\mathbf{0.00\%}$ & $628.78$ & $561.60$ & $\mathbf{10.68\%}$ \\
& $\SI{0.69}{\second}$ & $\SI{1.34}{\second}$ & $94.59\%$ & $\SI{1.13}{\second}$ & $\SI{2.04}{\second}$ & $80.35\%$ \\ %
	\multirow{2}{*}{$6$} & $326743.55$ & $326743.34$ & $\mathbf{0.00\%}$ & $3245.41$ & $3050.69$ & $\mathbf{6.00\%}$ \\
& $\SI{0.75}{\second}$ & $\SI{1.35}{\second}$ & $79.44\%$ & $\SI{1.19}{\second}$ & $\SI{2.35}{\second}$ & $98.01\%$ \\ %
	\bottomrule
	\end{tabular}
	\label{tab: vary_num_layers}
\end{table}

\hl{
\subsection{Large-Scale, Multi-Layer Certification Benchmarks}
\label{sec: deep_nn_results}
In this section, we compare LP branch-and-bound with our multi-layer branching heuristic (Algorithm~\ref{alg: multi-layer}) to the state-of-the-art deep neural network verifier, $\alpha,\beta$-CROWN with $k$-FSB branching (a slight variant of filtered smart branching, introduced in \citet{de2021improved}). To implement our method, we embed our branching heuristic into the open-source $\alpha,\beta$-CROWN branch-and-bound verifier \citep{wang2021beta}. We consider two image classification certification benchmarks: 1) \texttt{cifar10\_resnet}, a benchmark from the $\alpha,\beta$-CROWN \citep{wang2021beta} repository that was used in VNN-COMP 2021 \citep{bak2021second}, and 2) the \texttt{cifar2020} benchmark from VNN-COMP 2022 \citep{muller2022third}. These large-scale problems involve deep neural networks with multiple layers and complex structures.

Table~\ref{tab: vnn} shows that our method yields performance comparable to that of the baseline $\alpha,\beta$-CROWN with $k$-FSB branching. We remark that the differences in computation time are negligible. Indeed, the implementation of either branching heuristic only requires the computation of the preactivation bounds $l^{[k]},u^{[k]}$ and an evaluation of a simple branching score function at each neuron (with $k$-FSB depending on one additional backwards pass through an auxiliary variable that our method does not require). Overall, our experiments suggest that LP branching scheme outperforms the state-of-the-art on small-to-moderately-sized single-hidden layer certification problems, and achieves comparable performance in large-scale deep neural network certification problems.
}

\begin{table}[tbh]
\label{tab: vnn}
    \centering
    \hl{
    \caption{Our multi-layer branching heuristic versus $k$-FSB branching in $\alpha,\beta$-CROWN branch-and-bound verifier. Benchmarks include \protect\subref{tab: cifar10_resnet} \texttt{cifar10\_resnet}, a residual neural network with 2 residual blocks,  5 convolutional layers, and 2 fully connected affine layers, and \protect\subref{tab: cifar2020} \texttt{cifar2020}, a 4-layer ReLU network with 2 convolutional layers and 2 fully connected affine layers. The reported ``Accuracy'' is the ratio between the number of test inputs that are successfully certified and the total number of test inputs. ``Time'' is the average certification time per test input.}   
    \captionsetup[subfloat]{position=top} 
    \subfloat[\texttt{cifar10\_resnet}]{
    \label{tab: cifar10_resnet}
    \begin{tabular}{l r r r}
         \toprule
         Method & Accuracy & Time (seconds) \\ 
         \midrule
         Our LP & 30/56 & 56.99 \\ 
         $k$-FSB & 30/56 & 53.12 \\ 
         \bottomrule
    \end{tabular}
    }
    \subfloat[\texttt{cifar2020}]{
    \label{tab: cifar2020}
    \begin{tabular}{l r r r}
        \toprule
         Method & Accuracy & Time (seconds) \\
         \midrule
         Our LP & 60/72 & 28.40 \\ 
         $k$-FSB & 61/72 & 33.11 \\ 
         \bottomrule
    \end{tabular}
    }
    }
\end{table}

\section{Conclusions}
\label{sec: conclusions}

In this paper, we propose intelligently designed closed-form branching schemes for linear programming (LP) and semidefinite programming (SDP) robustness certification methods of ReLU neural networks. The branching schemes are derived by minimizing the worst-case error induced by the corresponding convex relaxations \hl{on single-hidden layer networks}, which is theoretically justified by showing that minimizing the true relaxation error is NP-hard. The proposed techniques are experimentally substantiated on benchmark datasets by demonstrating significant reduction in relaxation error \hl{for moderately-sized single-hidden layer models, and performance on par with the state-of-the-art on large-scale, deep models}. 




\acks{This work was supported in part by U.S.\ Army Research Laboratory and the U.S.\ Army Research Office under Grant W911NF2010219, and in part by ONR and NSF.}



\appendix

\section{Proof of Proposition \ref{prop: partitioned_relaxation_bound}}
\label{sec: proof_of_partitioned_relaxation_bound}

\begin{proof}
	Assume that $\hl{\phi^*}(\mathcal{X}) > \max_{j\in\{1,2,\dots,p\}}\hl{\hat{\phi}_{\textup{LP}}^*}(\mathcal{X}^{(j)})$. Then,
	\begin{equation}
	\hl{\phi^*}(\mathcal{X})>\hl{\hat{\phi}_{\textup{LP}}^*}(\mathcal{X}^{(j)}) ~ \text{for all $j\in\{1,2,\dots,p\}$}. \label{eq: partitioned_relaxation_bound_assumption}
	\end{equation}
	Let $(x^*,z^*)$ denote an optimal solution to the unrelaxed problem \eqref{eq: robustness_certification_problem}, i.e., $x^*\in\mathcal{X}$, $z^*=f(x^*)$, and
	\begin{equation}
	c^\top z^* = \hl{\phi^*}(\mathcal{X}). \label{eq: partitioned_relaxation_bound_optimality}
	\end{equation}
	Since $\cup_{j=1}^p\mathcal{X}^{(j)} = \mathcal{X}$, there exists $j^*\in\{1,2,\dots,p\}$ such that $x^*\in\mathcal{X}^{(j^*)}$. Since $x^*\in\mathcal{X}^{(j^*)}$ and $z^*=f(x^*)$, it holds that $(x^*,z^*)\in\mathcal{N}_\textup{LP}^{(j^*)}$, where $\mathcal{N}_\textup{LP}^{(j^*)}$ is the relaxed network constraint set defined by $\mathcal{X}^{(j^*)}$. Therefore, 
	\begin{equation*}
	c^\top z^* \le \sup\{c^\top z : x\in\mathcal{X}^{(j^*)}, ~ (x,z)\in\mathcal{N}_\textup{LP}^{(j^*)}\} = \hl{\hat{\phi}_{\textup{LP}}^*}(\mathcal{X}^{(j^*)}) < \hl{\phi^*}(\mathcal{X}),
	\end{equation*}
	where the first inequality comes from the feasibility of $(x^*,z^*)$ over the $j^{*\text{th}}$ subproblem and the final inequality is due to \eqref{eq: partitioned_relaxation_bound_assumption}. This contradicts the optimality of $(x^*,z^*)$ given in \eqref{eq: partitioned_relaxation_bound_optimality}. Hence, \eqref{eq: partitioned_relaxation_bound} must hold.
\end{proof}
\hspace*{\fill}

\section{Proof of Proposition \ref{prop: improving_the_lp_relaxation_bound}}
\label{sec: proof_of_partition_improvement_lp}

\begin{proof}
	Let $j\in\{1,2,\dots,p\}$. It will be shown that $\mathcal{N}_\textup{LP}^{(j)} \subseteq \mathcal{N}_\textup{LP}$. Let $(x,z)\in\mathcal{N}_\textup{LP}^{(j)}$. Define $u'=u^{(j)}$, $l'=l^{(j)}$, and
	\begin{align*}
	g(x) ={}& u\odot(Wx-l)\oslash(u-l), \\
	g'(x) ={}& u'\odot (Wx-l')\oslash(u'-l').
	\end{align*}
	Then, by letting $\Delta g(x) = g(x) - g'(x) = a\odot (Wx) + b$, where
	\begin{align*}
	a ={}& u\oslash(u-l) - u'\oslash(u'-l'), \\
	b ={}& u'\odot l'\oslash(u'-l') - u\odot l\oslash(u-l),
	\end{align*}
	the following relations are derived for all $i\in\{1,2,\dots,n_z\}$:
	\begin{gather*}
	g^*_i \coloneqq \inf_{\{x : l'\le Wx\le u'\}} (\Delta g(x))_i	\ge \inf_{\{\hat{z} : l'\le \hat{z}\le u'\}} (a\odot \hat{z} + b)_i \\
	= \inf_{\{\hat{z}_i : l_i' \le \hat{z}_i \le u'_i\}} (a_i \hat{z}_i + b_i) =
	\begin{aligned}
	\begin{cases}
	a_i l_i' + b_i & \text{if $a_i\ge 0$}, \\
	a_i u_i' + b_i & \text{if $a_i< 0$}.
	\end{cases}
	\end{aligned}
	\end{gather*}
	In the case that $a_i \ge 0$, we have that
	\begin{align*}
	g_i^* \ge{}& a_i l_i' + b_i = \left(\frac{u_i}{u_i-l_i}-\frac{u_i'}{u_i'-l_i'}\right)l_i' + \left(\frac{u_i'l_i'}{u_i'-l_i'}-\frac{u_i l_i}{u_i-l_i}\right) = \frac{u_i}{u_i-l_i}(l_i'-l_i) \ge 0,
	\end{align*}
	where the final inequality comes from the fact that $u\ge 0$, $l'\ge l$, and $u> l$. On the other hand, if $a_i < 0$, it holds that
	\begin{gather*}
	g_i^* \ge a_i u_i' + b_i = \left(\frac{u_i}{u_i-l_i}-\frac{u_i'}{u_i'-l_i'}\right)u_i' + \left(\frac{u_i'l_i'}{u_i'-l_i'}-\frac{u_i l_i}{u_i-l_i}\right) \\
	= \frac{u_i}{u_i-l_i}(u_i'-l_i) - u_i' = \frac{u_i'-u_i}{u_i-l_i}l_i \ge 0,
	\end{gather*}
	where the final inequality comes from the fact that $u'\le u$, $l\le 0$, and $u>l$. Therefore,
	\begin{equation*}
	g^* = (g^*_1,g^*_2,\dots,g^*_{n_z})\ge 0,
	\end{equation*}
	which implies that $\Delta g(x) = g(x) - g'(x) \ge 0$ for all $x$ such that $l^{(j)}=l'\le Wx\le u'=u^{(j)}$. Hence, since $(x,z)\in\mathcal{N}_\textup{LP}^{(j)}$, it holds that $z\ge 0$, $z\ge Wx$, and
	\begin{equation*}
	z \le g'(x) \le g(x) = u\odot(Wx-l)\oslash(u-l).
	\end{equation*}
	Therefore, we have that $(x,z)\in\mathcal{N}_\textup{LP}$.
	
	Since $\mathcal{X}^{(j)}\subseteq \mathcal{X}$ (by definition) and $\mathcal{N}_\textup{LP}^{(j)}\subseteq \mathcal{N}_\textup{LP}$, it holds that the solution to the problem over the smaller feasible set gives a lower bound to the original solution: $\hl{\hat{\phi}_{\textup{LP}}^*}(\mathcal{X}^{(j)}) \le \hl{\hat{\phi}_{\textup{LP}}^*}(\mathcal{X})$. Finally, since $j$ was chosen arbitrarily, this implies the desired inequality \eqref{eq: improving_the_lp_relaxation_bound}.
\end{proof}
\hspace*{\fill}

\section{Proof of Proposition \ref{prop: np-hardness_of_optimal_partition}}
\label{sec: proof_of_np-hardness_optimal_partition}

\newcounter{para}
\newcommand\numpara[1]{\par\refstepcounter{para}\textbf{Step \thepara:\space#1.\space}}

\newcounter{subpara}
\newcommand\numsubpara[1]{\par\refstepcounter{subpara}\textbf{Step \thepara.\thesubpara:\space#1.\space}}

\begin{proof}
We prove the result by reducing an arbitrary instance of the Min-$\mathcal{K}$-Union problem to an instance of the optimal partitioning problem \eqref{eq: first_layer_partition_problem}. The proof is broken down into steps. In Step \ref{sec: np-hard_min-k-union_problem}, we introduce the Min-$\mathcal{K}$-Union problem. We then construct a specific neural network based on the parameters of the Min-$\mathcal{K}$-Union problem in Step \ref{sec: np-hard_neural_network_construction}. In Step \ref{sec: np-hard_dense_partition_lp}, we construct the solution to the partitioned LP relaxation for our neural network in the case that the partition is performed along all input coordinates. In Step \ref{sec: np-hard_sparse_partition_lp}, we construct the solution to the partitioned LP relaxation in the case that only a subset of the input coordinates are partitioned. Finally, in Step \ref{sec: np-hard_min-k-union_from_optimal_partition}, we show that the solution to the Min-$\mathcal{K}$-Union problem can be constructed from the solution to the optimal partitioning problem, i.e., by finding the best subset of coordinates to partition along in the fourth step. As a consequence, we show that optimal partitioning is NP-hard.

\numpara{Arbitrary Min-$\mathcal{K}$-Union Problem}
\label{sec: np-hard_min-k-union_problem}
Suppose that we are given an arbitrary instance of the Min-$\mathcal{K}$-Union problem, i.e., a finite number of finite sets $\mathcal{S}_1,\mathcal{S}_2,\dots,\mathcal{S}_n$ and a positive integer $\mathcal{K}\le n$. Since each set $\mathcal{S}_j$ is finite, the set $\bigcup_{j=1}^n\mathcal{S}_j$ is finite with cardinality $m\coloneqq \left|\bigcup_{i=j}^n\mathcal{S}_j\right|\in\mathbb{N}$. Therefore, there exists a bijection between the elements of $\bigcup_{j=1}^n\mathcal{S}_j$ and the set $\{1,2,\dots,m\}$. Hence, without loss of generality, we assume $\mathcal{S}_j\subseteq\mathbb{N}$ for all $j\in\{1,2,\dots,n\}$ such that $\bigcup_{j=1}^n\mathcal{S}_j=\{1,2,\dots,m\}$. In this Min-$\mathcal{K}$-Union problem, the objective is to find $\mathcal{K}$ sets $\mathcal{S}_{j_1},\mathcal{S}_{j_2},\dots,\mathcal{S}_{j_{\mathcal{K}}}$ among the collection of $n$ given sets such that $\left|\bigcup_{i=1}^{\mathcal{K}}\mathcal{S}_{j_i}\right|$ is minimized over all choices of $\mathcal{K}$ sets. In what follows, we show that the solution to this problem can be computed by solving a particular instance of the optimal partitioning problem \eqref{eq: first_layer_partition_problem}.

\numpara{Neural Network Construction}
\label{sec: np-hard_neural_network_construction}
Consider a $3$-layer ReLU network, where $x^{[0]},x^{[1]}\in\mathbb{R}^{n}$ and $x^{[2]},x^{[3]}\in\mathbb{R}^{m}$. Let the weight vector on the output be $c=\mathbf{1}_m$. Take the input uncertainty set to be $\mathcal{X}=[-1,1]^n$. Let $W^{[0]}=I_n$ and $W^{[2]}=I_m$. In addition, construct the weight matrix on the first layer to be $W^{[1]}\in\mathbb{R}^{m\times n}$ such that
\begin{equation*}
    W^{[1]}_{ij} = \begin{aligned}
    \begin{cases}
    1 & \text{if $i\in\mathcal{S}_j$}, \\
    0 & \text{otherwise}.
    \end{cases}
    \end{aligned}
\end{equation*}
We remark that, since all entries of $c=\mathbf{1}_m$, $W^{[0]}=I_n$, $W^{[1]}$, and $W^{[2]}=I_m$ are nonnegative, the optimal value of the unrelaxed certification problem \eqref{eq: unrelaxed_certification_k_layer} is $\hl{\phi^*}(\mathcal{X})=\mathbf{1}_m^\top W^{[1]} \mathbf{1}_n$.

To finish defining the network and its associated LP relaxations, we must specify the preactivation bounds at each layer. Since all weights of the neural network are nonnegative, the largest preactivation at each layer is attained when the input is $x^{[0]}=\mathbf{1}_n$, the element-wise maximum vector in $\mathcal{X}$. The preactivations corresponding to this input are $\hat{z}^{[1]}=\mathbf{1}_n$, $\hat{z}^{[2]}=W^{[1]}\mathbf{1}_n$, and $\hat{z}^{[3]}=W^{[1]}\mathbf{1}_n$. Therefore, setting
\begin{align*}
    u^{[1]}&=2\mathbf{1}_n, \\
    u^{[2]}&=\frac{3}{2}W^{[1]}\mathbf{1}_n, \\
    u^{[3]}&=\frac{5}{4}W^{[1]}\mathbf{1}_n+\frac{1}{8}\mathbf{1}_m,
\end{align*}
we obtain valid preactivation upper bounds. Similarly, taking
\begin{equation*}
    l^{[k]}=-u^{[k]}
\end{equation*}
for all $k\in\{1,2,3\}$ defines valid preactivation lower bounds.

\numpara{Densely Partitioned LP Relaxation}
\label{sec: np-hard_dense_partition_lp}
With the network parameters defined, we now consider the first variant of our partitioned LP relaxation. In particular, we consider the relaxation \eqref{eq: def_partitioned_objective} where all coordinates of the first layer are partitioned. We denote by $\bar{\phi}(\mathcal{X})$ the optimal objective value of this problem:
\begin{equation}
    \bar{\phi}(\mathcal{X}) = \max_{j'\in\{1,2,\dots,2^n\}} \hl{\hat{\phi}_{\textup{LP}}^*}(\mathcal{X}^{(j')}), \label{eq: baseline_in_proof_of_np-hard}
\end{equation}
where $\hl{\hat{\phi}_{\textup{LP}}^*}(\mathcal{X}^{(j')})$ denotes the optimal objective value of
\begin{equation}
    \begin{aligned}
        & \text{maximize} && c^\top x^{[3]} & \\
        & \text{subject to} && x^{[0]}\in\mathcal{X}^{(j')}, & \\
        &&& x^{[k+1]}\ge W^{[k]}x^{[k]}, & k\in\{0,1,2\}, \\
        &&& x^{[k+1]}\ge 0, & k\in\{0,1,2\}, \\
        &&& x^{[k+1]} \le u^{[k+1]}\odot (W^{[k]}x^{[k]}-l^{[k+1]})\oslash(u^{[k+1]}-l^{[k+1]}), & k\in\{0,1,2\}, \\
        &&& x^{[1]} = \relu(W^{[0]}x^{[0]}). &
    \end{aligned} \label{eq: baseline_in_proof_of_np-hard_subprob}
\end{equation}
This problem serves as a baseline; this is the tightest LP relaxation of the certification problem among all those with partitioning along the input coordinates. (Recall that the final equality constraint in \eqref{eq: baseline_in_proof_of_np-hard_subprob} is linear over the restricted feasible set $\mathcal{X}^{(j')}$.)

We will now show that an optimal solution $\bar{x}=(\bar{x}^{[0]},\bar{x}^{[1]},\bar{x}^{[2]},\bar{x}^{[3]})$ of the partitioned LP defined by \eqref{eq: baseline_in_proof_of_np-hard} and \eqref{eq: baseline_in_proof_of_np-hard_subprob} can be taken to satisfy
\begin{equation*}
    \bar{x}^{[3]} = \frac{5}{4}W^{[1]}\mathbf{1}_n + \frac{1}{16}\mathbf{1}_m. \label{eq: np-hard_densely_partitioned_solution}
\end{equation*}
To see this, note that since all weights of the network and optimization \eqref{eq: baseline_in_proof_of_np-hard} are nonnegative, the optimal activations $\bar{x}$ will be as large as possible in all coordinates and at all layers. Therefore, since the input is constrained to $\mathcal{X}=[-1,1]^n$, the optimal input for \eqref{eq: baseline_in_proof_of_np-hard} is $\bar{x}^{[0]}=\mathbf{1}_n$. Since the ReLU constraint in \eqref{eq: baseline_in_proof_of_np-hard_subprob} is exact, this implies that the optimal activation over this part at the first layer is
\begin{equation*}
\bar{x}^{[1]} = \relu(W^{[0]}\bar{x}^{[0]}) = \relu(\mathbf{1}_n) = \mathbf{1}_n.
\end{equation*}
Now, for the second layer, the activation attains its upper bound. Since $u^{[2]}=-l^{[2]}=\frac{3}{2}W^{[1]}\mathbf{1}_n$, this implies that
\begin{align*}
    \bar{x}^{[2]} &= u^{[2]}\odot(W^{[1]}\bar{x}^{[1]} - l^{[2]})\oslash(u^{[2]}-l^{[2]}) \\
    &= u^{[2]}\odot(W^{[1]}\bar{x}^{[1]} + u^{[2]})\oslash(2 u^{[2]}) \\
    &= \frac{1}{2}(W^{[1]}\bar{x}^{[1]} + u^{[2]}) \\
    &= \frac{1}{2}\left(W^{[1]}\mathbf{1}_n + \frac{3}{2}W^{[1]}\mathbf{1}_n\right) \\
    &= \frac{5}{4}W^{[1]}\mathbf{1}_n.
\end{align*}
Similarly, for the third layer, we find that the optimal activation attains its upper bound as well. Since $u^{[3]} = -l^{[3]} = \frac{5}{4}W^{[1]}\mathbf{1}_n+\frac{1}{8}\mathbf{1}_m$ and $W^{[2]}=I_m$ this gives that
\begin{align*}
    \bar{x}^{[3]} &= u^{[3]}\odot(W^{[2]}\bar{x}^{[2]} - l^{[3]})\oslash(u^{[3]}-l^{[3]}) \\
    &= u^{[3]}\odot(\bar{x}^{[2]} + u^{[3]})\oslash(2 u^{[3]}) \\
    &= \frac{1}{2}(\bar{x}^{[2]}+u^{[3]}) \\
    &= \frac{1}{2}\left( \frac{5}{4}W^{[1]}\mathbf{1}_n + \frac{5}{4}W^{[1]}\mathbf{1}_n + \frac{1}{8}\mathbf{1}_m \right) \\
    &= \frac{5}{4}W^{[1]}\mathbf{1}_n + \frac{1}{16}\mathbf{1}_m,
\end{align*}
as claimed in \eqref{eq: np-hard_densely_partitioned_solution}. It is easily verified that $\bar{x}$ as computed above satisfies all constraints of the problem \eqref{eq: baseline_in_proof_of_np-hard_subprob} over the part of the partition containing $\bar{x}^{[0]}=\mathbf{1}_n$.

\numpara{Sparsely Partitioned LP Relaxation}
\label{sec: np-hard_sparse_partition_lp}
We now introduce the second variant of the partitioned LP relaxation. In particular, let $\mathcal{J}_p\subseteq\{1,2,\dots,n\}$ be an index set such that $|\mathcal{J}_p|=n_p=n-\mathcal{K}$. Denote the complement of $\mathcal{J}_p$ by $\mathcal{J}_p^c = \{1,2,\dots,n\}\setminus\mathcal{J}_p$. We consider the partitioned LP defined in \eqref{eq: def_partitioned_objective}, which partitions along each coordinate in the index set $\mathcal{J}_p$. The optimal value of this problem is denoted by $\hl{\phi^*_{\mathcal{J}_p}}(\mathcal{X})$, and we denote an optimal solution by $\hat{x}=(\hat{x}^{[0]},\hat{x}^{[1]},\hat{x}^{[2]},\hat{x}^{[3]})$. We will compute $\hat{x}$ in three steps.

\numsubpara{Upper Bounding the Solution}
We start by upper bounding the final layer activation of the solution. In particular, we claim that the optimal solution $\hat{x}$ satisfies
\begin{equation}
    \hat{x}^{[3]} \le t \coloneqq u^{[3]} - \frac{1}{16}\mathbf{1}_{\mathcal{I}^c}. \label{eq: sparsely_partitioned_bound_og}
\end{equation}
where $\mathcal{I} = \bigcup_{j\in\mathcal{J}_p^c} S_j \subseteq \{1,2,\dots,m\}$ and $\mathcal{I}^c = \{1,2,\dots,m\}\setminus \mathcal{I}$. Since $\bar{x}^{[3]} = u^{[3]}-\frac{1}{16}\mathbf{1}_m$, the bound \eqref{eq: sparsely_partitioned_bound_og} is equivalent to
\begin{equation}
    \hat{x}^{[3]}_i \le t_i = \begin{aligned}
    \begin{cases}
    u_i^{[3]} & \text{if $i\in\mathcal{I}$}, \\
    \bar{x}^{[3]}_i & \text{if $i\in\mathcal{I}^c$},
    \end{cases}
    \end{aligned} \label{eq: sparsely_partitioned_bound}
\end{equation}
for all $i\in\{1,2,\dots,m\}$. We now prove the element-wise representation of the bound, \eqref{eq: sparsely_partitioned_bound}.

First, by the feasibility of $\hat{x}$ and the definitions of $u^{[3]},l^{[3]}$, it must hold for all $i\in\{1,2,\dots,m\}$ that
\begin{equation*}
    \hat{x}^{[3]}_i \le \frac{u^{[3]}_i}{u_i^{[3]} - l_i^{[3]}}(w_i^{[2]\top}\hat{x}^{[2]}-l_i^{[3]}) = \frac{1}{2}(w_i^{[2]\top}\hat{x}^{[2]}+u_i^{[3]}),
\end{equation*}
and also that
\begin{equation*}
    \hat{x}_i^{[3]} \ge w_i^{[2]\top}\hat{x}^{[2]}.
\end{equation*}
Combining these inequalities, we find that $\hat{x}^{[3]}_i \le \frac{1}{2}(\hat{x}^{[3]}_i+u_i^{[3]})$, or, equivalently, that
\begin{equation*}
    \hat{x}^{[3]}_i \le u_i^{[3]}.
\end{equation*}
This bound holds for all $i\in\{1,2,\dots,m\}$, and therefore it also holds for $i\in\mathcal{I}$. This proves the first case in the bound \eqref{eq: sparsely_partitioned_bound}.

We now prove the second case of the claimed upper bound. For this case, suppose $i\notin\mathcal{I}$. Then $i\notin S_j$ for all $j\in\mathcal{J}_p^c$, which implies that
\begin{equation*}
    W_{ij}^{[1]} = 0 ~ \text{for all $j\in\mathcal{J}_p^c$},
\end{equation*}
by the definition of $W^{[1]}$. Therefore,
\begin{equation*}
    w_i^{[1]\top}\hat{x}^{[1]} = \sum_{j=1}^n W_{ij}^{[1]}\hat{x}_j^{[1]} = \sum_{j\in\mathcal{J}_p^c}W_{ij}^{[1]}\hat{x}_j^{[1]} + \sum_{j\in\mathcal{J}_p}W_{ij}^{[1]}\hat{x}_j^{[1]} = \sum_{j\in\mathcal{J}_p}W_{ij}^{[1]}\hat{x}_j^{[1]}.
\end{equation*}
Now, note that for $j\in\mathcal{J}_p$, the $j^\text{th}$ coordinate of the input is being partitioned, and therefore the optimal solution must satisfy
\begin{equation*}
    \hat{x}^{[1]}_j = \relu(w_j^{[0]\top}\hat{x}^{[0]}) = \relu(e_j^\top \hat{x}^{[0]}) = \relu(\hat{x}_j^{[0]}) \le 1,
\end{equation*}
since $\hat{x}_j^{[0]}\in[-1,1]$. Therefore,
\begin{align*}
    w_i^{[1]\top}\hat{x}^{[1]} \le \sum_{j\in\mathcal{J}_p^c} W_{ij}^{[1]} \le \sum_{j=1}^n W_{ij}^{[1]} = w_i^{[1]\top}\mathbf{1}_n.
\end{align*}
It follows from the feasibility of $\hat{x}$ and the definitions of $u^{[2]},l^{[2]}$ that
\begin{gather*}
    \hat{x}_i^{[2]} \le \frac{u_i^{[2]}}{u_i^{[2]}-l_i^{[2]}}(w_i^{[1]\top}\hat{x}^{[1]} - l_i^{[2]}) = \frac{1}{2}(w_i^{[1]\top}\hat{x}^{[1]}+u_i^{[2]}) \\
    \le \frac{1}{2}\left(w_i^{[1]\top}\mathbf{1}_n + \frac{3}{2}w_i^{[1]\top}\mathbf{1}_n\right) = \frac{5}{4}w_i^{[1]\top}\mathbf{1}_n = \bar{x}^{[2]}_i,
\end{gather*}
where $\bar{x}$ is the solution computed for the densely partitioned LP relaxation in Step \ref{sec: np-hard_dense_partition_lp}. Therefore, we conclude that for all $i\notin\mathcal{I}$, it holds that
\begin{equation*}
    \hat{x}^{[3]}_i \le \frac{u_i^{[3]}}{u_i^{[3]}-l_i^{[3]}}(w_i^{[2]\top}\hat{x}^{[2]} - l_i^{[3]}) \le \frac{u_i^{[3]}}{u_i^{[3]}-l_i^{[3]}}(w_i^{[2]\top}\bar{x}^{[2]} - l_i^{[3]}) = \bar{x}^{[3]}_i,
\end{equation*}
by our previous construction of $\bar{x}^{[3]}$. Thus, we have proven the second case in \eqref{eq: sparsely_partitioned_bound} holds. Hence, the claimed bound \eqref{eq: sparsely_partitioned_bound_og} holds.

\numsubpara{Feasibility of Upper Bound}
Let us define $x=(x^{[0]},x^{[1]},x^{[2]},x^{[3]})$, a point in $\mathbb{R}^n\times \mathbb{R}^n\times \mathbb{R}^m\times \mathbb{R}^m$, by
\begin{equation*}
    x^{[0]} = \mathbf{1}_n, \qquad x^{[1]} = \mathbf{1}_{\mathcal{J}_p} + \frac{5}{4}\mathbf{1}_{\mathcal{J}_p^c}, \qquad x^{[2]} = u^{[3]} - \frac{1}{8}\mathbf{1}_{\mathcal{I}^c}, \qquad x^{[3]} = u^{[3]} - \frac{1}{16}\mathbf{1}_{\mathcal{I}^c}.
\end{equation*}
Note that $x^{[3]}$ equals the upper bound $t=(t_1,t_2,\dots,t_m)$. We now show that $x$ is feasible for the partitioned LP defined by \eqref{eq: def_partitioned_objective_subprob} and \eqref{eq: def_partitioned_objective}.

First, the input uncertainty constraint is satisfied, since $x^{[0]}=\mathbf{1}_n\in \mathcal{X}^{(j')} \subseteq \mathcal{X}$ for some part $\mathcal{X}^{(j')}$. Next, the relaxed ReLU constraints at the first layer are satisfied, since
\begin{gather*}
    x^{[1]} = \mathbf{1}_{\mathcal{J}_p} + \frac{5}{4}\mathbf{1}_{\mathcal{J}_p^c} \ge 0, \tag*{(Layer 1 lower bound.)} \\
    x^{[1]} - W^{[0]}x^{[0]} = \frac{1}{4}\mathbf{1}_{\mathcal{J}_p^c} \ge 0, \tag*{(Layer 1 lower bound.)} \\
    x^{[1]} - u^{[1]}\odot(W^{[0]}x^{[0]} - l^{[1]})\oslash(u^{[1]}-l^{[1]}) = -\frac{1}{2}\mathbf{1}_{\mathcal{J}_p} - \frac{1}{4}\mathbf{1}_{\mathcal{J}_p^c} \le 0. \tag*{(Layer 1 upper bound.)}
\end{gather*}
The relaxed ReLU constraints are also satisfied in the second layer, since
\begin{gather*}
    x^{[2]} = \frac{5}{4}W^{[1]}\mathbf{1}_n + \frac{1}{8}\mathbf{1}_{\mathcal{I}} \ge 0, \tag*{(Layer 2 lower bound.)} \\
    x^{[2]} - W^{[1]}x^{[1]} = \frac{1}{4}W^{[1]}\mathbf{1}_{\mathcal{J}_p} + \frac{1}{8}\mathbf{1}_{\mathcal{I}} \ge 0, \tag*{(Layer 2 lower bound.)} \\
    x^{[2]} - u^{[2]}\odot(W^{[1]}x^{[1]} - l^{[2]})\oslash(u^{[2]}-l^{[2]}) = \frac{1}{8}(\mathbf{1}_{\mathcal{I}} - W^{[1]}\mathbf{1}_{\mathcal{J}_p^c}) \le 0. \tag*{(Layer 2 upper bound.)}
\end{gather*}
The final inequality above follows from the fact that either $(\mathbf{1}_{\mathcal{I}})_i=0$ or $(\mathbf{1}_{\mathcal{I}})_i=1$. For coordinates $i$ such that $(\mathbf{1}_{\mathcal{I}})_i=0$, the inequality obviously holds. For coordinates $i$ such that $(\mathbf{1}_{\mathcal{I}})_i=1$, we know that $i\in\mathcal{I}$, implying that $i\in \mathcal{S}_j$ for some $j\in\mathcal{J}_p^c$. This in turn implies that $W_{ij}^{[1]}=1$ for some $j\in\mathcal{J}_p^c$, and therefore $w_i^{[1]\top}\mathbf{1}_{\mathcal{J}_p^c} = \sum_{j\in\mathcal{J}_p^c}W_{ij}^{[1]} \ge 1 = (\mathbf{1}_{\mathcal{I}})_i$.

Continuing to check feasibility of $x$, the relaxed ReLU constraints in the final layer are also satisfied:
\begin{gather*}
    x^{[3]} = \frac{5}{4}W^{[1]}\mathbf{1}_n + \frac{1}{16}\mathbf{1}_m + \frac{1}{16}\mathbf{1}_{\mathcal{I}} \ge 0, \tag*{(Layer 3 lower bound.)} \\
    x^{[3]} - W^{[2]}x^{[2]} = \frac{1}{16}\mathbf{1}_{\mathcal{I}^c} \ge 0, \tag*{(Layer 3 lower bound.)} \\
    x^{[3]} - u^{[3]}\odot(W^{[2]}x^{[2]} - l^{[3]})\oslash(u^{[3]}-l^{[3]}) = 0 \le 0. \tag*{(Layer 3 upper bound.)}
\end{gather*}
Hence, the relaxed ReLU constraints are satisfied at all layers. The only remaining constraints to verify are the exact ReLU constraints for the partitioned input indices $\mathcal{J}_p$. Indeed, for all $j\in\mathcal{J}_p$, we have that
\begin{align*}
    x^{[1]}_j - \relu(W^{[0]}x^{[0]})_j &= (\mathbf{1}_{\mathcal{J}_p})_j + \frac{5}{4}(\mathbf{1}_{\mathcal{J}_p^c})_j - \relu(\mathbf{1}_n)_j = 1+0-1 = 0.
\end{align*}
Hence, the ReLU equality constraint is satisfied for all input coordinates in $\mathcal{J}_p$. Therefore, our proposed point $x$ is feasible for \eqref{eq: def_partitioned_objective}.

\numsubpara{Solution to Sparsely Partitioned LP}
As shown in the previous step, the proposed point $x=(x^{[0]},x^{[1]},x^{[2]},x^{[3]})$ is feasible for the partitioned LP defined by \eqref{eq: def_partitioned_objective_subprob} and \eqref{eq: def_partitioned_objective}. Recall from the upper bound \eqref{eq: sparsely_partitioned_bound_og} that our solution $\hat{x}=(\hat{x}^{[0]},\hat{x}^{[1]},\hat{x}^{[2]},\hat{x}^{[3]})$ satisfies $\hat{x}^{[3]}\le t$. The objective value of the feasible point $x$ gives that
\begin{equation*}
    c^\top x^{[3]} = \sum_{i=1}^m x^{[3]} = \sum_{i=1}^m t_i \ge \sum_{i=1}^m \hat{x}^{[3]}_i = \hl{\phi^*_{\mathcal{J}_p}}(\mathcal{X}).
\end{equation*}
Since $\hl{\phi^*_{\mathcal{J}_p}}(\mathcal{X})$ is the maximum value of the objective for all feasible points, it must be that $c^\top x^{[3]} = \hl{\phi^*_{\mathcal{J}_p}}(\mathcal{X})$. Hence, the point $x$ is an optimal solution to \eqref{eq: def_partitioned_objective}. Therefore, we can write the final activation of our optimal solution $\hat{x}$ to \eqref{eq: def_partitioned_objective} as
\begin{equation}
    \hat{x}^{[3]} = x^{[3]} = t = u^{[3]} - \frac{1}{16}\mathbf{1}_{\mathcal{I}^c}. \label{eq: sparsely_partitioned_solution}
\end{equation}

\numpara{Min-$\mathcal{K}$-Union from Optimal Partition}
\label{sec: np-hard_min-k-union_from_optimal_partition}
With the solutions constructed in Steps \ref{sec: np-hard_dense_partition_lp} and \ref{sec: np-hard_sparse_partition_lp}, we compute the difference in the objective values between the two partitioned LP relaxations:
\begin{gather*}
    \hl{\phi^*_{\mathcal{J}_p}}(\mathcal{X}) - \bar{\phi}(\mathcal{X}) = c^\top \hat{x}^{[3]} - c^\top \bar{x}^{[3]} = c^\top \left( u^{[3]} - \frac{1}{16}\mathbf{1}_{\mathcal{I}^c} - \frac{5}{4}W^{[1]}\mathbf{1}_n - \frac{1}{16}\mathbf{1}_m \right) \\
    = c^\top \left( \frac{5}{4}W^{[1]}\mathbf{1}_n + \frac{1}{8}\mathbf{1}_m - \frac{1}{16}\mathbf{1}_{\mathcal{I}^c} - \frac{5}{4}W^{[1]}\mathbf{1}_n - \frac{1}{16}\mathbf{1}_m \right) = \frac{1}{16} c^\top (\mathbf{1}_m - \mathbf{1}_{\mathcal{I}^c}) \\
    = \frac{1}{16}c^\top\mathbf{1}_{\mathcal{I}} = \frac{1}{16}\sum_{i\in\mathcal{I}}1 = \frac{1}{16}\left| \mathcal{I} \right| = \frac{1}{16}\left| \bigcup_{j\in\mathcal{J}_p^c}S_j\right|.
\end{gather*}
Therefore,
\begin{equation}
    \left| \bigcup_{j\in\mathcal{J}_p^c}\mathcal{S}_j \right| = 16\left( \hl{\phi^*_{\mathcal{J}_p}}(\mathcal{X}) - \bar{\phi}(\mathcal{X}) \right), \label{eq: equivalence_between_optimal_partition_and_min-k-u}
\end{equation}
which holds for all partition index sets $\mathcal{J}_p\subseteq\{1,2,\dots,n\}$ such that $|\mathcal{J}_p|=n_p=n-\mathcal{K}$.

Now, let $\mathcal{J}_p^*$ be an optimal partition, i.e., a solution to \eqref{eq: first_layer_partition_problem} with our specified neural network parameters. Then, by \eqref{eq: equivalence_between_optimal_partition_and_min-k-u}, we have $\left|\bigcup_{j\in(\mathcal{J}^*_p)^c} \mathcal{S}_j\right| = 16\left( \hl{\phi^*_{\mathcal{J}_p^*}}(\mathcal{X}) - \bar{\phi}(\mathcal{X}) \right) \le 16\left( \hl{\phi^*_{\mathcal{J}_p}}(\mathcal{X}) - \bar{\phi}(\mathcal{X}) \right) = \left| \bigcup_{j\in\mathcal{J}_p^c}\mathcal{S}_j \right|$ for all $\mathcal{J}_p$ with $|\mathcal{J}_p| = n_p$. Since this holds for all $\mathcal{J}_p^c$ with $|\mathcal{J}_p^c|=n-n_p=\mathcal{K}$, this shows that the set $(\mathcal{J}_p^*)^c$ is an optimal solution to the Min-$\mathcal{K}$-Union problem specified at the beginning of the proof. Now, suppose the optimal partitioning problem in \eqref{eq: first_layer_partition_problem} could be solved for $\mathcal{J}^*_p$ in polynomial time. Then the optimal solution $(\mathcal{J}^*_p)^c$ to the Min-$\mathcal{K}$-Union problem is also computable in polynomial time. Since this holds for an arbitrary instance of the Min-$\mathcal{K}$-Union problem, this implies that the Min-$\mathcal{K}$-Union problem is polynomially solvable in general, which is a contradiction. Therefore, the problem \eqref{eq: first_layer_partition_problem} is NP-hard in general.
\end{proof}
\hspace*{\fill}

\section{Proof of Proposition \ref{prop: improving_the_sdp_relaxation_bound}}
\label{sec: proof_of_partition_improvement_sdp}

\begin{proof}
	Let $j\in\{1,2,\dots,p\}$. From the definition of the partition, it holds that $\mathcal{X}^{(j)}\subseteq \mathcal{X}$. What remains to be shown is that $\mathcal{N}_{\textup{SDP}}^{(j)} \subseteq \mathcal{N}_\textup{SDP}$.
	
	Let $P\in\mathcal{N}_{\textup{SDP}}^{(j)}$. Define $u' = u^{(j)}$ and $l' = l^{(j)}$. Since $P \in \mathcal{N}_{\textup{SDP}}^{(j)}$, it follows that
	\begin{align*}
	P_z\ge{}& 0, \\
	P_z\ge{}& WP_x, \\
	\diag(P_{zz}) ={}& \diag(WP_{xz}), \\
	\diag(P_{xx}) \leq{}& (l'+u') \odot P_x - l' \odot u', \\
	P_1 ={}& 1, \\
	P \succeq{}& 0.
	\end{align*}
	To show that $P\in\mathcal{N}_\textup{SDP}$, we should show that the above expressions imply that $\diag(P_{xx})\le (l+u)\odot P_x - l\odot u$. To do so, define $\Delta l_i\ge 0$ and $\Delta u_i \ge 0$ such that $l_i' = l_i + \Delta l_i$ and $u_i' = u_i - \Delta u_i$ for all $i \in \{1,2,\dots,n_x\}$. Then we find that 
	\begin{align*}
	((l'+u') \odot P_x - l' \odot u')_i ={}& (l_i'+u_i')(P_x)_i-l_i' u_i' \\
	={}& (l_i+u_i)(P_x)_i - l_i u_i + (\Delta l_i - \Delta u_i)(P_x)_i  \\
	   & - ( \Delta l_i u_i - \Delta u_i l_i - \Delta u_i \Delta l_i) \\
	={}& ((l+u) \odot P_x - l \odot u)_i + (\Delta l_i - \Delta u_i)(P_x)_i \\
	   & - ( \Delta l_i u_i - \Delta u_i l_i - \Delta u_i \Delta l_i) \\
	={}& ((l+u) \odot P_x - l \odot u)_i + \Delta_i,
	\end{align*}
	where $\Delta_i \coloneqq (\Delta l_i - \Delta u_i)(P_x)_i - ( \Delta l_i u_i - \Delta u_i l_i - \Delta u_i \Delta l_i)$. Therefore, it suffices to prove that $\Delta_i \leq 0$ for all $i$. Since $-l_i u_i \ge -l_i' u_i'$ by definition, it holds that $ \Delta l_i u_i - \Delta u_i l_i - \Delta u_i \Delta l_i \ge 0$. Thus, when $(\Delta l_i - \Delta u_i)(P_x)_i \le 0$, it holds that $\Delta_i \leq 0$, as desired. On the other hand, suppose that $(\Delta l_i - \Delta u_i)(P_x)_i \ge 0$. Then we find two cases:
	\begin{enumerate}
		\item $(\Delta l_i - \Delta u_i) \ge 0$ and $(P_x)_i \ge 0$. In this case, the maximum value of $(\Delta l_i - \Delta u_i)(P_x)_i$ is $(\Delta l_i - \Delta u_i)u'_i$. Therefore, the maximum value of $\Delta_i$ is
		\begin{align*}
		\Delta_i ={}& (\Delta l_i - \Delta u_i)u'_i - ( \Delta l_i u_i - \Delta u_i l_i - \Delta u_i \Delta l_i) \\
		={}& \Delta l_i(u'_i-u_i)-\Delta u_i u'_i + \Delta u_i l_i + \Delta u_i \Delta l_i \\
		={}& \Delta l_i (-\Delta u_i) + \Delta u_i \Delta l_i -\Delta u_i u'_i + \Delta u_i l_i \\
		={}& -\Delta u_i u'_i + \Delta u_i l_i.
		\end{align*}
		Both of the two final terms are nonpositive, and therefore $\Delta_i \le 0$.
		
		\item $(\Delta l_i - \Delta u_i) \le 0$ and $(P_x)_i \le 0$. In this case, the maximum value of $(\Delta l_i - \Delta u_i)(P_x)_i$ is $(\Delta l_i - \Delta u_i)l_i'$. Therefore, the maximum value of  $\Delta_i$ is
		\begin{align*}
		\Delta_i ={}& (\Delta l_i - \Delta u_i)l'_i - ( \Delta l_i u_i - \Delta u_i l_i - \Delta u_i \Delta l_i) \\
		={}& - \Delta u_i \Delta l_i + \Delta u_i \Delta l_i +\Delta l_i l'_i - \Delta l_i u_i \\
		={}& \Delta l_i l'_i - \Delta l_i u_i.
		\end{align*}
		Both of the two final terms are nonpositive, and therefore $\Delta_i \le 0$.
	\end{enumerate}
	Hence, we find that $(l'+u') \odot P_x - l' \odot u' \leq (l+u) \odot P_x - l \odot u$ for all $P\in\mathcal{N}_\textup{SDP}^{(j)}$, proving that $P\in\mathcal{N}_\textup{SDP}$, and therefore $\mathcal{N}_{\textup{SDP}}^{(j)} \subseteq \mathcal{N}_\textup{SDP}$.
	
	Since $\mathcal{X}^{(j)}\subseteq \mathcal{X}$ and $\mathcal{N}_{\textup{SDP}}^{(j)} \subseteq \mathcal{N}_\textup{SDP}$, it holds that the solution to the problem over the smaller feasible set lower bounds the original solution: $\hl{\hat{\phi}^*_\textup{SDP}}(\mathcal{X}^{(j)}) \le \hl{\hat{\phi}^*_\textup{SDP}}(\mathcal{X})$. Finally, since $j$ was chosen arbitrarily, this implies the desired inequality \eqref{eq: improving_the_sdp_relaxation_bound}.
\end{proof}
\hspace*{\fill}

\section{Proof of Lemma \ref{lem: bound_elements_psd_matrices}}
\label{sec: proof_of_bound_elements_psd_matrices}

\begin{proof}
    Let $i,j\in\{1,2,\dots,n\}$. Since $P$ is positive semidefinite, the $2^\text{nd}$-order principal minor $P_{ii}P_{jj} - P_{ij}^2$ is nonnegative, and therefore 
    \begin{equation}
        |P_{ij}| \le \sqrt{P_{ii}P_{jj}}. \label{eq: lemma_5_proof_intermediate}
    \end{equation}
    Furthermore, by the basic inequality that $2ab \le a^2+b^2$ for all $a,b\in\mathbb{R}$, we have that $\sqrt{P_{ii}P_{jj}}\le \frac{1}{2}(P_{ii} + P_{jj})$. Substituting this inequality into \eqref{eq: lemma_5_proof_intermediate} gives the desired bound.
\end{proof}
\hspace*{\fill}

\vskip 0.2in
\bibliography{references}

\begin{thebibliography}{48}
\providecommand{\natexlab}[1]{#1}
\providecommand{\url}[1]{\texttt{#1}}
\expandafter\ifx\csname urlstyle\endcsname\relax
  \providecommand{\doi}[1]{doi: #1}\else
  \providecommand{\doi}{doi: \begingroup \urlstyle{rm}\Url}\fi

\bibitem[Anderson and Sojoudi(2022)]{anderson2022data}
Brendon~G. Anderson and Somayeh Sojoudi.
\newblock Data-driven certification of neural networks with random input noise.
\newblock \emph{IEEE Transactions on Control of Network Systems}, 10\penalty0 (1):\penalty0 249--260, 2022.

\bibitem[Anderson et~al.(2020)Anderson, Ma, Li, and Sojoudi]{anderson2020tightened}
Brendon~G.\ Anderson, Ziye Ma, Jingqi Li, and Somayeh Sojoudi.
\newblock Tightened convex relaxations for neural network robustness certification.
\newblock In \emph{IEEE Conference on Decision and Control (CDC)}, pages 2190--2197, 2020.

\bibitem[Bak et~al.(2021)Bak, Liu, and Johnson]{bak2021second}
Stanley Bak, Changliu Liu, and Taylor~T. Johnson.
\newblock The second international verification of neural networks competition ({VNN}-{COMP} 2021): Summary and results.
\newblock \emph{arXiv preprint arXiv:2109.00498}, 2021.

\bibitem[Balunovic et~al.(2019)Balunovic, Baader, Singh, Gehr, and Vechev]{balunovic2019certifying}
Mislav Balunovic, Maximilian Baader, Gagandeep Singh, Timon Gehr, and Martin Vechev.
\newblock Certifying geometric robustness of neural networks.
\newblock In \emph{Advances in Neural Information Processing Systems (NeurIPS)}, pages 15287--15297, 2019.

\bibitem[Batten et~al.(2021)Batten, Kouvaros, Lomuscio, and Zheng]{batten2021efficient}
Ben Batten, Panagiotis Kouvaros, Alessio Lomuscio, and Yang Zheng.
\newblock Efficient neural network verification via layer-based semidefinite relaxations and linear cuts.
\newblock In \emph{International Joint Conference on Artificial Intelligence (IJCAI)}, pages 2184--2190, 2021.

\bibitem[Bojarski et~al.(2016)Bojarski, Del~Testa, Dworakowski, Firner, Flepp, Goyal, Jackel, Monfort, Muller, Zhang, Zhang, Zhao, and Zieba]{bojarski2016end}
Mariusz Bojarski, Davide Del~Testa, Daniel Dworakowski, Bernhard Firner, Beat Flepp, Prasoon Goyal, Lawrence~D.\ Jackel, Mathew Monfort, Urs Muller, Jiakai Zhang, Xin Zhang, Jake Zhao, and Karol Zieba.
\newblock End to end learning for self-driving cars.
\newblock \emph{arXiv preprint arXiv:1604.07316}, 2016.

\bibitem[Brix et~al.(2023)Brix, Bak, Liu, and Johnson]{brix2023fourth}
Christopher Brix, Stanley Bak, Changliu Liu, and Taylor~T. Johnson.
\newblock The fourth international verification of neural networks competition ({VNN}-{COMP} 2023): Summary and results.
\newblock \emph{arXiv preprint arXiv:2312.16760}, 2023.

\bibitem[Bunel et~al.(2020)Bunel, Lu, Turkaslan, Torr, Kohli, and Kumar]{bunel2020branch}
Rudy Bunel, Jingyue Lu, Ilker Turkaslan, Philip~H.S. Torr, Pushmeet Kohli, and M.~Pawan Kumar.
\newblock Branch and bound for piecewise linear neural network verification.
\newblock \emph{Journal of Machine Learning Research}, 21\penalty0 (42):\penalty0 1--39, 2020.

\bibitem[Chen et~al.(2020)Chen, Lasserre, Magron, and Pauwels]{chen2020semialgebraic}
Tong Chen, Jean~B. Lasserre, Victor Magron, and Edouard Pauwels.
\newblock Semialgebraic optimization for {Lipschitz} constants of {ReLU} networks.
\newblock \emph{Advances in Neural Information Processing Systems (NeurIPS)}, pages 19189--19200, 2020.

\bibitem[Chiu and Zhang(2023)]{chiu2023tight}
Hong-Ming Chiu and Richard~Y. Zhang.
\newblock Tight certification of adversarially trained neural networks via nonconvex low-rank semidefinite relaxations.
\newblock In \emph{International Conference on Machine Learning (ICML)}, pages 5631--5660, 2023.

\bibitem[De~Palma et~al.(2021)De~Palma, Bunel, Desmaison, Dvijotham, Kohli, Torr, and Kumar]{de2021improved}
Alessandro De~Palma, Rudy Bunel, Alban Desmaison, Krishnamurthy Dvijotham, Pushmeet Kohli, Philip~H.S. Torr, and M.~Pawan Kumar.
\newblock Improved branch and bound for neural network verification via {Lagrangian} decomposition.
\newblock \emph{arXiv preprint arXiv:2104.06718}, 2021.

\bibitem[Diamond and Boyd(2016)]{diamond2016cvxpy}
Steven Diamond and Stephen Boyd.
\newblock {CVXPY}: A {Python}-embedded modeling language for convex optimization.
\newblock \emph{Journal of Machine Learning Research}, 17\penalty0 (1):\penalty0 2909--2913, 2016.

\bibitem[Dua and Graff(2017)]{dua_2017}
Dheeru Dua and Casey Graff.
\newblock {UCI} machine learning repository, 2017.
\newblock URL \url{https://archive.ics.uci.edu/dataset/17/breast+cancer+wisconsin+diagnostic}.

\bibitem[Ehlers(2017)]{ehlers2017formal}
Ruediger Ehlers.
\newblock Formal verification of piece-wise linear feed-forward neural networks.
\newblock In \emph{International Symposium on Automated Technology for Verification and Analysis}, pages 269--286. Springer, 2017.

\bibitem[Fawzi et~al.(2016)Fawzi, Moosavi-Dezfooli, and Frossard]{fawzi2016robustness}
Alhussein Fawzi, Seyed-Mohsen Moosavi-Dezfooli, and Pascal Frossard.
\newblock Robustness of classifiers: from adversarial to random noise.
\newblock In \emph{Advances in Neural Information Processing Systems (NeurIPS)}, pages 1632--1640, 2016.

\bibitem[Fazlyab et~al.(2020)Fazlyab, Morari, and Pappas]{fazlyab2020safety}
Mahyar Fazlyab, Manfred Morari, and George~J. Pappas.
\newblock Safety verification and robustness analysis of neural networks via quadratic constraints and semidefinite programming.
\newblock \emph{IEEE Transactions on Automatic Control}, 67\penalty0 (1):\penalty0 1--15, 2020.

\bibitem[Franceschi et~al.(2018)Franceschi, Fawzi, and Fawzi]{franceschi2018robustness}
Jean-Yves Franceschi, Alhussein Fawzi, and Omar Fawzi.
\newblock Robustness of classifiers to uniform $\ell_p$ and {Gaussian} noise.
\newblock In \emph{International Conference on Artificial Intelligence and Statistics (AISTATS)}, pages 1280--1288, 2018.

\bibitem[Hochbaum(1996)]{hochbaum1996approximating}
Dorit~S. Hochbaum.
\newblock \emph{Approximating Covering and Packing Problems: Set Cover, Vertex Cover, Independent Set, and Related Problems}, pages 94--143.
\newblock PWS Publishing Co., USA, 1996.
\newblock ISBN 0534949681.

\bibitem[Jin et~al.(2020)Jin, Lavaei, Sojoudi, and Baldick]{jin2019boundary}
Ming Jin, Javad Lavaei, Somayeh Sojoudi, and Ross Baldick.
\newblock Boundary defense against cyber threat for power system state estimation.
\newblock \emph{IEEE Transactions on Information Forensics and Security}, 16:\penalty0 1752--1767, 2020.

\bibitem[Jin et~al.(2021)Jin, Chang, Zhu, and Sojoudi]{jin2021power}
Ming Jin, Heng Chang, Wenwu Zhu, and Somayeh Sojoudi.
\newblock Power up! {R}obust graph convolutional network via graph powering.
\newblock In \emph{AAAI Conference on Artificial Intelligence}, pages 8004--8012, 2021.

\bibitem[Katz et~al.(2017)Katz, Barrett, Dill, Julian, and Kochenderfer]{katz2017reluplex}
Guy Katz, Clark Barrett, David~L. Dill, Kyle Julian, and Mykel~J. Kochenderfer.
\newblock Reluplex: An efficient {SMT} solver for verifying deep neural networks.
\newblock In \emph{International Conference on Computer Aided Verification}, pages 97--117. Springer, 2017.

\bibitem[Kingma and Ba(2015)]{kingma2014adam}
Diederik~P. Kingma and Jimmy Ba.
\newblock Adam: A method for stochastic optimization.
\newblock In \emph{International Conference on Learning Representations (ICLR)}, 2015.

\bibitem[Kong et~al.(2017)Kong, Dong, Jia, Hill, Xu, and Zhang]{kong2017short}
Weicong Kong, Zhao~Yang Dong, Youwei Jia, David~J.\ Hill, Yan Xu, and Yuan Zhang.
\newblock Short-term residential load forecasting based on {LSTM} recurrent neural network.
\newblock \emph{IEEE Transactions on Smart Grid}, 10\penalty0 (1):\penalty0 841--851, 2017.

\bibitem[Krizhevsky and Hinton(2009)]{krizhevsky2009learning}
Alex Krizhevsky and Geoffrey Hinton.
\newblock Learning multiple layers of features from tiny images.
\newblock \emph{Technical report}, 2009.

\bibitem[LeCun(1998)]{lecun1998mnist}
Yann LeCun.
\newblock The {MNIST} database of handwritten digits, 1998.
\newblock URL \url{http://yann.lecun.com/exdb/mnist/}.

\bibitem[Lu and Kumar(2020)]{lu2019neural}
Jingyue Lu and M.~Pawan Kumar.
\newblock Neural network branching for neural network verification.
\newblock In \emph{International Conference on Learning Representations (ICLR)}, 2020.

\bibitem[Ma and Sojoudi(2020)]{ma2020strengthened}
Ziye Ma and Somayeh Sojoudi.
\newblock Strengthened {SDP} verification of neural network robustness via non-convex cuts.
\newblock \emph{arXiv preprint arXiv:2010.08603}, 2020.

\bibitem[M{\"u}ller et~al.(2022{\natexlab{a}})M{\"u}ller, Brix, Bak, Liu, and Johnson]{muller2022third}
Mark~Niklas M{\"u}ller, Christopher Brix, Stanley Bak, Changliu Liu, and Taylor~T. Johnson.
\newblock The third international verification of neural networks competition ({VNN}-{COMP} 2022): Summary and results.
\newblock \emph{arXiv preprint arXiv:2212.10376}, 2022{\natexlab{a}}.

\bibitem[M{\"u}ller et~al.(2022{\natexlab{b}})M{\"u}ller, Makarchuk, Singh, P{\"u}schel, and Vechev]{muller2021prima}
Mark~Niklas M{\"u}ller, Gleb Makarchuk, Gagandeep Singh, Markus P{\"u}schel, and Martin Vechev.
\newblock Prima: General and precise neural network certification via scalable convex hull approximations.
\newblock \emph{Proceedings of the ACM on Programming Languages}, 6\penalty0 (43):\penalty0 1--33, 2022{\natexlab{b}}.

\bibitem[Muralitharan et~al.(2018)Muralitharan, Sakthivel, and Vishnuvarthan]{muralitharan2018neural}
K.\ Muralitharan, Rathinasamy Sakthivel, and R.~Vishnuvarthan.
\newblock Neural network based optimization approach for energy demand prediction in smart grid.
\newblock \emph{Neurocomputing}, 273:\penalty0 199--208, 2018.

\bibitem[Pan et~al.(2019)Pan, Zhao, and Chen]{pan2019deepopf}
Xiang Pan, Tianyu Zhao, and Minghua Chen.
\newblock Deep{OPF}: Deep neural network for {DC} optimal power flow.
\newblock In \emph{IEEE International Conference on Communications, Control, and Computing Technologies for Smart Grids (SmartGridComm)}, pages 1--6, 2019.

\bibitem[Raghunathan et~al.(2018)Raghunathan, Steinhardt, and Liang]{raghunathan2018semidefinite}
Aditi Raghunathan, Jacob Steinhardt, and Percy~S. Liang.
\newblock Semidefinite relaxations for certifying robustness to adversarial examples.
\newblock In \emph{Advances in Neural Information Processing Systems (NeurIPS)}, pages 10877--10887, 2018.

\bibitem[Royo et~al.(2019)Royo, Calandra, Stipanovic, and Tomlin]{royo2019fast}
Vicenc~Rubies Royo, Roberto Calandra, Dusan~M.\ Stipanovic, and Claire Tomlin.
\newblock Fast neural network verification via shadow prices.
\newblock \emph{arXiv preprint arXiv:1902.07247}, 2019.

\bibitem[Su et~al.(2019)Su, Vargas, and Sakurai]{su2019one}
Jiawei Su, Danilo~Vasconcellos Vargas, and Kouichi Sakurai.
\newblock One pixel attack for fooling deep neural networks.
\newblock \emph{IEEE Transactions on Evolutionary Computation}, 23\penalty0 (5):\penalty0 828--841, 2019.

\bibitem[Szegedy et~al.(2014)Szegedy, Zaremba, Sutskever, Bruna, Erhan, Goodfellow, and Fergus]{szegedy2013intriguing}
Christian Szegedy, Wojciech Zaremba, Ilya Sutskever, Joan Bruna, Dumitru Erhan, Ian Goodfellow, and Rob Fergus.
\newblock Intriguing properties of neural networks.
\newblock In \emph{International Conference on Learning Representations (ICLR)}, 2014.

\bibitem[Tjandraatmadja et~al.(2020)Tjandraatmadja, Anderson, Huchette, Ma, Patel, and Vielma]{tjandraatmadja2020convex}
Christian Tjandraatmadja, Ross Anderson, Joey Huchette, Will Ma, Krunal~Kishor Patel, and Juan~Pablo Vielma.
\newblock The convex relaxation barrier, revisited: Tightened single-neuron relaxations for neural network verification.
\newblock \emph{Advances in Neural Information Processing Systems (NeurIPS)}, pages 21675--21686, 2020.

\bibitem[Tjeng et~al.(2019)Tjeng, Xiao, and Tedrake]{tjeng2019evaluating}
Vincent Tjeng, Kai~Y. Xiao, and Russ Tedrake.
\newblock Evaluating robustness of neural networks with mixed integer programming.
\newblock In \emph{International Conference on Learning Representations (ICLR)}, 2019.

\bibitem[Wang et~al.(2018)Wang, Pei, Whitehouse, Yang, and Jana]{wang2018efficient}
Shiqi Wang, Kexin Pei, Justin Whitehouse, Junfeng Yang, and Suman Jana.
\newblock Efficient formal safety analysis of neural networks.
\newblock \emph{Advances in Neural Information Processing Systems (NeurIPS)}, 2018.

\bibitem[Wang et~al.(2021)Wang, Zhang, Xu, Lin, Jana, Hsieh, and Kolter]{wang2021beta}
Shiqi Wang, Huan Zhang, Kaidi Xu, Xue Lin, Suman Jana, Cho-Jui Hsieh, and J~Zico Kolter.
\newblock Beta-{CROWN}: Efficient bound propagation with per-neuron split constraints for neural network robustness verification.
\newblock \emph{Advances in Neural Information Processing Systems}, 34:\penalty0 29909--29921, 2021.

\bibitem[Weng et~al.(2018)Weng, Zhang, Chen, Song, Hsieh, Daniel, Boning, and Dhillon]{weng2018towards}
Lily Weng, Huan Zhang, Hongge Chen, Zhao Song, Cho-Jui Hsieh, Luca Daniel, Duane Boning, and Inderjit Dhillon.
\newblock Towards fast computation of certified robustness for {R}e{LU} networks.
\newblock In Jennifer Dy and Andreas Krause, editors, \emph{International Conference on Machine Learning (ICML)}, volume~80 of \emph{Proceedings of Machine Learning Research}, pages 5276--5285, Stockholmsmässan, Stockholm Sweden, July 2018. PMLR.

\bibitem[Weng et~al.(2019)Weng, Chen, Nguyen, Squillante, Boopathy, Oseledets, and Daniel]{weng2019proven}
Lily Weng, Pin-Yu Chen, Lam Nguyen, Mark Squillante, Akhilan Boopathy, Ivan Oseledets, and Luca Daniel.
\newblock {PROVEN}: Verifying robustness of neural networks with a probabilistic approach.
\newblock In \emph{International Conference on Machine Learning (ICML)}, pages 6727--6736, 2019.

\bibitem[Wong and Kolter(2018)]{wong2018provable}
Eric Wong and Zico Kolter.
\newblock Provable defenses against adversarial examples via the convex outer adversarial polytope.
\newblock In \emph{International Conference on Machine Learning (ICML)}, pages 5286--5295, 2018.

\bibitem[Wu et~al.(2017)Wu, Iandola, Jin, and Keutzer]{wu2017squeezedet}
Bichen Wu, Forrest Iandola, Peter~H. Jin, and Kurt Keutzer.
\newblock {SqueezeDet}: Unified, small, low power fully convolutional neural networks for real-time object detection for autonomous driving.
\newblock In \emph{IEEE Conference on Computer Vision and Pattern Recognition (CVPR) Workshops}, pages 129--137, 2017.

\bibitem[Wu et~al.(2020)Wu, Ozdemir, Zeljic, Julian, Irfan, Gopinath, Fouladi, Katz, Pasareanu, and Barrett]{wu2020parallelization}
Haoze Wu, Alex Ozdemir, Aleksandar Zeljic, Kyle Julian, Ahmed Irfan, Divya Gopinath, Sadjad Fouladi, Guy Katz, Corina Pasareanu, and Clark Barrett.
\newblock Parallelization techniques for verifying neural networks.
\newblock In \emph{International Conference on Formal Methods In Computer-Aided Design}, 2020.

\bibitem[Xiang and Johnson(2018)]{xiang2018reachability}
Weiming Xiang and Taylor~T. Johnson.
\newblock Reachability analysis and safety verification for neural network control systems.
\newblock \emph{arXiv preprint arXiv:1805.09944}, 2018.

\bibitem[Xu et~al.(2021)Xu, Zhang, Wang, Wang, Jana, Lin, and Hsieh]{xu2020fast}
Kaidi Xu, Huan Zhang, Shiqi Wang, Yihan Wang, Suman Jana, Xue Lin, and Cho-Jui Hsieh.
\newblock Fast and complete: Enabling complete neural network verification with rapid and massively parallel incomplete verifiers.
\newblock In \emph{International Conference on Learning Representations (ICLR)}, 2021.

\bibitem[Zhang et~al.(2018)Zhang, Weng, Chen, Hsieh, and Daniel]{zhang2018efficient}
Huan Zhang, Lily Weng, Pin-Yu Chen, Cho-Jui Hsieh, and Luca Daniel.
\newblock Efficient neural network robustness certification with general activation functions.
\newblock In \emph{Advances in Neural Information Processing Systems (NeurIPS)}, pages 4939--4948, 2018.

\bibitem[Zhang(2020)]{zhang2020tightness}
Richard~Y. Zhang.
\newblock On the tightness of semidefinite relaxations for certifying robustness to adversarial examples.
\newblock \emph{Advances in Neural Information Processing Systems (NeurIPS)}, 33, 2020.

\end{thebibliography}


\end{document}